\def\blfootnote{\gdef\@thefnmark{}\@footnotetext}
\theoremstyle{definition}
\newtheorem*{definition*}{Definition}
\def\eqref#1{equation~\ref{#1}}
\def\1{\bm{1}}
\DeclareMathAlphabet{\mathsfit}{\encodingdefault}{\sfdefault}{m}{sl}
\SetMathAlphabet{\mathsfit}{bold}{\encodingdefault}{\sfdefault}{bx}{n}
\newcommand{\R}{\mathbb{R}}
\newcommand{\softmax}{\mathrm{softmax}}
\definecolor{amaranth}{rgb}{0.9, 0.17, 0.31}
\definecolor{green}{HTML}{549D54}
\renewcommand\section{\@startsection{section}{1}{\z@}%
                                    {-2.5ex \@plus -1ex \@minus -.2ex} %
                                    {1.5ex \@plus.2ex} %
                                    {\normalfont\Large\bfseries}}
\renewcommand\subsection{\@startsection{subsection}{2}{\z@}%
                                    {-1.5ex \@plus -1ex \@minus -.2ex} %
                                    {1ex \@plus .2ex} %
                                    {\normalfont\large\bfseries}}
\title{Positional Attention: Expressivity and Learnability of Algorithmic Computation}
\author{
    Artur Back de Luca$^{1*}$\qquad
    George Giapitzakis$^{1*}$\qquad
    Shenghao Yang$^{1*}$\\
    Petar Veli\v{c}ković$^{2}$\qquad
    Kimon Fountoulakis$^{1}$\\
    \vspace{-1mm}\\
    \normalsize{$^1$University of Waterloo \quad $^2$Google DeepMind}\\
    \vspace{-3mm}\\
    \normalsize{\texttt{\{\href{mailto:abackdel@uwaterloo.ca}{abackdel}, \href{mailto:ggiapitz@uwaterloo.ca}{ggiapitz}, \href{mailto:shenghao.yang@uwaterloo.ca}{shenghao.yang}, \href{mailto:kimon.fountoulakis@uwaterloo.ca}{kimon.fountoulakis}\}@uwaterloo.ca}}\\
    \normalsize{\texttt{\href{mailto:petarv@google.com}{petarv@google.com}}}
}
\date{}
\theoremstyle{plain}
\newtheorem{theorem}{Theorem}[section]
\newtheorem{lemma}{Lemma}[section]
\newtheorem{fact}{Fact}[section]
\theoremstyle{definition}
\newtheorem{definition}{Definition}[section]
\newtheorem{assumption}{Assumption}[section]
\theoremstyle{plain}
\newtheorem{remark}{Remark}[section]
\newcommand{\RR}{\ensuremath{\mathbb{R}}}
\newcommand{\hardmax}{\mathrm{hardmax}}
\newcommand\normOne[1]{\left\Vert#1\right\Vert_{1}}
\newcommand\norminf[1]{\left\Vert#1\right\Vert_{\infty}}
\newcommand{\BigPhi}{\scalebox{1.25}[1]{$\Phi$}}
\begin{document}
\maketitle

\def\thefootnote{*}
\footnotetext{Equal contribution.}
\def\thefootnote{\arabic{footnote}}

\vspace{-5mm}

\begin{abstract}
    There is a growing interest in the ability of neural networks to execute algorithmic tasks (e.g., arithmetic, summary statistics, and sorting).
    The goal of this work is to better understand the role of attention in Transformers for algorithmic execution.
    Its importance for algorithmic execution has been studied theoretically and empirically using parallel computational models. Notably, many parallel algorithms communicate between processors solely using positional information. Inspired by this observation, we investigate how Transformers can execute algorithms using positional attention, where attention weights depend exclusively on positional encodings. We prove that Transformers with positional attention (positional Transformers) maintain the same expressivity of parallel computational models, incurring a logarithmic depth cost relative to the input length. We analyze their in-distribution learnability and explore how parameter norms in positional attention affect sample complexity. Our results show that positional Transformers introduce a learning trade-off: while they exhibit better theoretical dependence on parameter norms, certain tasks may require more layers, which can, in turn, increase sample complexity. Finally, we empirically explore the out-of-distribution performance of positional Transformers and find that they perform well in tasks where their underlying algorithmic solution relies on positional information.
\end{abstract}

\doparttoc %
\faketableofcontents %

\parttoc %

\section{Introduction}

Neural network models, particularly Transformers \citep{vaswani2017attention}, have achieved remarkable success across various domains, especially in their ability to learn and execute algorithms \citep{velivckovic2021neural}. Notable applications 
range from basic arithmetic to solving problems on abstract data structures such as graphs and lists \citep{lee2024teaching, tang2020towards, tay2020sparse, yan2020neural, Veličković2020Neural, velivckovic2022clrs, dudzik2022graph, ibarz2022generalist, cappart2023combinatorial, diao2023relational, bevilacqua2023neural, rodionov2023neural, minder2023salsa, georgiev2023beyond, engelmayer2024parallel, georgiev2024deep, li2024openbook}. Due to this success, there is a growing interest in understanding neural networks as computational models that can solve complex algorithmic and computational tasks.

In this context, \citet{perez2021attention, wei2022statistically} show that Transformers are Turing Complete, and \citet{giannou23a, pmlr-v235-back-de-luca24a, yang2023looped} illustrate how Transformers can effectively encode instructions in their parameters to solve linear algebra and graphs problems. Additionally, it has been shown that Transformers can perform computational tasks using far fewer layers than the number of sequential steps \citep{liu2022transformers}, indicating a connection between Transformers and parallel algorithms. 
Building on this, \citet{pmlr-v235-sanford24a} demonstrates that Transformers can simulate the Massively Parallel Computation (MPC) model \citep{8555148}, which is based on the MapReduce framework for large-scale data processing \citep{10.1145/1327452.1327492}.

While parallel computational models like MPC do not restrict how communication is established, we observe that in many parallel algorithms, the communication between processors is independent of the processed data.
Instead, this communication relies solely on processor identification (see \Cref{app:illustration_algo} for examples). 
Motivated by this observation, we study a Transformer variant that reflects this data-agnostic communication property. Specifically, we investigate \emph{positional attention}, where the attention weights are determined exclusively by fixed positional encodings.\footnote{While positional encodings remain constant across layers, their attention weights can vary -- unlike Graph Neural Networks (GNNs), which take a fixed connectivity structure as input. Furthermore, the tasks in this paper lack a predefined graph structure, necessitating an artificial graph for application to GNNs. For experimental comparisons, see \Cref{appdx:experiments}.}

\textbf{Our contributions.} We examine Transformers with positional attention (positional Transformers) from
both a theoretical and empirical perspective in the context of algorithmic computation. Our contributions are given below:

\textit{Expressivity.} We prove that positional Transformers can simulate a single round of the Massively Parallel Computation (MPC) model with $O(\log n)$ layers, where $n$ is the input length. The expressivity result emerges from a connection between Transformers and parallel computational models. In particular, we utilize a proxy computational model, which only allows communication over a static network, bridging the simulation results between positional Transformers and MPC and, consequently, by \citet{pmlr-v235-sanford24a}, standard Transformers. Due to the restrictive nature of communication in positional Transformers and static networks, this equivalence with MPC is established with logarithmic cost. 

\textit{In-distribution generalization.} We provide a norm-based generalization bound for positional Transformers, which, combined with our expressivity result, implies that they can learn algorithmic tasks to any desired accuracy via empirical risk minimization. Our analysis reveals a trade-off between the number of layers and parameter norms: positional Transformers exhibit better dependence on parameter norms but may require more layers for certain tasks, potentially leading to higher overall theoretical complexity. This result is established using the Provably Approximately Correct (PAC) framework, where the complexity bounds of the hypothesis class are expressed as a function of the network parameter norms \cite{edelman2022inductive}. 
    
\textit{Out-of-distribution (OOD) generalization.} We empirically analyze the OOD performance of positional and standard Transformers. We observe that positional Transformers exhibit better OOD generalization for tasks where the underlying algorithmic solution relies solely on positional information. However, for tasks that do not meet this criterion, such as induction heads tasks, both models perform poorly. We \textit{hypothesize} that the inherently dynamic nature of communication in such tasks affects the loss landscape, making it challenging to discover good parameters for the positional Transformer during training.

\begin{figure*}[t]
    \centering
    \includegraphics[width=\linewidth]{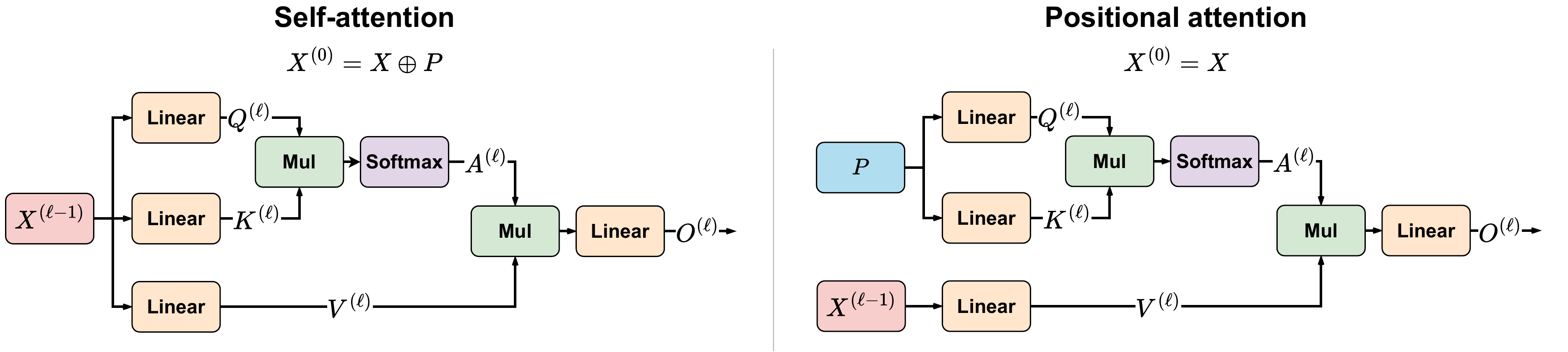}
    \caption{Diagram comparing the operations of self-attention in Transformers with positional attention. The figure illustrates a single attention head, but in multi-head attention, multiple sets of queries, keys, and values are processed in parallel and then combined. 
    In Transformers, the model's input, $X^{(0)}$, is a combination of input values $X$ and positional encodings $P$. In positional attention, however, these components are processed separately. At layer $\ell$, the query ($Q^{(\ell)}$) and key ($K^{(\ell)}$) are derived solely from the positional encodings $P$, where $P$ remains fixed across layers. These are multiplied (denoted by $\textsf{Mul}$) and passed through a softmax function to produce the attention matrix $A^{(\ell)}$. As in self-attention, the value $V^{(\ell)}$ in positional attention is computed from the previous layer's input, $X^{(\ell-1)}$. The attention matrix $A^{(\ell)}$ and the value $V^{(\ell)}$ are then multiplied to form the weighted representation, which is linearly transformed into the output $O^{(\ell)}$. This output is passed to a Multilayer Perceptron (MLP) for further processing, as detailed in \Cref{sec:arch}.}
    \label{fig:diagram}
\end{figure*}

\section{Related work}

\textbf{Empirical:} Several studies focus on the experimental aspects of training neural networks to execute algorithms.
Notable examples include algorithms operating on common data structures, such as lists (e.g., sorting, binary search) and graphs (e.g., shortest paths, connected components) \cite{yan2020neural, tang2020towards, velivckovic2022clrs, ibarz2022generalist, bevilacqua2023neural, engelmayer2023parallel, georgiev2023beyond, minder2023salsa, diao2023relational}. Additional research on combinatorial optimization tasks using graph neural networks and Transformers offers further insights into algorithmic learning in structured domains~\citep{vinyals2015pointer, prates2019learning, joshi2019efficient, bai2020learning, selsam2019guiding, karalias2020erdos, gasse2019exact, cappart2023combinatorial, velickovic2023everything}.
Recent studies have also explored the broader algorithmic capabilities of Transformer-based models, shedding light on their potential for general algorithmic reasoning~\citep{deletang2023neural, butoi2025training, barbero2025round}. A parallel stream of research has focused on performing arithmetic operations (e.g. addition, multiplication) \cite{kaiser2016neural, klindt2023controlling, shen2023positional, lee2024teaching, ruoss2023randomized}. Specifically, \citet{rodionov2023neural} leverages problem-specific information within a self-supervised training framework, whereas the other studies use intermediate labels to guide the learning process. For example, \citet{engelmayer2024parallel} shows that using intermediate labels derived from parallel algorithms leads to improved performance. While some research, such as \citet{kazemnejad2023the}, investigates the impact of different positional encodings across several tasks, to the best of our knowledge, no existing work examines the use of positional attention within the context of neural algorithmic reasoning. The closest formulation to the positional Transformer architecture presented in \Cref{sec:arch} is the position-based attention of \citet{schmidt2023bridging}, but it is used in the context of neural machine translation.

\textbf{Theoretical:} From a theoretical perspective, the most closely related work to ours is \citet{pmlr-v235-sanford24a}, which presents simulation results for standard Transformers and the Massively Parallel Computation (MPC) model. 
Our work also presents simulation results demonstrating that positional Transformers can simulate MPC. The simulation proof involves introducing a proxy computational model with an alternative communication paradigm that can be shown to simulate MPC. Subsequently, we establish that the positional Transformer architecture can simulate this proxy model and, by extension, MPC itself. %
Other relevant studies demonstrate the expressive power of neural networks through simulation results. For example, \citet{siegelman95comp} establishes the Turing completeness of recurrent neural networks (RNNs), while \citet{hertrich2023provably} presents specific RNN constructions that solve the shortest paths problem and provide approximate solutions to the knapsack problem. Additionally, other simulation results focused on Transformers have shown their Turing completeness \citep{perez2021attention, wei2022statistically} as well as demonstrated constructive solutions to linear algebra and graph-related problems \citep{giannou23a, pmlr-v235-back-de-luca24a, yang2023looped}. 

\section{Preliminaries and notation}\label{sec:prelim}

Let $\mathbb{N} = \{1,2,\dots\}$ denote natural numbers and $[n] = \{1,2,\dots, n\}$ for $n \in \mathbb{N}$. For a set $S$, let $\mathcal{P}(S)$ be its power set (i.e. the set containing all subsets of $S$). For a matrix $Z$, let $Z_{i,:}$ and $Z_{:,i}$ be the $i$-th row and column of $Z$, respectively. $\|\cdot\|_2$ denotes the spectral norm for matrices, and $\|\cdot\|_{p,q}$ denotes the $(p,q)$ matrix norm where the $p$-norm is over columns and $q$-norm over rows (e.g. $\|Z\|_{2,1} = \sum_{i}\|Z_{:,i}\|_2$). For vectors, $\|\cdot\|_p$ denotes the $\ell_p$ norm. For a sequence of vectors $X \in \mathbb{R}^{n \times d}$ we may use $X[i] = X_{i,:} \in \mathbb{R}^d$ for the $i$-th vector in the sequence.

\begin{definition}[Risk]
Let $\mathcal{D}$ be a distribution over $\mathcal{X}\times \mathbb{R}$, and let $\ell : \mathbb{R}\times\mathbb{R} \rightarrow \mathbb{R}$ be a loss function. For a given class of functions $\mathcal{F} = \{ f : \mathcal{X} \rightarrow \mathbb{R} \}$ and $f\in\mathcal{F}$, the {\em risk} (i.e. the generalization error)  is defined as $\mathsf{risk}(f;\mathcal{D}) = \mathbb{E}_{(x,y)\sim\mathcal{D}} \left[\ell(f(x),y)\right]$, and the {\em empirical risk} is $\widehat{\mathsf{risk}}(f;(x^{(i)},y^{(i)})_{i=1}^m) = \frac{1}{m}\sum_{i=1}^m \ell(f(x^{(i)}), y^{(i)})$.
\end{definition}

\section{The Positional Transformer architecture}
\label{sec:arch}
We now define the \emph{positional Transformer}, an adaptation of the Transformer model \citep{vaswani2017attention}.
This adaptation is motivated by the fact that communication between machines is independent of the data processed in many real-world parallel algorithms. In particular, we decouple the input values from the computation of attention weights.
For an input $X\in \RR^{n\times d_X}$, we define the $\ell^\text{th}$ layer as follows:
\begin{equation}
\label{eq:arch}
\text{F}^{(\ell)}(X) = \BigPhi^{(\ell)}\!\left(\!\left(\bigoplus_{h=1}^HA^{(\ell, h)}XW^{(\ell, h)}_V\right)W^{(\ell)}_O \!\oplus X\right).
\end{equation}

The input is processed by $H$ attention heads, each associated with an attention weight matrix $A^{(\ell, h)} \in (0,1)^{n \times n}$ and a value matrix $W_V^{(\ell, h)} \in \R^{d_X \times d_V}$. Here, $\ell$ denotes the layer index and $h$ the head index, allowing a specific attention head within a layer to be identified as $(\ell, h)$. The outputs of these attention heads are concatenated and then transformed by an output matrix $W_O^\ell \in \RR^{H \cdot d_V \times d_O}$. This result is concatenated with a residual connection of the input $X$ and then passed through a multilayer perceptron (MLP), represented as $\Phi^{(\ell)}: \RR^{d_O + d_X} \rightarrow \RR^{d_{\text{out}}}$.

Traditionally, in standard Transformers, the attention matrix $A^{(\ell, h)}$ is derived using self-attention. 
\begin{equation}
\label{eq:selfattn}
A^{(\ell, h)}(X) = \softmax\left(\!\!\left(XW_Q^{(\ell, h)}\right)\!\cdot\!\left(XW_K^{(\ell, h)}\right)^{\!\!\top}\right).
\end{equation}
Here, the attention weights are derived by the input matrix $X$, using query and key matrices $W^{(\ell, h)}_Q, W^{(\ell, h)}_K \in \RR^{d_X\times d_m}$, where $d_m$ is the embedding dimension.

In contrast, we utilize \emph{positional attention}, where attention weights are learned solely using positional encodings $P$, which are constant across all layers.
This distinction is also illustrated in \Cref{fig:diagram}.
\begin{equation}
\label{eq:attn}
    A^{(\ell, h)} = \softmax\left(\left(PW^{(\ell, h)}_Q\right)\cdot \left(PW^{(\ell, h)}_K\right)^\top\right).
\end{equation}
We utilize node positional encodings defined by a matrix $P\in\RR^{n\times d_P}$ and whose attention weights are computed similarly to traditional attention, using query and key matrices but with input dimension $d_P$. The positional encodings are fixed across layers, indicated by the absence of the $\ell$ index.

Theoretically, we evaluate whether these changes reduce the expressive power of positional Transformers. We prove that positional Transformers, like standard Transformers, can simulate the Massively Parallel Computation (MPC) model. This indicates that there is no loss in expressivity, though with an added logarithmic increase in depth relative to the input length, as discussed further in the next section.

\section{Expressivity of Positional Transformers}
\label{sec:expressivity}
A popular approach to illustrate the expressivity of an architecture is by establishing an equivalence with a computational model \cite{pmlr-v235-sanford24a, perez2021attention, loukas2019graph}. For positional Transformers, we will utilize the Massively Parallel Computation (MPC) model, which was first used by \citet{pmlr-v235-sanford24a} to analyze standard Transformers. We begin by stating our main result.

\begin{restatable}{theorem}{expressivity}
\label{thm:expressivity}
Consider an instance of MPC $\mathsf{T}$ with $R$ rounds, $N$ machines with local memory $s$. Let $\mathcal{M}$ be a model following the architecture in \Cref{eq:arch} with $n=N+1$ nodes, $2R\lceil\log N\rceil$ layers and $2s$ attention heads. Then, for any instance $\mathsf{T}$ with Borel measurable local functions, there exists a configuration of $\mathcal{M}$ that approximates $\mathsf{T}$ to any desired degree of accuracy.
\end{restatable}

\emph{Proof overview:} We establish this result using a two-step procedure.
This procedure involves introducing a proxy computational model which, in contrast to MPC, uses a static network for communication. We call this model Parallel Computation with Oracle Communication (PCOC)\footnote{We note that PCOC is only used to establish our expressivity result. It is not meant as a model to compete with established parallel models, such as MPC.}, and its formal definition is presented in \Cref{app:background}.
In the first step of the proof, we demonstrate that the computations over a static network model such as PCOC can effectively simulate the dynamic communication of MPC. Specifically, we show that $R$ rounds of an MPC protocol can be simulated using $2R\lceil \log n \rceil$ rounds of PCOC, where $n$ represents the number of processors in MPC. The additional computation rounds and communication coordinated by Bene\v{s} networks \cite{benevs1965mathematical} are sufficient to simulate MPC.

In the second step of the proof, we show that PCOC can be, in turn, simulated by positional Transformers. Specifically, the attention mechanism is shown to simulate communication between nodes, while the computation stage leverages universal approximation results for MLPs \cite{cybenko1989approximation, hornik1989multi}. By demonstrating our architecture’s capability to approximate both communication and local computation, we establish that it can simulate any MPC instance of $R$ rounds using $2R\log n$ layers.

By Theorem 3.4 in \citet{pmlr-v235-sanford24a}, which demonstrates the simulation of standard Transformers by MPC, we can derive the following remark regarding the simulation of standard Transformers by positional Transformers. %

\begin{restatable}{remark}{equivalence}
\label{rem:expressivity}
Consider a standard Transformer $\mathcal{M}^\prime$ with $N$ nodes, $L$ layers, and $d_V\!\cdot\!H$ sublinear in $N$. Then, there exists a positional Transformer $\mathcal{M}$ with $O(N^2)$ nodes, $O(L\log N)$ layers, and $d_V\!\cdot\!H$ sublinear in $N$ that can simulate the computation of $\mathcal{M}^\prime$.
\end{restatable}

Note that such a simulation result introduces an additional quadratic dependency on the number of nodes in positional Transformers. These dependencies are merely a byproduct of the simulation strategy employed in \citet{pmlr-v235-sanford24a}, and other, more efficient, simulation strategies may exist.
Additionally, this quadratic cost arises only when simulating the exact same sequence of operations of the standard Transformer. In practice, the model may converge to parameter configurations that do not correspond to the standard Transformer. When solving a given problem, positional Transformers may adopt a completely different configuration to achieve the solution without incurring this additional cost, as demonstrated by our experiments in \Cref{sec:experiments}.

\section{Learnability of Positional Transformers}
\label{sec:learnability}
In this section, we present a norm-based generalization bound for the positional Transformer architecture, discuss its implications, and compare it with the previously derived bound for the Transformer architecture that employs self-attention~\citep{edelman2022inductive}. In particular, by decoupling the attention weights from the input $X$ and requiring them to depend exclusively on the positional encodings $P$, we remove the exponential dependence (w.r.t. the depth $L$) on the norms of key and query weight matrices.

We consider multi-layer networks obtained by composing individual layers from \Cref{eq:arch}, denoted as
\begin{equation}\label{eq:multi_layer_arch}
  F^{1:L}(X) = F^{(L)} \circ \cdots \circ F^{(2)} \circ F^{(1)}(X).
\end{equation}
The resulting network $F^{1:L}$ has $L$ layers and each layer has $H$ attention heads. For simplicity we assume that the MLP $\Phi^{(\ell)}$ consists of two layers:
\begin{equation}\label{eq:phi}
  \Phi^{(\ell)}(Z) = \sigma(ZW_1^{(\ell)})W_2^{(\ell)}, ~ \forall \ell \in [L],
\end{equation}
where $\sigma$ is an $L_\sigma$-Lipschitz activation with $\sigma(0)=0$. Using MLPs with more layers would lead to a worse dependence on the spectral norm of weight matrices, but does not affect the results in this section.\footnote{More layers in the MLP would simply increase the constant $c$ in \Cref{thm:generalization} to $c' \ge c$. On the other hand, two-layer MLPs are already sufficient to invoke the universal approximation properties required by the expressivity result of \Cref{thm:expressivity}.} Following prior work, our generalization bound is derived for scalar-valued functions. We extract a scalar output from our architecture as $w^\top X^{\mathsf{out}}[n]$ with trainable weights $w \in \mathbb{R}^{d_\mathrm{out}}$, where $X^{\mathsf{out}} = F^{1:L}(X) \in \mathbb{R}^{n \times d_\mathrm{out}}$. The index $i$ at which we apply the linear function $x \mapsto w^\top x$ can be arbitrary, and here we simply set to the last index. This setting captures algorithmic computations whose output is a scalar. For comparison purposes, it aligns with the setup considered previously by \citet{edelman2022inductive} for the analysis of self-attention Transformers. In general, for computational tasks that require vector or matrix output, one can easily extend our result to parallel architectures using a union bound on the failure probability $\delta$ and incur an additional logarithmic cost with respect to the dimension of the output.

Given $B_{2,1},B_2 \ge 1$, the class of $L$-layer $H$-head bounded-norm scalar-output positional Transformer networks is:
\begin{align*}
  &\mathcal{F} = 
  \bigg\{ 
  w^\top \left(F^{1:L}(X)[n]\right): \|w\|_2 \le B_2, ~ \mbox{and} ~ \forall \ell,h:\\  &\|W{^{(\ell,h)}_K}^\top W{^{(\ell,h)}_Q}\|_{2,1}, \|W_V^{(\ell,h)}\|_{2,1}, \|W_1^{(\ell)}\|_{2,1} \le B_{2,1},\\
  &\|W_2^{(\ell)}\|_{2,1} \le B_{2,1},
  \|W_V^{(\ell,h)}\|_2, \|W_1^{(\ell)}\|_2, \|W_2^{(\ell)}\|_2 \le B_2
  \bigg\},
\end{align*}
with $F^{1:L}$ from \Cref{eq:multi_layer_arch}, $A^{(\ell,h)}$ from \Cref{eq:attn}, and $\Phi^{(\ell)}$ from \Cref{eq:phi}. In order to avoid overloading the result with unnecessary notation that do not add value to the discussion in any meaningful way, we assume that the input $X \in \mathbb{R}^{n \times d_X}$ and positional encodings $P \in \mathbb{R}^{n \times d_P}$ are normalized row-wise so that $\|X_{i,:}\|_2 = \|P_{i,:}\|_2 = 1$ for all $i\in[n]$. We denote $d = \max\{d_X, d_P, d_V, d_\mathrm{out}\}$, and therefore $d(H+1)$ is the largest possible size of any weight matrix along any axis.

Let $\mathcal{D}$ be a distribution on $\mathbb{R}^{n\times d_X} \times \mathbb{R}$ and $(X^{(1)},y^{(1)}),$ $\ldots,$ $(X^{(m)},y^{(m)}) \in \mathbb{R}^{n\times d_X} \times \mathbb{R}$ be a sequence of $m$ samples. 

\begin{theorem}[Generalization bound]\label{thm:generalization}
For any $\delta>0$, with probability at least $1-\delta$, simultaneously for all $f \in \mathcal{F}$, it holds that for some $c>0$,
\begin{align*}
 &\left|\mathsf{risk}(f;\mathcal{D}) - \widehat{\mathsf{risk}}(f;(X^{(i)},y^{(i)})_{i=1}^m)\right|\\
 &\le \tilde{O} \bigg((HL_\sigma B_2)^{cL}B_{2,1}\sqrt{\frac{\log(Hdmn)}{m}} + \sqrt{\frac{\log(1/\delta)}{m}}\bigg).
\end{align*}
\end{theorem}

In the context of learning algorithmic computations, \Cref{thm:generalization} means that the generalization gap $|\mathsf{risk}(f)-\widehat{\mathsf{risk}}(f)|$ goes to 0 as the number of samples tends to infinity. On the other hand, by \Cref{thm:expressivity}, for any computational task that is computable in an MPC model with Borel measurable local functions, there is $f \in \mathcal{F}$ (where the parameters $L, H, B_2, B_{2,1}$ that specify $\mathcal{F}$ might depend on $n$ but are independent of $m$) such that the empirical risk can be arbitrarily close to 0 for all samples of size $m$ and for all $m\ge0$, i.e. $\widehat{\mathsf{risk}}(f;(X^{(i)},y^{(i)})_{i=1}^m) \approx 0,$ $\forall m$. Consequently, in this case, \Cref{thm:generalization} implies that as the number of samples $m$ tends to infinity, we have $\mathsf{risk}(f;\mathcal{D}) \rightarrow 0$. This means that positional Transformer learns algorithmic computations through empirical risk minimization.

Compared with the norm-based risk bounds of self-attention Transformer networks in \citet{edelman2022inductive}, the bound in \Cref{thm:generalization} does not depend on the spectral norm of key and query weight matrices. Intuitively, this is due to restricting attention weights to be determined exclusively by positional encodings, as opposed to allowing them to be determined compositionally by the output from the previous layer. If we replace positional attention with self-attention in the definition of $\mathcal{F}$, then the bound in \Cref{thm:generalization} would incur an additional factor $B_{QK}^{O(L)}$ where $\|W{^{(\ell,h)}_{K}}^\top W{^{(\ell,h)}_{Q}}\|_2 \le B_{QK}$, $\forall \ell,h$. This shows a potential advantage of positional attention when all other model configurations are the same.

Note that achieving a low empirical risk with self-attention or positional attention can require very different model configurations, which may depend on the computational task. For example, in the simple task of computing the minimum of an $n$-number array, a standard Transformer can solve it approximately using just one attention layer with an error controlled by how well softmax approximates hardmax. In contrast, a positional Transformer can solve it exactly but requires $\log n$ layers with two attention heads per layer (e.g., via binary tree reduction). Here, the positional Transformer’s worst-case sample complexity depends on $n$, while the standard Transformer does not. However, the standard Transformer can have a much worse dependence on the magnitude of the input numbers due to the softmax-hardmax approximation. Therefore, one architecture might be better depending on the magnitude of $n$ and the input numbers. For more complex tasks, setting an ``optimal'' model configuration (e.g., choosing $L,H,d_V$) is difficult.

\section{Experiments}
\label{sec:experiments}
In this section, we evaluate the performance of positional and standard Transformers across various algorithmic tasks and regimes.
We first outline the tasks considered in this work and then describe the experimental setup in detail. Next, we present results for the in-distribution regime across different settings, followed by an analysis of the out-of-distribution performance.
Finally, we introduce a more complex task that combines textual and numerical data.
For additional analyses, including comparisons with other models and ablation studies, refer to \Cref{appdx:experiments}.

\begin{figure*}[h!]
    \centering
    \includegraphics[width=\linewidth]{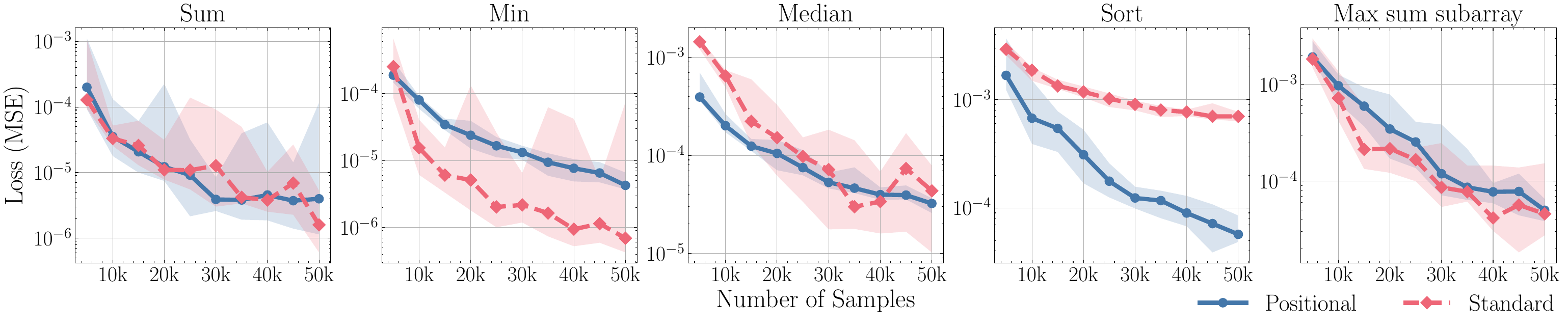}
    \caption{In-distribution loss across all five tasks for standard Transformers (red) and positional Transformers (blue) as a function of the number of training samples (indicated on the x-axis).}
    \label{fig:fix_sample}
\end{figure*}

\textbf{Tasks:} We consider the following 
algorithmic tasks:

\begin{enumerate}[leftmargin=.4cm]
    \item \emph{Cumulative sum}: Given $x \in \mathbb{R}^n$, output $y \in \mathbb{R}^n $ where each element $ y_i $ is the sum of the first $ i $ elements of $ x $, i.e. $y_i = \sum_{j=1}^{i} x_j$.
    \item \emph{Cumulative min}: Given $x \in \mathbb{R}^n$, output $y \in \mathbb{R}^n$ where each element $ y_i $ is the minimum value among the first $i$ elements of $x$, i.e. $y_i = \min \{ x_j \mid 1 \leq j \leq i \}$.
    \item \emph{Cumulative median}: Given $x \in \mathbb{R}^n$, output $y \in \mathbb{R}^n$ where each element $y_i$ is the median of the first $ i $ elements of $x$, i.e. $y_i = \text{median} \{x_j \mid 1 \leq j \leq i\}$.
    \item \emph{Sorting}: Given $x \in \mathbb{R}^n$, output $\text{sort}(x)$, a vector containing the entries of $x$ sorted in ascending order.
    \item \emph{Cumulative maximum sum subarray}: given $x \in \mathbb{R}^n$, output $y \in \mathbb{R}^n$ where each element $y_i$ is the sum of elements of a maximum sum subarray of the first $i$ elements of $x$, i.e. $y_i = \max_{1 \leq j \leq k \leq i} \left( \sum_{l=j}^{k} x_l \right)$. 
\end{enumerate}

The tasks selected were chosen to represent varying levels of complexity and illustrate the strengths and limitations of each architecture. We adopt cumulative versions of algorithms when feasible for several reasons: They naturally provide $n$ to $n$ training settings and are more challenging than non-cumulative versions. %

To test the non-numeric capabilities of positional Transformers, we further present two additional tasks that utilize textual data. The first is the $k$-hop inductive heads of \citet{pmlr-v235-sanford24a},
which represents a higher-order version of the standard inductive heads task by recursively executing the completion of a bigram to determine the subsequent bigram. The second task employs the same computational tasks described earlier in the list, but incorporates additional textual data that serve as categories, which the model must learn to use for conditional reasoning. 

Although both tasks involve pattern matching, the nature of pattern matching differs. The $k$-hop induction heads task requires dynamic pattern matching, where each step depends on previous matches, whereas the mixed-type input task involves static pattern matching, which better aligns with our architecture. Despite our expressivity result guaranteeing that this task can be solved using a static network, we \textit{hypothesize} that the inherently dynamic nature of communication in this task is reflected in the loss landscape. Consequently, discovering good parameters for the positional Transformer is difficult. This is evident in our experiments, where we run multiple trials, all of which result in poor out-of-distribution performance.

\textbf{Experimental setting:} All tasks employ the same model structure of \Cref{eq:arch}, augmented with linear encoding and decoding layers. We compare the standard Transformer, which utilizes the attention mechanism in \Cref{eq:selfattn}, and the positional Transformer, which employs the attention defined in \Cref{eq:attn}. Standard Transformers incorporate positional encodings concatenated with the input values, unlike positional Transformers, where positional information is exclusively supplied through the matrix $P$. In \Cref{appdx:experiments}, we examine other configurations for standard Transformers, including relative positional encodings such as Rotary Positional Embedding (RoPE) \cite{su2024roformer} and find no significant differences in performance. Both variants share the same number of layers and dimensional configurations, with any specific differences explicitly noted. For details on the particular layer configurations and training details, refer to \Cref{appdx:experiments}. The training data is sampled in the range $[-2, 2]$. To ensure diversity in the data, for each input sample, we first select lower and upper bounds $\gamma_l$ and $\gamma_u$ uniformly in $[-2, 2]$, and then for each of the $n$ elements of the input sample, we select its value uniformly in  $[\gamma_l, \gamma_u]$. 

\begin{figure}[h!]
    \centering
    \includegraphics[width=\linewidth]{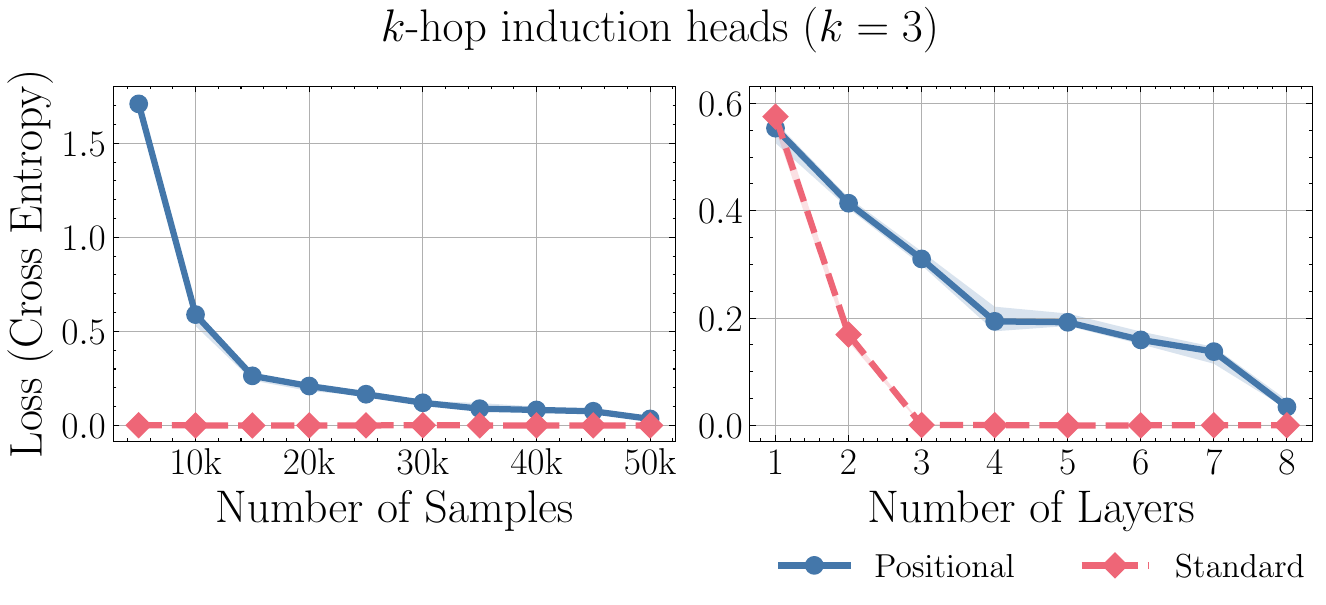}
    \caption{In-distribution loss for standard Transformers (red) and positional Transformers (blue) in the $k$-hop induction heads task of \citet{pmlr-v235-sanford24a}, plotted as a function of the number of training samples (left, with eight layers) and the number of layers (right, with $50,\!000$ training samples).}
    \label{fig:indheads_id}
\end{figure}

\begin{figure*}[t!]
    \centering
    \includegraphics[width=\linewidth]{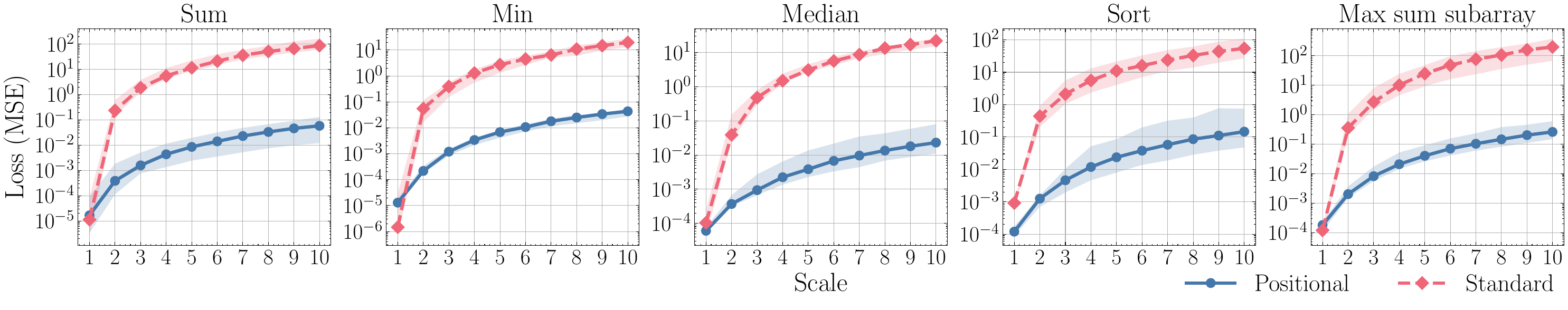}
    \caption{In-distribution and out-of-distribution loss for standard Transformers (red) and positional Transformers (blue) across all five tasks. The x-axis shows the OOD scale factor, with a scale factor of one indicating in-distribution performance.}
    \label{fig:fix_scale}
\end{figure*}

\subsection{In-distribution regime}
\label{sec:id-regime}
We evaluate our architecture in a fixed input length regime, with a detailed analysis of in-distribution generalization as a function of the number of samples and expressive power. The plots presented show the median over ten runs, with shaded areas representing the 10\textsuperscript{th} and 90\textsuperscript{th} percentile.

\textbf{Sample Size vs. Generalization:} In this setting, we demonstrate the learnability of positional Transformers, established in \Cref{thm:generalization}, as a function of the number of samples. To this end, we fix the input length $n = 8$ and examine in-distribution loss as a function of the number of training samples, ranging from $5,\!000$ to $50,\!000$.
\Cref{fig:fix_sample} shows that for all tasks, the in-distribution loss of both architectures consistently decreases with increasing number of samples.

\textbf{Expressive power vs. Generalization:} 
In this experiment, we empirically validate the theoretical results of \Cref{rem:expressivity}, analyzing the in-distribution capabilities of standard and positional Transformers as a function of their expressive power, measured by the number of layers. We evaluate their performance on a generalized version of the induction heads task \cite{olsson2022context}, which reflects the relational capabilities of self-attention.
This task, named \emph{$k$-hop induction heads}, applies a successive $k$ number of bigram completions to determine the next bigram in the sequence. We adapt the setting of \citet{pmlr-v235-sanford24a}, restricting the number of hops to at most three and the sequence length to 30. For a complete description of the task and additional experimental details, we refer readers to \citet{pmlr-v235-sanford24a} and \Cref{appdx:experiments}, respectively. We chose this task because it is designed to highlight the efficiency of self-attention in terms of the number of required layers. The in-distribution results in \Cref{fig:indheads_id} show that standard Transformers exhibit near-zero in-distribution loss with as few as three layers. In contrast, we find that positional Transformers generally perform worse. As established in \Cref{sec:experiments}, this results from the expressivity of the positional Transformers since achieving a level of performance comparable to that of the standard Transformers requires an increasing number of layers, aligning with our theoretical results.

\begin{figure}[h!]
    \centering
    \includegraphics[width=0.6\linewidth]{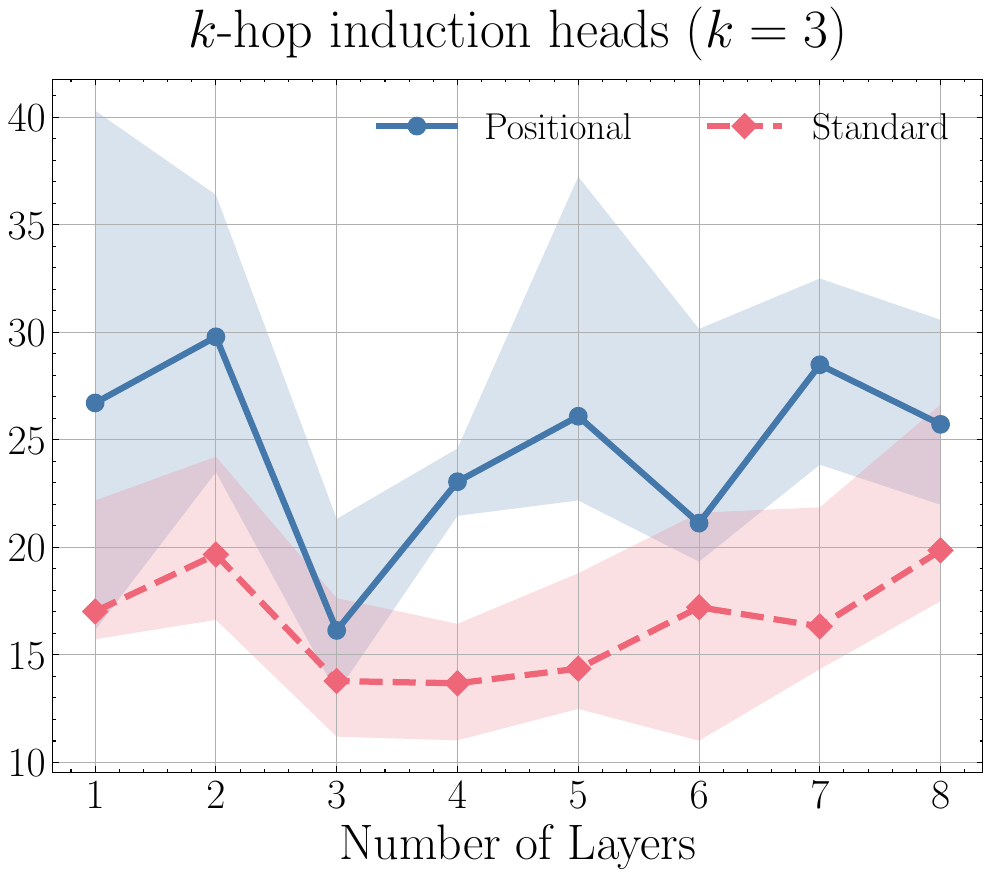}
    \caption{Out-of-distribution loss for standard Transformers (red) and positional Transformers (blue) in the $k$-hop induction heads task of \citet{pmlr-v235-sanford24a}, plotted as a function of the number of layers (with $50,\!000$ training samples).}
    \label{fig:indheads_ood}
\end{figure}

\begin{figure*}[t!]
    \centering
    \includegraphics[width=0.9\linewidth]{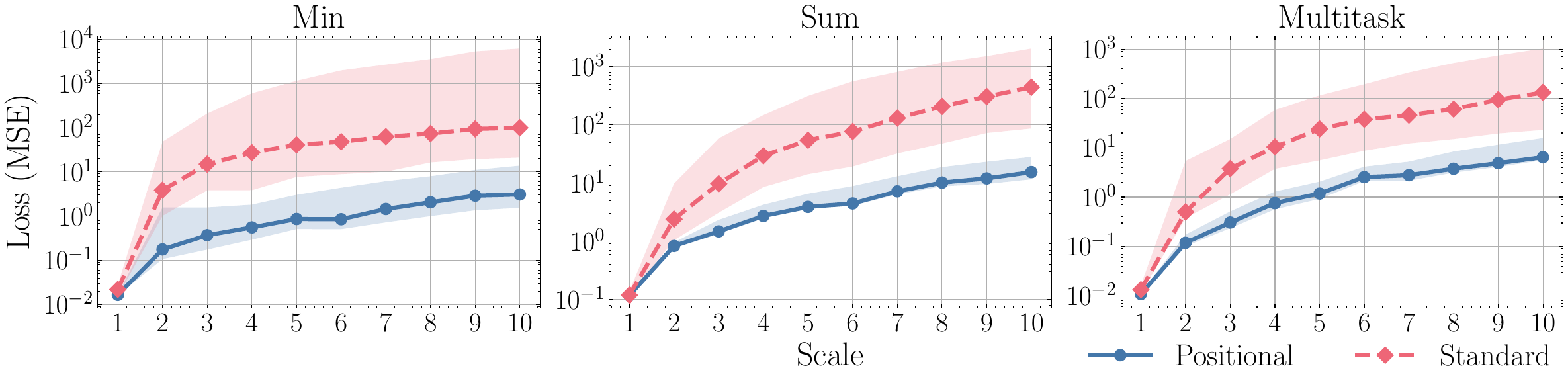}
    \caption{In-distribution and out-of-distribution losses for standard (red) and positional Transformers (blue) across mixed-type input task variations. The x-axis shows the OOD scale factor, where one indicates in-distribution performance.}
    \label{fig:nlp_noise_p_and_t}
\end{figure*}

\subsection{Out-of-distribution regime}
\label{sec:ood-regime}

We further evaluate the capabilities of both architectures in generalizing out-of-distribution (OOD) for the tasks previously outlined. To generate OOD data, we employ a similar sampling strategy as in training but extend the value range to $[-2c,2c]$, where $c>1$ is the OOD scale factor. Additionally, during the test sampling process, we apply a rejection step to ensure that either $\gamma_l < -2$ or $\gamma_u > 2$, while maintaining $-2c \le \gamma_l \le \gamma_u \le 2c$. This ensures that a test sample will not lie in the training domain with high probability.\footnote{This sampling strategy does not guarantee that every test sample falls outside the training distribution. However, as shown in \Cref{app:ood_bound}, the probability of a test sample being within the training distribution is at most $O(1/nc^2)$. In practice, most test instances lie outside this domain. For further details, see \Cref{app:ood_bound}.} This implies that our test results reflect the ``true" out-of-distribution performance of both architectures.

The results in \Cref{fig:fix_scale} show that positional Transformers achieve significantly more stable performance across increasing out-of-distribution ranges than standard Transformers across computational tasks. 
We \textit{hypothesize} that such differences in performance can be attributed to algorithmic alignment \cite{Xu2020What}, a concept that suggests a particular architecture demonstrates better generalization capabilities if its underlying computational structure aligns more closely with the target task. Specifically, the structure of these tasks aligns well with the core motivation for positional attention, i.e., enabling communication through positional information alone. For example, the minimum task is known to have a solution where the underlying computational graph is a binary tree and independent of the input values, see \Cref{app:illustration_algo}. In \Cref{app:failure}, we discuss potential reasons why standard Transformers fail to match the performance of positional Transformers and present additional experiments that illustrate the differing behaviors of self-attention and positional attention.

For the $k$-hop induction heads task, OOD data is generated by creating sequences sampled exclusively from a portion of the alphabet not used during training. \Cref{fig:indheads_ood} illustrates the OOD performance of both standard and positional Transformers. Note that the positional Transformer  performs poorly. We hypothesize that this is because the $k$-hop induction heads task deviates from our core motivation, as its inherent task requirement suggests a need for dynamic communication that changes with every input, see also our comments in the description of the tasks above.

\subsection{Additional experiments with mixed-type inputs}
\label{sec:nlp}

In this section, we further investigate the OOD performance when the input data is a mixture of tokens and numbers. This task involves conditional reasoning and simple pattern matching on top of computing an aggregation function. We begin by illustrating a training and test sample, followed by high-level overview of the task. For the definition of the task and the details about the setting, see \Cref{app:nlp}. 

Training sample: {\small
\texttt{Input = ['Cat2', $3.45$, \textcolor{blue}{'Cat+7'}, 'Cat5', $1.23$, 'Cat7', $0.65$, 'Cat8', $2.23$, 'Cat11', \textcolor{blue}{'Cat-8'}, $4.10$, 'Cat13', $1.10$, 'Cat14', $0.10$, 'Cat20', $2.75$, 'Find min of Cat5, Cat8, Cat11 and Cat20']},\quad \texttt{Output = $1.23$}
}

Test sample: {\small
\texttt{Input = ['Cat23', $7.28$, 'Cat24', $33.5$, 'Cat28', $9.17$, 'Cat30', $55.90$, 'Cat31', \textcolor{blue}{'Cat*24'}, $23.70$, 'Cat33', $12.47$, 'Cat34', $8.45$, \textcolor{blue}{'Cat\_40'}, 'Cat40', $1.50$, 'Find min of Cat28, Cat31, Cat33 and Cat40']},\quad \texttt{Output = $1.50$}
}

The input prompt presents the model with a number of textual categories and associated values and asks for the result of an aggregation operation on a subset of the categories. Noise is injected by adding irrelevant categories (with no associated value) in the prompt (colored blue in the examples). Note that the category identifiers, the range of values, and the type of noise all change between training and test data. Specifically, the train category identifiers are integers between $1$ and $20$, whereas the test category identifiers are integers between $21$ and $40$. The models are trained on category values in $[0,5]$ and tested on category values in $[0, 5c]$, for $c=1,2,\dots,10$. We experiment with the following aggregation operations: minimum (min), sum, and a multi-tasking setting that combines minimum (min) and maximum (max). %
This task involves distinguishing relevant from irrelevant categories and pattern matching across differing train and test distributions. The task also demands basic conditional reasoning since each query involves only a subset of all given categories, and algorithmic computation to derive the correct outputs. The textual input is tokenized and processed through an embedding layer, while the numeric input passes through a linear layer. The results, shown in \Cref{fig:nlp_noise_p_and_t}, indicate that the positional Transformer outperforms the Transformer.\footnote{In \Cref{app:nlp} we compare the performance of positional Transformer against a fine-tuned version of GPT2 \cite{radford2019language}. Positional Transformer has better OOD performance. Note that this task does not demonstrate/suggest that positional Transformer will outperform other Transformer-based architectures (like LLMs) on general-purpose reasoning on text data.}

\section{Limitations and future work}

Our work provides a theoretical and experimental investigation of learning algorithms with positional attention. However, further research is needed in some directions that we believe are important. Theoretically, more precise OOD analyses are necessary to capture finer differences between architectures, as existing OOD generalization bounds are not tight enough (see \Cref{app:ood} for an extended discussion). Empirically, our approach relies on fixed positional encodings, limiting its applicability to longer inputs. Developing positional encodings that support arbitrary input lengths would enable us to explore the length generalization capabilities of positional attention.

\section*{Acknowledgements}
The authors would like to thank Federico Barbero (Google DeepMind) and Csaba Szepesvári (Google DeepMind) for their valuable comments and suggestions on this work.

K.~Fountoulakis would like to acknowledge the support of the Natural Sciences and Engineering Research Council of Canada (NSERC). Cette recherche a \'et\'e financ\'ee par le Conseil de recherches en sciences naturelles et en g\'enie du Canada (CRSNG), [RGPIN-2019-04067, DGECR-2019-00147].

S.~Yang would like to acknowledge the support of the Natural Sciences and Engineering Research Council of
Canada (NSERC).

G.~Giapitzakis would like to acknowledge the support of the Onassis Foundation - Scholarship ID: F ZU 020-1/2024-2025.

\bibliography{references}
\bibliographystyle{plainnat}

\newpage
\appendix

\part{Appendix} %
\parttoc %

\section{Background and definitions
}
\label{app:background}
\subsection{Massively Parallel Computation (MPC)}
\label{app:mpc}
The MPC model is a computational model of the MapReduce framework \citep{10.1145/1327452.1327492}, widely used for computations over massive datasets.
It defines a parallel computing model that jointly executes a function across multiple machines, each constrained by limited memory capacity. The MPC model is capable of representing various parallel algorithms \citep{im2023massively} and is more powerful than other established parallel models, such as parallel random-access machine (PRAM) \citep{8555148}.

For completeness, we provide a simplified version of the definition of the MPC protocol by \cite{8555148}, which makes the connection to our proxy computational model more apparent\footnote{The original definition includes assumptions about the relation between the input size, the memory size, and the number of machines which 
we omit to better highlight the contrast between MPC and PCOC.}.

\begin{definition}[MPC protocol, Def. I.1 \citep{8555148}, simplified] Let $s$ be a parameter. There are $p \geq 1$ machines (processors), each with local memory of size $s$. The input is distributed on the local memory of some of the machines. The computation proceeds in rounds. In each round, each machine computes the data in its local memory and sends messages to other machines at the end of the round. The total size of messages sent or received by a machine in a round is bounded by $s$. In the next round, each machine only holds the received messages in its local memory. At the end of the computation, the output is in the memory of some machines. 
\end{definition}

\subsection{Parallel Computation with Oracle Communication (PCOC)}
\label{app:pcoc}
In this section, we present the definition of PCOC used in \Cref{sec:expressivity} and \Cref{app:expressivity} to establish the expressivity of positional Transformers.
We emphasize that PCOC is merely a proxy model to establish our expressivity results. It is not intended to serve as a competing parallel computational model, such as MPC.

The PCOC model consists of two steps at each round. The first step is communication, where machines send and receive data from other machines. The communication pattern can change at every round. The second step is computation, where all machines perform some local computation on data stored in their local memory.

\textbf{Oracle communication.} For each length $n$ and input data, we assume the existence of an oracle that provides the destination for each machine and message at each round. The oracle executes a communication pattern agnostic to the data being processed. This contrasts with other parallel models, such as the Massively Parallel Computation (MPC) model \citep{8555148}, where it is assumed that the processed data can influence communication patterns.
At first glance, introducing such an oracle might not seem particularly useful, mainly because it is fixed for each input data, where the data can be real-valued, which implies potentially unaccountably many communication oracles for a particular task. However, its importance lies in the fact that for a variety of parallel algorithms, the communication pattern at each round depends only on the length of the input and is independent of other input data. For example, in algorithms that compute basic statistics such as the sum or minimum or tasks such as sorting a list of numbers, the communication between machines is determined not by their values but by their positions. This means that if the values at each machine change, the communication pattern between machines for a given task and input length remains the same. 
In \Cref{app:illustration_algo}, we illustrate communication patterns for some of these tasks, which are also addressed in the experiments in \Cref{sec:experiments}.

\begin{definition}[PCOC model] \label{def:pcoc}
The PCOC model is described by a set of $n$ machines labeled from $1$ to $n$, the number of rounds $R$, an integer $s$ and an oracle $\mathsf{RCV}: [R] \times [n] \to [n] \times (\mathcal{P}([s])\setminus \{\emptyset\})$ (which is fixed for a given $n$ and input data) satisfying the following:
\begin{enumerate}
\item Each machine $i$ has a local memory $\mathtt{MEM}_i \in \mathbb{T}^s$ of size $s$, where $\mathbb{T}$ is some abstract data-type. The contents of the memory are indexed from $1$ to $s$ and we use the notation $\mathtt{MEM}_i[j]$ to refer to the element at position $j$ on the memory of the $i$-th machine. 
\item Each machine performs some local computation on the data it receives and overrides the results to its memory. A single machine can perform different local computations on different rounds and different machines (generally) perform different local computations. 
\item When $(r, i)$, where $r\in [R]$ and $i \in [n]$, is passed to the oracle it returns a subset $M$ of the set $[n] \times (\mathcal{P}([s])\setminus \{\emptyset\})$.
The oracle essentially returns a (possibly empty) set of machines that machine $i$ has to receive some data from in round $r$ along with the exact positions on the memories of those machines to retrieve.
\item The total size of data sent and received by each machine is at most $s$. Size here is measured in terms of the number of ``variables" of data-type $\mathbb{T}$.
\end{enumerate}
The protocol is executed in $R$ rounds. At the start, the input is distributed across the $n$ machines. At the beginning of round $r$, each machine $i$ simultaneously queries the oracle with input $(r, i)$ and receives data from the machines it returns. The machines then simultaneously perform their local computations on the received data.
\end{definition}

\begin{framed}

    \textbf{Input:} $\mathtt{Data} = (\mathtt{Data}_1,\dots, \mathtt{Data}_n)$ distributed across the memories of $n$ machines, labeled in $[n]$. An oracle $\mathsf{RCV}_{n,\mathtt{Data}}: [R] \times [n] \to [n] \times (\mathcal{P}([s])\setminus \{\emptyset\})$.

\begin{enumerate}[leftmargin=0.33cm]
    \item[1] \textbf{For} each round $r=1,\dots, R$ \textbf{then} 
    \item[2] \hspace{0.5cm} Each machine $i$ simultaneously queries $\mathsf{RCV}_{n,\mathtt{Data}}$ with $(r, i)$ as input and receives \item[] \hspace{0.5cm} data from the machines and memory positions returned by the oracle.
    \item[3] \hspace{0.5cm} The machines simultaneously perform local computations on the received data \item[] \hspace{0.5cm} and write the results in their local memories.
\end{enumerate}
\end{framed}

\subsection{Illustration of parallel algorithms}
\label{app:illustration_algo}
In this section, we further expand on the discussion of \Cref{sec:expressivity}, stating that communication in several parallel algorithms depends only on the identification of the machines rather than their current values. To illustrate this, we provide some concrete examples.

\begin{figure}[h]
    \centering
    \includegraphics[width=0.8\linewidth]{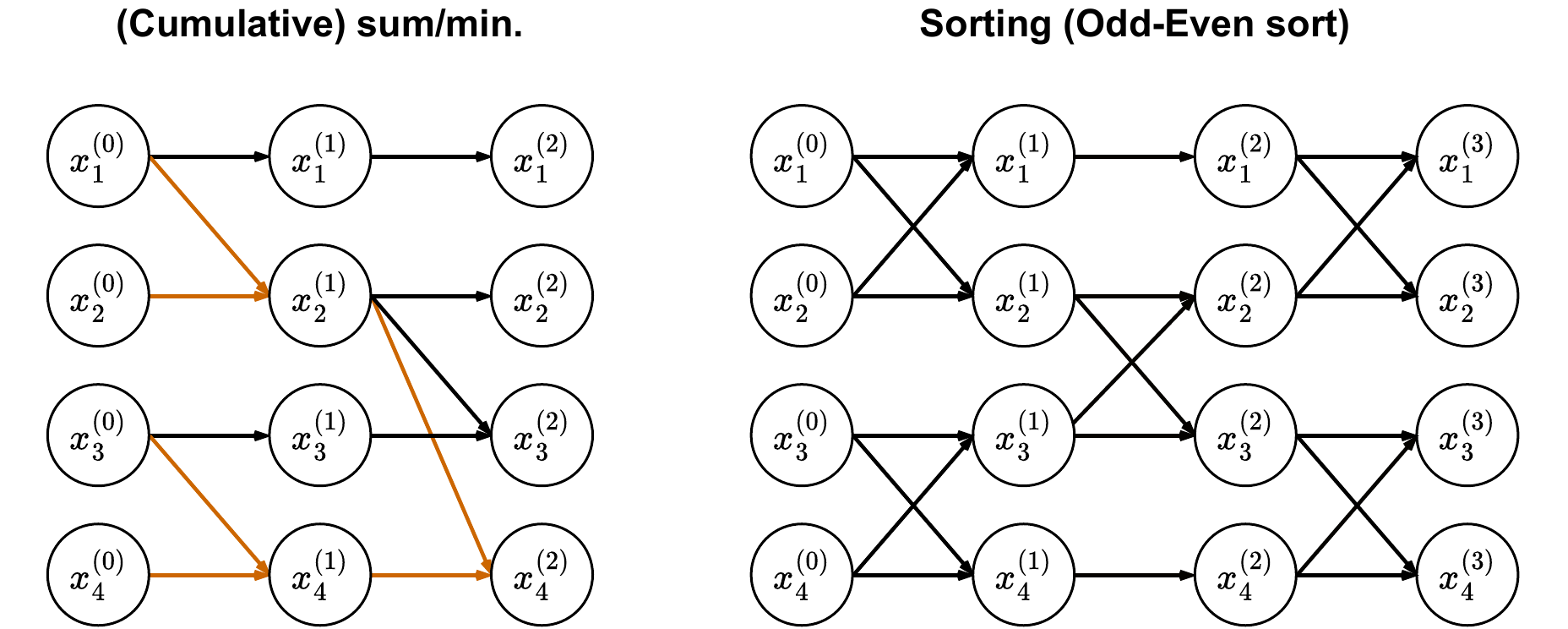}
    \caption{Illustration of the computational graph for algorithms such as (cumulative) sum/minimum (on the left) and sorting (on the right) for $n=4$. Circles indicate the machines, each indexed by the subscript. The superscript indicates each round of the algorithm. At round $0$, machine $i$ holds the $i$-th element of the input. 
    The cumulative version of the sum/minimum algorithm includes all arrows (black and orange), while the non-cumulative version is represented only by the orange arrows.}
    \label{fig:alg_examples}
\end{figure}

These tasks are examples of those presented in \Cref{sec:experiments}. Note that these illustrations do not indicate the computational graphs derived by our architecture, as there are multiple ways to achieve the same goal, and they do not necessarily involve the neat graph structures shown in \Cref{fig:alg_examples}. For a more in-depth analysis of the results obtained by our architecture, we refer the reader to \Cref{appdx:experiments}.

In these computational graphs, we represent each machine by a circle, distinguished by a subscript (from 1 to 4, since $n=4$). Furthermore, we use superscripts to denote the rounds of the algorithm, with superscript 0 representing the initial stage where no computation or communication is performed. Note that no specific values are provided in these examples. This indicates that the correct results can be achieved by following the computational graph for any set of values. In the subsequent rounds, each machine receives information from other machines (including itself) and performs some computation.
For each algorithm in \Cref{fig:alg_examples}, we will briefly describe the computations involved, noting that this is not the main focus of the paper and serves only as motivating examples.

For the computation of the minimum and the summing function, each machine applies the minimum (or sum) operator to all incoming values from the previous round. By iteratively applying this operator over multiple rounds, machine 4 ultimately obtains the global minimum value (or total sum) across all $n$ entries, while the other machines hold cumulative minima (or sums) up to their positions. For the non-cumulative versions of these algorithms, the local computations are the same as the cumulative versions, and the communication paths are denoted in orange and form a binary tree pattern.

For sorting, the graph on the right of \Cref{fig:alg_examples} represents the odd-even transposition sort algorithm \citep{habermann1972parallel}. This algorithm works by alternating communication between adjacent machines, starting with the odd-indexed pairs (machines with indices $1$ and $2$, $3$ and $4$, etc.), then switching to even-indexed pairs (machines with indices $2$ and $3$, in this example). In stages where two machines communicate, the machine with the lower index picks the minimum value among the two, while the machine with the higher index picks the maximum. The procedure runs for a total of $n-1$ rounds, after which the values are sorted in ascending order.

\subsection{Additional definitions}
\label{app:back_definition}
In this section, we provide important definitions that are used throughout our theoretical results.
For $n\in \mathbb{N} := \{1,\dots\}$, we use 
$[n]$ to refer to the set $\{1,\dots,n\}$. For two nonnegative integers $m, n \in \mathbb{N} \cup \{0\}$ we use 
$m \veebar n$ to refer to the exclusive or of $m$ and $n$. Next, we define the notion of Hamming neighbors:
\begin{definition}[j\textsuperscript{th} Hamming neighbors]
    Two nonnegative integers $x, y\in [n] \cup \{0\}$ are \textit{j\textsuperscript{th}-Hamming neighbors}, denoted $x \sim_j y$, if:
    \begin{equation*}
    x \sim_j y \iff (\text{bin($x$)}_i = \text{bin($y$)}_i \; \forall i \neq j) \text{ and } (\text{bin($x$)}_j \neq \text{bin($y$)}_j),
    \end{equation*}
    where \text{bin($x$)} is the binary representation of $x$ with $\lceil \log_2 n \rceil$ bits ordered from most to least significant bit.
\end{definition}

Finally, we give the definition of a Bene\v{s} network \cite{benevs1965mathematical}:
\begin{definition}[Bene\v{s} network]
    Let $n=2^k$, $k \in \mathbb{N}$. A Bene\v{s} network is a layered directed graph consisting of $L=2k$ layers with each layer having $n$ nodes and $2n$ directed edges between two sequential layers. Nodes in each layer are labeled with integers from $0$ to $n-1$. We identify each node in the network by a tuple $(\ell, i)$ where $\ell\in [2k]$ is the layer, and $i\in \{0,1,\dots,n-1\}$ is the node label within layer $\ell$. Two nodes $(\ell, i)$ and $(\ell', j)$ are connected by an edge \textit{iff}
    
    $$\ell'=\ell+1 \text{ and } \left(i=j \text{ or }i\sim_{k-\lvert k-\ell\rvert} j\right).$$
    \end{definition}

An example of a Bene\v{s} network with $n=2^3$ nodes in each layer is shown in \Cref{fig:benes}.

\begin{figure}[htb!]
    \centering
    \includegraphics[width=0.6\textwidth]{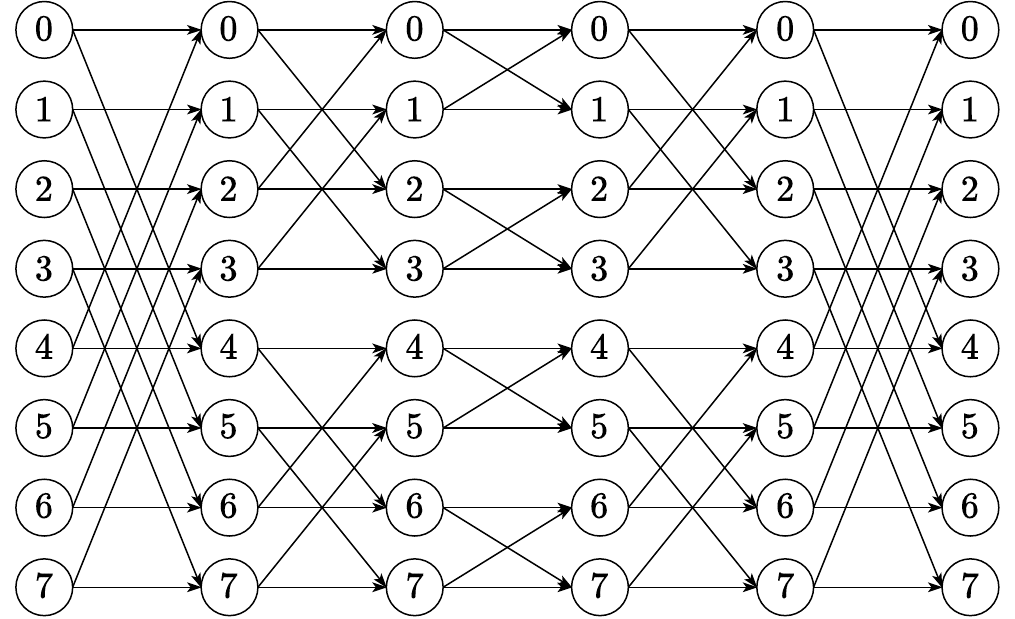}
    \caption{A Bene\v{s} network with $n=2^3$ nodes in each layer.}
    \label{fig:benes}
\end{figure}

\section{Expressivity results}
\label{app:expressivity}

\subsection{Proof of \Cref{thm:expressivity} (Positional Transformers can simulate MPC)}
\label{app:sim_pt_mpc}

In this section, we prove our main result that establishes the simulation of Massively Parallel Computation (MPC) model by Positional Transformers.
Towards this goal, we present two lemmas that bridge positional Transformers and MPC. 
More specifically, we utilize a proxy model that has a static communication network. We call this proxy model Parallel Computation with Oracle Communication (PCOC), established in \Cref{def:pcoc}. \textit{We note that PCOC is only used to establish our expressivity result. It is not meant as a model to compete with established parallel models, such as MPC}. 
We begin by outlining its main features, followed by the two lemmas that lead to our final result.

The PCOC model operates on $n$ machines labeled from $0$ to $n-1$ over $R$ rounds, with each machine having a local memory of size $s$. The model consists of two fundamental steps per round: computation and communication. During computation, machines perform local computations on data stored in their memory.
In the communication step, machines exchange data through an oracle that provides destination information for each machine and message, executing a communication pattern that is agnostic to the processed data and depends only on the input length. Critically, the oracle's communication pattern remains fixed for a given input length and task, meaning that if the values at each machine change, the communication pattern between machines remains the same, a property that is observed in many parallel algorithms such as computing basic statistics or sorting, where communication is determined by positions rather than values. 

This contrasts with other parallel models, such as MPC, where the communication pattern is more flexible and directly influenced by the data being processed. In the MPC model, each machine has a limited memory size of $s$, and the input data is distributed among the machines. The computation proceeds in rounds: during each round, a machine processes the data in its memory, exchanges messages with other machines (up to a total size of $s$), and retains only the messages it received for the next round. After all rounds, the final output is stored in the memory of one or more machines. Unlike PCOC, the communication in MPC is not predetermined by the input length or task alone; instead, it can dynamically depend on the data being processed, provided the total communication per machine per round does not exceed the memory limit $s$.

Even with limited communication capabilities, PCOC can perform the same functions as MPC, incurring only a logarithmic cost in rounds relative to the number of machines, $N$.
The proof of this lemma is presented in \Cref{app:proof_expr}.

\begin{restatable}{lemma}{MPCPCOC}
\label{lem:pcoc_mpc}
Consider an instance of MPC $\mathsf{T}$ with $R$ rounds, $N$ machines with local memory $s$. 
For any instance $\mathsf{T}$, there exists an instance of PCOC $\mathsf{P}$ with $2R\lceil\log N\rceil$ rounds, $N$ machines with local memory $2s$ that simulates $\mathsf{T}$.
\end{restatable}

An analogy to illustrate the distinction between the two approaches can be drawn from a mailing system. In the MPC model, each processor acts as an individual delivering letters directly to their recipients. Communication is flexible and determined by the specific needs of the task. In contrast, PCOC resembles a structured mailing system with predefined routes and sorting stations. Each individual sends their letters to the nearest sorting station, where the letters are organized into bins and passed through successive stages of sorting and delivery. After several rounds, all letters reach their destinations, following a fixed communication pattern regardless of the specific contents.

Having established the PCOC model, we now show that the positional Transformer model in \Cref{eq:arch} can simulate it.
More specifically, our results show that a $R$-round PCOC protocol can be simulated by a positional Transformer with $R$ layers. 

\begin{restatable}{lemma}{PCOC}
\label{lem:pcoc_sim}
Consider a PCOC instance $\mathsf{P}$ with $R$ rounds, $N$ machines with local memory $s$, and data type $\mathbb{T}=\mathbb{R}$. Let $\mathcal{M}$ be a model following the architecture in \Cref{eq:arch} with $n=N+1$ nodes, $R$ layers and $s$ attention heads. Then, for any instance $\mathsf{P}$ with Borel measurable local functions, there exists a configuration of $\mathcal{M}$ that approximates $\mathsf{P}$ to any desired degree of accuracy.
\end{restatable}

The proof of the above lemma can be found in \Cref{app:simulation}, where we show that each round of computation and communication performed by PCOC can be approximated by one layer of positional Transformers. Having established the two necessary lemmas \Cref{lem:pcoc_mpc} and \Cref{lem:pcoc_sim}, we are ready to re-state our main result.

\expressivity*

The proof of this result follows directly from the combination of the two preceding results: MPC can be simulated by PCOC, which can be simulated by positional Transformers. Notably, the structure of MPC enforces that local memory can only grow sublinearly relative to the input size. This constraint prevents trivial simulation configurations in which all data is sent to a single node, which would undermine the purpose of parallelism.

\subsubsection{Discussion of \Cref{rem:expressivity}}

In \citet{pmlr-v235-sanford24a}, the authors present not only a simulation of MPC using standard Transformers but also the reverse, i.e., a simulation of Transformers by MPC. From the latter, we derive the following remark that establishes an equivalence between standard and positional Transformers.

\equivalence*

An important observation from \citet{pmlr-v235-sanford24a} is that the embedding dimension, denoted as $d_V$, also governs other intermediate dimensions, such as the embedding dimension $d_m$ of the matrices $W_K$ and $W_Q$. However, in positional Transformers, these quantities are decoupled. Specifically, the embedding dimension $d_m$ is directly influenced by the positional encodings and scales linearly with the number of nodes.

It is worth noting that the observed quadratic dependency on the number of nodes in positional Transformers is primarily a consequence of the specific simulation strategy employed in \citet{pmlr-v235-sanford24a}. These dependencies should not be taken too literally, as the results do not provide a sharp characterization of the two computational models, an aspect acknowledged by \citet{pmlr-v235-sanford24a}. Moreover, alternative and more efficient simulation strategies may exist.
Additionally, this quadratic cost arises only when simulating the same sequence of operations across architectures. In practice, the model may converge to parameter configurations that do not correspond to interpretable algorithms.

\subsection{Proof of \Cref{lem:pcoc_mpc} (PCOC can simulate MPC)}
\label{app:proof_expr}

The goal of this section is to show that MPC can be efficiently simulated via PCOC (that is, in a data-agnostic way). For completeness, we re-state the lemma below:

\MPCPCOC*

We break the proof in two steps: first, we show that MPC routing can be simulated through a fixed communication pattern given by the connectivity of a Bene\v{s} network. Subsequently, we show how this routing can be accomplished in a data-agnostic way, consistent with the PCOC definition. 

Consider an MPC instance with $p$ machines, each with memory size $s$. Assume messages are encoded in the following way: machine $i$ holds a list $\texttt{SEND}_i$ containing tuples of the form $(i, \mathsf{b}, \mathrm{pos}, j)$ where $\mathsf{b}\in \mathbb{A}$, $\mathrm{pos} \in \mathbb{N}$ and $j \in [p]$, where $\mathsf{b}$ is the $\mathrm{pos}$-th character in the message machine $i$ sends to machine $j$. Here, 
$\mathbb{A}$ denotes the alphabet used to encode the messages (for example, $\mathbb{A}$ can be $\{0,1\}$ if we are representing messages by their bits). As a concrete example, if machine $\textcolor{blue}{0}$ sends $010110$ to machine $\textcolor{purple}{1}$ then 
$$\texttt{SEND}_{\textcolor{blue}{0}} = \{(\textcolor{blue}{0}, 0, 1, \textcolor{purple}{1}), (\textcolor{blue}{0}, 1, 2, \textcolor{purple}{1}), (\textcolor{blue}{0}, 0, 3, \textcolor{purple}{1}), (\textcolor{blue}{0}, 1, 4, \textcolor{purple}{1}), (\textcolor{blue}{0}, 1, 5, \textcolor{purple}{1}), (\textcolor{blue}{0}, 0, 6, \textcolor{purple}{1})\}$$
By the assumptions of the MPC model, $|\texttt{SEND}_i| \leq s$ for all $i \in \{0,1,\dots,p-1\}$. Similarly, denote by $\texttt{RCVD}_i$ the list of source-character-position-destination tuples that machine $i$ receives. That is:
$$\texttt{RCVD}_i = \left\{(k, \mathsf{b}, \mathrm{pos}, j) \in \bigcup\nolimits_{i'} \texttt{SEND}_{i'} \,\big|\, j = i \right\} \quad \text{for all } i \in \{0,1,\dots,p-1\}$$
Again, by the assumptions of the MPC model, we have $|\texttt{RCVD}_i|\leq s$ for all $i\in [p]$.
We further impose the following simplifying assumptions:
\begin{enumerate}
    \item The number of machines $p$ is a power of two, i.e. $p=2^k$ for some $k\in \mathbb{N}$, and
    \item For each pair of machines $i$, $j$, the number of tuples sent from machine $i$ to machine $j$ is either $0$ or at least $2^{k-1}$, where $k$ is as above.
\end{enumerate}
Note that the above assumptions do not limit the MPC model since we can always add a couple dummy machines to reach a power of two and pad shorter messages with a special token (potentially having to scale $s$). 
The second assumption can be lifted altogether at the expense of a more complicated proof. We chose to work with both assumptions in order to highlight the essence of the proposed routing algorithm.
As we mentioned previously, to highlight the routing strategy we split the analysis into two parts: first we show how we can simulate one round of MPC using a Bene\v{s} network routing without the need for extra memory, assuming access to destination information. 
Building on that, we then show how to extend the Bene\v{s} routing protocol to the case where message routing is independent of destinations, at the expense of doubling the memory of each machine. The latter shows that PCOC can simulate one round of MPC with the same number of machines and double the memory.
The result is then trivially extended to $R$ rounds of MPC by iterating the routing algorithm $R$ times.

\paragraph{Data-dependent Bene\v{s} routing}
We begin by providing a data-dependent routing algorithm that simulates one round of MPC using a Bene\v{s} network.
Under the two assumptions made above, the routing algorithm executes in $2\log_2(p) = 2k$ steps and is as follows:

\begin{itemize}
    \item Initially, the memory $M_i$ of each machine holds the list $\texttt{SEND}_i$ (i.e. the messages it needs to send in the source-character-position-destination format described above).
    \item At step $L \in \{1, \dots, k-1\}$, machine $i$ partitions the contents of its memory as 
    $$\bigcup_{j=1}^p M_{i,j}$$ where
    $M_{i,j} = \left\{(k,\mathsf{b}, \mathrm{pos}, j') \in M_i \mid j'=j \right\}$. Let $(L+1, m)$ denote the neighbor of $(L, i)$ at level $L+1$ with a label different from $i$ (so $m\neq i$). Machine $i$ partitions each $M_{i,j}$ into two equal parts (this is possible since every message is a power of two) and sends one part to machine $i$ and the other part to machine $m$. Steps $0$ through $k-1$ are collectively referred to as the ``fragmentation phase" of the algorithm. 
    \item At steps $L \in \{k, \dots, 2k-1\}$, denote by $\mathtt{RCVD}_i^{(L)}$ the source-character-position-destination tuples received by machine $i$ after $L-1$ steps. Also, let $(L+1, m)$ denote the neighbor of $(L, i)$ at level $L+1$ with a label different from $i$. At step $L$, machine $i$ sends all tuples $(k, \mathsf{b}, \mathrm{pos}, j) \in \mathtt{RCVD}_i^{(L)}$ such that there is a path between nodes $(L+1, i)$ and $(2k, j)$ in the Bene\v{s} network to machine $i$, and the rest to machine $m$. Notice that by the construction of the Bene\v{s} network, exactly one of following holds:
    \begin{itemize}
        \item There is a path between nodes $(L+1, i)$ and $(2k, j)$, or
        \item There is a path between nodes $(L+1, m)$ and $(2k, j)$
    \end{itemize}
    Steps $k$ through $2k-1$ are collectively referred to as the ``collection phase" of the algorithm. 
    \item At step $2k$, the nodes in the last layer recombine the received messages using the source and position information and the algorithm terminates.
\end{itemize}

We show correctness of our algorithm using a two-step argument. First, we show that during the fragmentation phase (first $k-1$ steps of the algorithm) the tuples are evenly distributed ensuring that the at every point no machine receives more than $s$ tuples. Then we proceed by showing that messages are correctly routed and reach their destination.

\begin{fact}
\label{fact:1}
    After $L \in \{0,1,\dots,k-1\}$ steps, machine $i$ receives exactly $1/2^L$ of each $\texttt{SEND}_{i'}$ where $i'$ is such that $i$ and $i'$ agree on the first $k-L$ least significant bits. Furthermore, no two machines receive the same part of the same list.
\end{fact}
\begin{proof}
    The proof is by induction. For $L=0$ the result trivially holds by the initialization step of the algorithm. Assume it holds for some $L < k-1$. Let $M_i$ be the memory of machine $i$ after $L$ steps. By the connectivity of the Bene\v{s} network, after $L+1$ steps, each machine $i$ will have received messages from itself and machine $m := i \veebar 2^{k-L-1}$. From the inductive hypothesis, $M_i$ contains  $1/2^L$ of each $\texttt{SEND}_{i'}$ where $i'$ is such that $i$ and $i'$ agree on the first $k-L$ least significant bits, and similarly, $M_m$ contains  $1/2^L$ of each $\texttt{SEND}_{i'}$ where $i'$ is such that $m$ and $i'$ agree on the first $k-L$ least significant bits. Now $i$ and $i'$ agree on the first $k-L-1$ least significant bits \text{if and only if} they agree on the first $k-L$ least significant bits or $m$ and $i'$ agree on the first $k-L$ least significant bits. By construction, machine $i$ will receive half the contents of $M_i$ and $M_m$ and it will hence hold exactly $1/2^{L+1}$ of each $\texttt{SEND}_{i'}$ where $i'$ is such that $i$ and $i'$ agree on the first $k-(L+1)$ least significant bits. Since there is at most one path between any two nodes in the Bene\v{s} network, the second part of the statement is trivially satisfied. This concludes the proof.
\end{proof}

\Cref{fact:1} immediately shows that for the first $k-1$ steps of the algorithm, the contents on the memory of each machine never exceed $s$ (since the length of each $\texttt{SEND}_i$ is bounded by $s$). Similarly, we can show the following fact:

\begin{fact}
\label{fact:2}
    After $L\in \{k,k+1,\dots, 2k-1\}$ steps of the algorithm, machine $i$ receives exactly $1/2^{2k-L-1}$ of all $\texttt{RCVD}_{j}$ where $j$ is such that $i'$ and $i$ agree on the first $L-k+1$ least significant bits. Furthermore, no two machines receive the same part of the same list.
\end{fact}
\begin{proof}
We shall only prove the base case $L=k$ as the inductive step is completely symmetric to the proof of \Cref{fact:1}. After the $k$-th step of the algorithm, machine $i$ will receive all tuples for which the destination $j$ is
such that $(2k, j)$ is reachable from $(L + 1, i)$ in the Bene\v{s} network from itself and from machine $m:=i\veebar 2^0$. Recall from \Cref{fact:1} that before the $k$-th step is executed, the memory of machine $i$, $M_i$, contains $1/2^{k-1}$ of every  $\texttt{SEND}_{i'}$ where $i'$ is such that $i$ and $i'$ agree on the least significant bit. Similarly the memory of machine $m$, $M_m$, contains $1/2^{k-1}$ of every $\texttt{SEND}_{i'}$ where $i'$ is such that $m$ and $i'$ agree on the least significant bit. But since the least significant bit of $m$ and $i$ differ, the memories of machines $m$ and $i$ combined, $M_i \cup M_m$, contain exactly $1/2^{k-1}$ of $\bigcup_{i'} \texttt{SEND}_{i'}$ (i.e. a $1/2^{k-1}$ fraction of all messages). Combining the above we see that after the $k$-th step is executed, machine $i$ will receive $1/2^{k-1} = 1/2^{2k-L-1}$ of every  $\texttt{RCVD}_j$ where $j$ is such that $i$ and $j$ agree on the first $L-k+1=1$ least significant bits. As in \Cref{fact:1}, the uniqueness of paths in the Bene\v{s} network guarantees that no two machines ever receive the same part of the same list.
\end{proof}

A direct application of \Cref{fact:2} shows that after $2k-1$ steps of the algorithm, machine each machine $i$ will have in memory exactly $\texttt{RCVD}_i$, showing correctness. Finally, the fact that $|\texttt{RCVD}_i| \leq s$ for all $i \in [p]$ shows that after every step $L \in \{k,k+1,\dots, 2k-1\}$ of the algorithm each machine receives no more than $s$ tuples.
This construction shows that even if we restrict the communication of MPC to some fixed routing pattern, we can still achieve any and all possible routing patterns by using the Bene\v{s} network efficiently, paying the cost of logarithmically more rounds. 
We now proceed to show how we can further extend this algorithm to be consistent with our PCOC model.

\paragraph{From data-dependent to data-agnostic Bene\v{s} routing}
The algorithm above brings us one step closer to our ultimate goal of data-agnostic simulation since at every step, each machine will send data to at most two other machines, respecting the connectivity of the Bene\v{s} network. However, both the fragmentation and collection phases of the algorithm (steps $0$ to $k-1$ and $k$ to $2k-1$) implicitly require access to destination information for each message. 
In terms of PCOC, assuming the oracle mimics the algorithm execution, it would require access to destination information, which is not allowed in the PCOC model. This is because of the following reasons:
\begin{enumerate}
    \item During the fragmentation phase it needs to know exactly which messages are destined for each machine $j$ to evenly divide them into two parts and route them to the two receiving machines.
    \item During the collection phase, destination information is directly used to route messages.
\end{enumerate}
While, at first, these obstacles might seem hard to circumvent, we can do so by using a few simple techniques that enforce a specific structure on the output of the local computation:
\begin{enumerate}
    \item Observe that we can eliminate the need for destination information during the fragmentation phase if we assume that the tuples in the memory of each machine are always sorted based on their destination. If this is the case each machine can simply send the tuples in the even and odd indices on its memory to the two receiving machines, respectively. This achieves the desired outcome: for each $j$, exactly half of the messages with destination $j$ will get routed to one machine and the rest to the other.
    \item As previously discussed, during the collection phase, the oracle would need access to the destination to decide where to route each tuple. However, if we assume that the memory size of each machine is $2s$ (instead of $s$) we can still overcome this issue. Suppose the  oracle needs to simulate step $L \in \{k,k+1,2k-1\}$ of the algorithm. For each machine $i$, let $i_m \neq i$ be such that $(L, i)$ and $(L+1,i_m)$ are connected in the Bene\v{s} network. Recall that at step $k$, machine $i$ will send data to itself and machine $i_m$, depending on the final destination. Given the fixed structure of the Bene\v{s} network, we can require the local computation to calculate where each tuple needs to be routed ($i$ or $i_m$) and place it either in the first half of the memory slots (if it needs to be routed to $i$) or to the second half (if it needs to be routed to $i_m$). The oracle will then route the first half of the memory to $i$ and the second half to $i_m$. Empty spaces on each half can be padded with special tokens.
\end{enumerate}

\subsection{Hardmax patterns using positional attention}
\label{appdx:hardmax}
In this section, we show that the positional attention architecture in \Cref{eq:attn} can approximate any unique hardmax pattern, a concept we define later in this section. This result is used to support the simulation results of \Cref{lem:pcoc_sim}. We begin by stating the definition of the row-wise hardmax transformation for a $p\times q$ matrix $X$ from \Cref{sec:prelim}:
\begin{equation}
    \hardmax(X)_{i,j} = \begin{cases}
    1 & \text{if } X_{i,j} = \max_{k\in[q]} X_{i, k}\\
    0 & \text{otherwise}
\end{cases}
\quad \text{for } i\in [p], j\in [q],
\end{equation}

where we implicitly extend the definition for vectors in $\RR^n$ by viewing them as $1\times n$ matrices.

We use the term \textit{hardmax pattern} to refer to any matrix in the image of hardmax (i.e. a binary matrix with at least one non-zero element in every row). Furthermore, we use the term \textit{unique hardmax pattern} to refer to hardmax patterns with exactly one non-zero element in every row. Unique hardmax patterns occur when the input matrix has a unique maximum value in every row.

We further define key concepts that will be used for the more formal re-statement of \Cref{lem:hardmax}. Let the input have $n$ rows, and the binary positional encodings be defined by the positional encoding matrix $P=I_n$, therefore having $d_P=n$.
Finally, let $T$ be a positive scalar that represents a temperature parameter that controls the approximation of softmax.

\begin{lemma}
\label{lem:hardmax}
For any given $n\times n$  unique hardmax pattern $\bar{A}$, there exists a configuration of node positional attention parameters in \Cref{eq:attn} and a temperature parameter $T$ such that the resulting softmax pattern $A$ approximates $\bar{A}$ to any desired level of accuracy. Formally, for any unique hardmax pattern $\bar{A}$ and any $\varepsilon > 0$, there exists some $T=T(\varepsilon) > 0$ such that the inequality $\lvert\bar{A}_{i,j} - A_{i,j}\rvert \le \varepsilon$ holds for all $i, j \in [n]$.
\end{lemma}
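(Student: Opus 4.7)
\textbf{Proof plan for Lemma~\ref{lem:hardmax}.} The plan is to exploit two facts: with $P = I_n$, the pre-softmax logit matrix is simply $W_Q W_K^\top$, which can be made equal to any $n \times n$ matrix by choosing $W_Q, W_K \in \mathbb{R}^{n \times d_m}$ appropriately (assuming $d_m \ge n$); and softmax with arbitrarily large logit gaps approximates hardmax to any desired accuracy. So I first construct a logit matrix that ``points to'' the correct entries, then scale it by $1/T$ to sharpen the softmax.

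More precisely, given a unique hardmax pattern $\bar{A}$, for each row $i \in [n]$ let $j_i \in [n]$ be the unique column with $\bar{A}_{i, j_i} = 1$. Define $M \in \mathbb{R}^{n \times n}$ by $M_{i, j_i} = 1$ and $M_{i, k} = 0$ for $k \neq j_i$. Choose the weight matrices so that $W_Q W_K^\top = M / T$; one concrete choice, when $d_m = n$, is $W_Q = M$ and $W_K = (1/T) I_n$. Since $P = I_n$, the pre-softmax matrix in Equation~\ref{eq:attn} is exactly $M/T$.

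Next, I compute the softmax row-by-row. In row $i$, one logit equals $1/T$ (at position $j_i$) and the remaining $n-1$ logits equal $0$, so
\begin{equation*}
A_{i, j_i} = \frac{e^{1/T}}{e^{1/T} + (n-1)}, \qquad A_{i, k} = \frac{1}{e^{1/T} + (n-1)} \text{ for } k \neq j_i.
\end{equation*}
Elementary bounds give $|\bar{A}_{i, j_i} - A_{i, j_i}| \le (n-1) e^{-1/T}$ and $|\bar{A}_{i, k} - A_{i, k}| \le e^{-1/T}$, so every entry-wise error is at most $(n-1) e^{-1/T}$. Given $\varepsilon > 0$, choosing $T = T(\varepsilon) \le 1 / \ln((n-1)/\varepsilon)$ (for $\varepsilon < n-1$; any smaller $T$ works otherwise) yields $|\bar{A}_{i,j} - A_{i,j}| \le \varepsilon$ for all $i, j \in [n]$, which is the claimed bound.

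There is no real obstacle: the lemma reduces to the fact that we have full control over the pre-softmax matrix through $W_Q, W_K$ (since $P$ is the identity), and then sharpening the softmax via temperature is a standard computation. The only minor care needed is to ensure $d_m$ is at least $n$ so that $W_Q W_K^\top$ can realize any rank-$n$ matrix; this matches the architectural setting described above the lemma, where positional encodings and query/key dimensions are both taken to be $n$.
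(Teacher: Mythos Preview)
Your proof is correct and follows essentially the same approach as the paper: both exploit $P=I_n$ to gain full control over the pre-softmax logits, then create a large logit gap in each row and bound the resulting softmax error elementarily. The only cosmetic difference is the parameterization of temperature---the paper sets the logits to $\pm T$ (so large $T$ sharpens), whereas you set them to $1/T$ and $0$ (so small $T$ sharpens)---but the argument and the resulting bounds are the same up to this reparameterization.
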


\begin{proof}
Without loss of generality, we may assume $\varepsilon < 1$. We start by setting the node positional attention parameters to be $W_K = I_n$ and $W_Q = T(2\bar{A}-1)$, where, $T > 0$ is our temperature scalar parameter and $\bar{A}$ is the target pattern.
Since, in this construction, node positional encodings are set to be the identity, the inner-product $PW_QW_K^\top P^\top$ reduces to $W_Q$, where each entry $(i,j)$ is $T$ if $\bar{A}_{i,j}=1$, and $-T$ otherwise.

This inner product is passed to the softmax operator, resulting in the attention matrix $A$. For each $i,j \in [n]$ we separately analyze the following two cases:
\begin{enumerate}[leftmargin=0.8cm]
    \item Case $\bar{A}_{i,j}=1$: In that case, the only non-zero element on the $i$-th row of $\bar{A}$ is $\bar{A}_{i,j}$, so we can express the difference as
    \begin{align*}
    \bar{A}_{i,j} - A_{i,j} &= 1 - \frac{\exp((W_Q)_{i,i})}{\sum_{k=1}^n\exp((W_Q)_{i, k})}
    = 1 - \frac{\exp(T)}{\exp(T)+\sum_{k\neq j}\exp(-T)}\\
    &\le 1- \frac{\exp(T)}{\exp(T)+n\exp(-T)}
    = 1-\frac{1}{1+\exp(\ln n-2T)}\\
    &= \frac{\exp(\ln n-2T)}{1+\exp(\ln n-2T)}
    \le \exp(\ln n-2T)\\
\end{align*}

\item Case $\bar{A}_{i,j} = 0$: Let $j_0 \neq j$ be the unique index for which $\bar{A}_{i,j_0}=1$, and we can express the difference as:
\begin{align*}
    A_{i,j} - \bar{A}_{i,j} &= \frac{\exp((W_Q)_{i,j})}{\sum_{k=1}^n\exp((W_Q)_{i, k})}
    = \frac{\exp(-T)}{\exp(T)+\sum_{k\neq j_0}\exp(-T)}\\
    &= \frac{1}{\exp(2T)+n-1}
    \le \frac{1}{\exp(2T-\ln n)+1}
    \le \exp(\ln n -2T)%
\end{align*}
\end{enumerate}

In any case, we have that $\lvert\bar{A}_{i,j}-A_{i,j}\lvert \le \exp(\ln n - 2T)$. Therefore, by taking $T \ge \nicefrac{1}{2}\ln(\nicefrac{n}{\varepsilon})$, we have $\lvert\bar{A}_{i,j}-A_{i,j}\rvert \le \varepsilon$.
\end{proof}

\subsection{Proof of \Cref{lem:pcoc_sim} 
 (Positional Transformers simulate PCOC)}
\label{app:simulation}
We begin this section by outlining the key concepts utilized in the routing protocol employed in our constructions. First, we describe the general structure of the input matrix $\mathbf{X}$.

\subsubsection{Encoding}
\textbf{Input matrix:} In alignment with the PCOC model, the input matrix $\mathbf{X}$ represents $N$ machines, where each machine is denoted by a row in the matrix, and its local memory, $\mathtt{MEM}_i\in\mathbb{T}^s$, is represented by the corresponding columns. The maximum size of data that any machine can send or receive is $s$ bits, with each bit corresponding to a column in $\mathbf{X}$.

However, the actual number of rows and columns in $\mathbf{X}$ differs from the number of machines and the local memory size for two reasons:
\begin{enumerate}
    \item \textbf{Sink node}: A dummy node is introduced to facilitate all possible communication patterns in PCOC using positional attention. This is necessary because PCOC allows for the possibility of information not being sent to any receiving machine. This scenario is incompatible with the softmax formulation, which requires at least one non-zero entry. The dummy node serves as a sink, collecting all messages that do not have a destination. Consequently, the number of rows in $\mathbf{X}$ is $n = N + 1$.
    \item \textbf{Unique node identifier:} Each machine also requires a unique identifier to enable element-wise local computations. To achieve this, we encode a unique scalar for each node in the last column of $\mathbf{X}$, resulting in a feature dimension of $d_X = s + 1$ for the input matrix.
\end{enumerate}

As discussed in \Cref{sec:expressivity}, in PCOC, routing is set by an oracle that decides how packets of data should be routed at each round. Under this framework, routing must be performed to prevent multiple data from being sent to the same destination. 
Since our construction relies on matrix operations, this leads to the following assumption:

\begin{assumption}
 (No-collision). For any layer $\ell \in [L]$, no two different machines $i_1, i_2 \in [N]$ should route data to the same destination $i_3 \in [N]$ for the same column $j \in [s]$, where each column represents a bit of local memory across the nodes.
\end{assumption}

Note that this assumption does not limit the generality of our PCOC model. It only defines how data should be stored in the memory of each receiving machine, and any valid PCOC routing has a corresponding no-collision configuration of bits that realizes it due to the restriction on the total size of received data. As demonstrated in the constructive proof, this directly influences the sparsity pattern generated by each attention head.

\textbf{Positional encodings:} As previously mentioned, although the connectivity at each layer may vary, the positional encodings remain consistent across all layers. Our architecture simulates MPC using node positional encodings with dimension $d_P = n$ by setting $P = I_n$, with each positional encoding functioning as a standard basis vector.

\subsubsection{Simulation results}

We now demonstrate that, with the established encoding, the architecture provided in \Cref{sec:arch} can simulate the execution of any PCOC instance.
Each round of such a PCOC instance can be decomposed into two stages: communication and computation. Our objective is to provide existential results for both stages.

In the communication stage, routing assigns destination machines for each message. In our architecture, this assignment is analogously captured by the attention weights, which determine whether a message should be received by a node using binary values. 

The no-collision assumption ensures that all routing patterns can be represented by unique hardmax patterns.
As expressed in \Cref{lem:hardmax}, since any unique hardmax pattern can be approximated by our attention layer using softmax, for simplicity, the subsequent proofs use hardmax instead of softmax.
With all details now established, we re-state our main simulation result:

\PCOC*

\begin{proof} 
Despite the desired degree of accuracy being influenced by the number of rounds performed, it suffices to show that one layer of our architecture can simulate one round of PCOC. The same constructive arguments can be extended to more rounds, ensuring the overall degree of approximation is respected\footnote{
To completely align the simulation results with PCOC execution, positional Transformers require an extra initial layer where the attention layer acts as the identity. This is to account for the fact that PCOC first starts with a computation stage rather than communication as in Transformers.}. To this end, we begin the proof with the communication stage.

\textbf{Communication:} In PCOC, communication is encoded as routing patterns determined by the oracle. At round $\ell \in [R]$, we denote by $H^{(\ell)} = \{((i, j), K) \mid i, j \in [N],\, K \in \mathcal{P}([s])\}$ the set of valid routing patterns provided by the oracle. This set specifies that the data at positions $K$ in the local memory of machine $i$ must be sent to machine $j$ at the same position. A valid routing pattern requires that no collisions occur (i.e., no two triplets in $H^{(\ell)}$ should have the same destination $j$ and memory position $k\in K$). We further denote by $H^{(\ell)}_z = \{((i, j), z) \mid ((i, j), K) \in H^{(\ell)},\, z\in K\}$ the subset of routing patterns corresponding to position $z$ in local memory.

The first part of our result constructively demonstrates that positional attention can reproduce any valid routing set by the oracle that adheres to the PCOC model.
We construct $s$ attention heads indexed by $h \in [s]$, which handle routing for the corresponding subset $H_h$.

For clarity in the construction phase, we introduce an augmented set to simplify notation.
We begin by extracting the set of source nodes for each set  $H^{(\ell)}_z$, denoted as $I^{(\ell)}_z = \{i \mid ((i, j), z) \in H^{(\ell)}_z,\, j\in [N],\, z\in [s]\}$.
Next, we create a complement set $\cancel{H}^{(\ell)}_z$, which routes all unused sources (i.e., those not in $I^{(\ell)}_z$) to the sink node labeled $n=N+1$. We denote this complement set by $\cancel{H}^{(\ell)}_z = \{(i, n, z) \mid i \in [n]\setminus I^{(\ell)}_z\}$.
Finally, we define the union of these sets as $\hat{H}^{(\ell)}_z = H^{(\ell)}_z \cup \cancel{H}^{(\ell)}_z$.

The attention parameters are then set as follows:
\begin{equation*}
    \left(W_K^{(\ell, h)}\right)_{i, j} = \begin{cases}
        1 &\text{if } i=j,\\
        0 &\text{otherwise,}
    \end{cases} \quad \text{ and }\quad
    \left(W_Q^{(\ell,h)}\right)_{i, j} = \begin{cases}
        1 &\text{if } (j, i, h)\in \hat{H}^{(\ell)}_h\\
        0 &\text{otherwise,}
    \end{cases}
\end{equation*}

In this construction, we first observe that both the node positional encodings and the key matrix are identity matrices, reducing the inner product in attention to be solely defined by the query matrix. The query matrix is then designed to encode the source node $i$ as a standard basis vector in the row corresponding to the destination node $j$.
This effectively represents the routing set $\hat{H}^{(\ell)}_h$ as a binary matrix, which is also preserved after applying hardmax.
Additionally, the no-collision assumption, combined with the sink node strategy, ensures exactly one non-zero entry in the first $N$ rows of the attention weights matrix for each attention head $h$.

For the value and output transformation, we set all value matrices $W_V^{(\ell, h)}$ to be the identity $I_{d_X}$ and define the output matrix $W_O^{(\ell)}\in \RR^{(H\cdot d_X) \times (d_X-1)}$ as follows:
\begin{equation*}
    \left(W_O^{(\ell)}\right)_{i, j} = \begin{cases}
        1 &\text{if } i=k+(h-1)s,\,j=h\\
        0 &\text{otherwise.}
    \end{cases}
\end{equation*}

Here, the output matrix $W_O^{(\ell)}$ ensures that only the correct memory position receives the information and places it in the corresponding column. Note that since the outputs of the attention heads are concatenated before being processed by $W_O^{(\ell)}$, 
the values along the rows of the output matrix also depend on the attention head.

We now focus on the computation stage for the second part of the proof.

\textbf{Computation:} At round $\ell \in [R]$ of a PCOC model, let $[\phi^{(\ell, z)}_i]^{n}_{i=1}$ be the local functions applied by each machine $i\in [N]$ and let $\phi^{(\ell, z)}_{n}$ correspond to the function of the augmented sink node, which effectively erases all data received.
Each function $\phi^{(\ell, z)}_i$ operates on received data in each machine's local memory, which corresponds to the output of the attention layer, denoted by $z$ and outputs a vector of the same dimension $s$, that is, $\phi^{(\ell, z)}_i: \RR^s \to \RR^s$.

Furthermore, let $\phi^{(\ell, x)}: \RR^{d_X} \to \RR^{d_X}$ be a function common to all nodes, which operates solely on the residual connection $x$.
This function outputs a vector where all entries are zero except the last entry. The value in this last entry corresponds to the unique node identifier extracted from the residual input $x$.

We aim to approximate both $\phi_i^{(\ell, z)}$ and $\phi^{(\ell, x)}$ using neural networks. To this end, we define the combined function $\phi_i^{(\ell)}: \RR^{d_X+s}\to \RR^{d_X}$ by:

\begin{equation}
    \phi^{(\ell)}_i(z_i \oplus x_i) := \phi^{(\ell, x)}(x_i) + \phi_i^{(\ell, z)}(z_i)\oplus0,
\end{equation}

where $z_i \oplus x_i$ denotes the concatenation of output of $z_i \in \mathbb{R}^s$ and $x_i \in \mathbb{R}^{d_X}$. We further augment the output of $\phi_i^{(\ell, z)}$ with a zero scalar to match the dimension $d_X=s+1$.

Let $[\hat{\phi}^{(\ell)}_i]^{n}_{i=1}$ correspond to multilayer perceptrons (MLPs), each applied to each input $z_i \oplus x_i$.
By invoking universal approximation results such as those by \cite{cybenko1989approximation, hornik1989multilayer}, we assert that as long as the local functions $\phi^{(\ell)}_i$ are Borel measurable, there exist neural networks $\hat{\phi}^{(\ell)}_i$ that can approximate the functions $\phi^{(\ell)}_i$ to any desired degree of accuracy.
Additionally, note that the function $\phi^{(\ell, z)}_n$ of the sink node, as well as the function $\phi^{(\ell, x)}$ that operates on the residual connection, are both linear and therefore Borel measurable.

The final step in this argument is to relate these approximations to the proposed architecture in \Cref{sec:arch}. 
Specifically, we use the MLP $\Phi^{(\ell)}$ in \Cref{eq:arch}
 and leverage the aforementioned universality results to approximate all the element-wise functions $\phi^{(\ell)}_i$.

A crucial aspect of this step is the need for the input of each machine to be uniquely identifiable. This ensures that a single model can injectively encode multiple functions. Intuitively, it guarantees that each approximation of the local function can identify that it is processing the right row.
The unique identification of each machine is guaranteed by the scalar encodings of every node, which, regardless of the contents in local memory, ensure that the input rows are unique. Therefore, the function that $\Phi^{(\ell)}$ has to approximate is a piecewise Borel function with each branch being one of the $\phi_i^{(\ell)}$, based on the unique machine identifier. Such function is Borel measurable, and so the universal approximation results of \cite{hornik1989multilayer} hold, guaranteeing the existence of the desired MLP $\Phi^{(\ell)}$.

This demonstrates that our neural network architecture can emulate the computations performed by the local functions $\phi_i^{(\ell, z)}$ acting on the output of the attention layer (with their outputs zero-padded to match the required dimension) and the function $\phi^{(\ell, x)}$ acting on the residual connection, even though they act on distinct parts of the input.

Therefore, we establish that our proposed architecture can approximate the computations in each round of the PCOC model.
\end{proof}

An important observation is that the computational model and expressive results for the proposed architecture are specific to a fixed input length $n$.
Furthermore, one could extend such results to a model with communication and local computations that also consider the input length as an input.
For local computations, proof in \Cref{lem:pcoc_sim} can also cover such cases, provided that the information about the length is also encoded in the input.
For communication, we present the following remark.

\begin{remark}
For any collection of unique hardmax patterns $\{\bar{A}^{(k)}\}_{k=1}^n$, where $\bar{A}^{(k)}$ is $k \times k$, there exists a configuration of node positional attention parameters in \Cref{eq:attn} and a temperature parameter $T$ such that the resulting softmax patterns $\{A^{(k)}\}_{k=1}^n$ approximate each $\bar{A}^{(k)}$ to any desired level of accuracy. Formally, for any collection of unique hardmax patterns $\{\bar{A}^{(k)}\}_{k=1}^n$ and any $\varepsilon > 0$, there exists a temperature parameter $T = T(\varepsilon) > 0$ and corresponding attention parameters such that for all $i, j \in [n]$ and for all $k \in [n]$, the following inequality holds: $\left|\bar{A}^{(k)}_{i,j} - A^{(k)}_{i,j} \right| \leq \varepsilon$.
\end{remark}

The proof of this remark relies on a slight modification of the proof of \Cref{lem:hardmax}.
However, to cover all possible patterns, the embedding dimension of positional encodings should also encode the input length and be of the order of $O(n^3)$.
Although this embedding dimension is theoretically large, in practice, one does not need as many dimensions for positional encodings, as demonstrated in the variable length experiments in \Cref{sec:experiments}.

\subsection{Softmax patterns using positional attention}
\label{app:softmax}
We conclude the discussion on expressivity by showing a final, standalone, result, namely that the positional attention architecture in \Cref{eq:attn} can represent any softmax pattern. This result is further discussed in \Cref{app:failure} regarding reasons for poor out-of-distribution performance by standard Transformers.

We begin by stating the definition of the row-wise softmax transformation for a matrix $X \in \RR^{p\times q}$:

\begin{equation}
    \softmax(X)_{i,j} = \frac{\exp(X_{i,j})}{\sum_{k=1}^q \exp(X_{i,k})} \quad \text{for } i\in [p], j\in [q]
\end{equation}

As with hardmax, the definition is implicitly extended to vectors in $\RR^n$ by viewing them as $1\times n$ matrices. The image of the softmax function is the set of row-stochastic matrices with entries in $(0, 1)$. Indeed, it is easy to see that when softmax is applied to a matrix, the resulting matrix satisfies the above property. On the other hand, for a matrix $\mathbf{B} = (b_{ij})_{i\in[p], j\in [q]}$ with $b_{ij}\in (0,1)$ and $\sum_{j\in [q]} b_{ij} = 1$ for all $i \in [p]$ we have $\softmax(\mathbf{\Tilde{X}}) = B$ where $\mathbf{\Tilde{X}}_{i,j} = \ln(b_{ij})$. We use the term \textit{softmax pattern} to refer to any matrix in the image of softmax.

Consider attention weights $A^{(\ell, h)}$ that are defined by positional encodings in \Cref{eq:attn}. Let $B\in (0, 1)^{n\times n}$ be a softmax pattern. We would like to find parameters $W_Q^{(\ell, h)}$ and $W_K^{(\ell, h)}$ that induce $B$, that is $A^{(\ell, h)} = B$. From the properties of softmax described above, it suffices to solve the matrix equation $(PW^{(\ell, h)}_Q)\cdot(PW^{(\ell, h)}_K)^\top = \Tilde{B}$ where $\Tilde{B}_{ij} = \ln(B_{ij})$. This equation always has a solution when $d_P = n$ and $P$ is invertible. We summarize the above observation in the following expressivity remark:
\begin{remark}[Positional attention is expressive]
    \label{rem:softmax}
    Positional attention can realize all softmax patterns at every layer provided that $d_P = n$ and $P$ is invertible. This is not necessarily true in the case of standard attention where, in subsequent layers, positional encodings are modified and, therefore, not guaranteed to be linearly independent.
\end{remark}

\section{Proof of Generalization Bound (\Cref{thm:generalization})}
\label{app:gen_bound}

This section provides a complete proof of \Cref{thm:generalization}. The high level strategy which inductively builds a cover for the class of multi-layer architectures is similar to that of \citet{edelman2022inductive}. We made a couple of changes to account for positional attention. For the reader's convenience we re-state all relevant definitions below.

\subsection{Preliminaries}
\label{app:gen_prelim}

\subsubsection{Notations and the network architecture}

For a matrix $Z$ we denote by $Z_{i,:}$ and $Z_{:,i}$ the $i$-th row and column of $Z$, respectively. $\|\cdot\|_2$ denotes the spectral norm for matrices, and $\|\cdot\|_{p,q}$ denotes the $(p,q)$ matrix norm where the $p$-norm is over columns and $q$-norm over rows. For vectors, $\|\cdot\|_p$ denotes the $\ell_p$ norm. For example, $\|Z\|_{2,\infty} := \max_{j} \|Z_{:,j}\|_2$ and $\|Z\|_{2,1} = \sum_{j}\|Z_{:,j}\|_2$.

We are interested in the positional transformer architecture of $L$ layers, which is obtained by composing \Cref{eq:arch} $L$ times. In addition, we parameterize $\Phi^{(\ell)}$ to be a 2-layer MLP for each $\ell$. Given input $X \in \mathbb{R}^{n\times d_X}$ and positional encodings $P\in\mathbb{R}^{n\times d_P}$, the resulting architecture has $L$ layers, each layer has $H$ attention heads, and it is written as
\begin{allowdisplaybreaks}
\begin{align}
  &F^{(0)}(X;W^{1:0}) := X,\nonumber\\
  &\forall \ell = 1,2,\ldots,L:\nonumber\\
  &\hspace{5mm}F^{(\ell)}(X;W^{1:\ell}) := \sigma\left(\left(\left(\bigoplus_{h=1}^H A^{(\ell,h)}F^{(\ell-1)}(X;W^{1:\ell-1})W_V^{(\ell,h)}\right)\oplus F^{(\ell-1)}(X;W^{1:\ell-1})\right)W_1^{(\ell)}\right)W_2^{(\ell)},\label{eq:F}\\
  &\hspace{20mm} \mbox{with} ~~ A^{(\ell,h)} := \softmax(PW_Q^{(\ell,h)}{W_K^{(\ell,h)}}^\top P^\top ), \quad \forall h = 1,2,\ldots,H.\nonumber
\end{align}
\end{allowdisplaybreaks}
In the above, $\sigma$ is an $L_\sigma$-Lipschitz activation function with $\sigma(0) = 0$. We write 
\[
  W^{(\ell)} := \{W_Q^{(\ell,1)},W_Q^{(\ell,2)},\ldots,W_Q^{(\ell,H)}, W_K^{(\ell,1)}, W_K^{(\ell,2)},\ldots, W_K^{(\ell,H)}, W_V^{(\ell,1)}, W_V^{(\ell,2)}, \ldots, W_V^{(\ell,H)}, W_1^{(\ell)}, W_2^{(\ell)}\}
\]
for the set of weight matrices at layer $\ell$, and
\[
  W^{1:\ell} := (W^{(1)}, W^{(2)},\ldots,W^{(\ell)})
\]
for the set of weight matrices up to layer $\ell$. Note that we may assume without loss of generality that the output projection matrix $W_O^{(\ell)}$ is absorbed into the weight matrix $W_1^{(\ell)}$ of the MLP, and therefore we omit $W_O^{(\ell)}$ in \Cref{eq:F}. Denote $d=\max\{d_X,d_P,d_V,d_{\mathrm{out}}\}$. Since we are to derive an upper bound which increases with $d_X,d_P,d_V,d_{\mathrm{out}}$, we sometimes replace $d_X,d_P,d_V,d_{\mathrm{out}}$ with their maximum $d$ in the derivations to simplify the notation.

Given the final output $X_{\mathsf{out}} = F^{(L)}(X;W^{1:L}) \in \mathbb{R}^{n \times n}$, we extract a scalar output from the architecture as $w^\top X_{\mathsf{out}}[n]$ with trainable weights $w\in\mathbb{R}^d$. This defines a class of scalar-output architectures as in \citet{edelman2022inductive}. The index $i\in[n]$ on which we apply the linear function $x \mapsto w^\top x$ can be arbitrary, and here we simply set to the last index, $n$. Our generalization bound applies to the following class of scalar-output architectures with bounded-norm parameters:
\begin{equation}\label{eq:bounded_norm_arch_class}
\begin{split}
  \mathcal{F} := \Bigg\{&X \mapsto w^\top \left(F^{(L)}(X;W^{1:L})[n]\right): \|w\|_2 \le B_w,\\
  &\left\|W_V^{(\ell,h)}\right\|_2, \left\|W_1^{(\ell)}\right\|_2, \left\|W_2^{(\ell)}\right\|_2 \le B_2, \forall \ell \in [L], ~ \forall h \in [H],\\
  &\left\|W_K{^{(\ell,h)}}^\top W_Q{^{(\ell,h)}}\right\|_{2,1}, \left\|W_V^{(\ell,h)}\right\|_{2,1}, \left\|W_1^{(\ell)}\right\|_{2,1}, \left\|W_2^{(\ell)}\right\|_{2,1} \le B_{2,1}, ~ \forall \ell \in [L], \forall h \in [H]\Bigg\}.
\end{split}
\end{equation}

\subsubsection{Generalization bound, Rademacher complexity, and the covering number}

\begin{definition*}[Risk]
Let $\mathcal{D}$ be a distribution over $\mathcal{X}\times \mathbb{R}$, and let $\ell : \mathbb{R}\times\mathbb{R} \rightarrow \mathbb{R}$ be the loss function. For a given class of functions $\mathcal{F} = \{ f : \mathcal{X} \rightarrow \mathbb{R} \}$ and $f\in\mathcal{F}$, the generalization error, or the risk, and the empirical risk, are defined as
\[
  \mathsf{risk}(f;\mathcal{D}) := \mathbb{E}_{(x,y)\sim\mathcal{D}} \left[\ell(f(x),y)\right], \quad \widehat{\mathsf{risk}}(f;(x{(i)},y^{(i)})_{i=1}^m) := \frac{1}{m}\sum_{i=1}^m \ell(f(x^{(i)}), y^{(i)}).
\]
\end{definition*}

\begin{definition}[Empirical Rademacher complexity]
For a given class of functions $\mathcal{F} = \{ f : \mathcal{X} \rightarrow \mathbb{R} \}$ and $\{x^{(i)}\}_{i=1}^m \subset \mathcal{X}$, the empirical Rademacher complexity $\widehat{\mathcal R}(\mathcal{F};\{x^{(i)}\}_{i=1}^m)$ is defined as
\[
  \widehat{\mathcal R}(\mathcal{F};\{x^{(i)}\}_{i=1}^m) = \frac{1}{m} \mathbb{E}_\varepsilon \left[\sup_{f \in \mathcal F} \sum_{i=1}^m \varepsilon_i f(x^{(i)})\right],
\]
where $\varepsilon$ is a vector of $m$ i.i.d. Rademacher random variables, that is, $\mathbb{P}(\varepsilon_i = 1) = \mathbb{P}(\varepsilon_i = -1) = 1/2$.
\end{definition}

The following theorem bounds the generalization gap $|\mathsf{risk}-\widehat{\mathsf{risk}}|$ using the Rademacher complexity of a function class.

\begin{theorem}[\citet{bartlett2003rademacher}]\label{thm:generalization_from_rademacher}
Let $\mathcal{D}$ be a distribution over $\mathcal{X} \times \mathbb{R}$ and let $\ell : \mathbb{R} \times \mathbb{R} \rightarrow \mathbb{R}$ be a
$b$-bounded loss function that is $L$-Lipschitz in its first argument. Let $\mathcal{F} = \{f : \mathcal{X} \rightarrow \mathbb{R}\}$ be a given class of functions. Then for any $\delta > 0$, with probability at least $1-\delta$, simutaneously for all $f \in \mathcal{F}$, it holds that
\[
  \left| \mathsf{risk}(f;\mathcal{D}) - \widehat{\mathsf{risk}}(f;(x{(i)},y^{(i)})_{i=1}^m) \right|
  \le 
  4L \widehat{\mathcal R}(\mathcal{F};\{x^{(i)}\}_{i=1}^m) + 2b \sqrt{\frac{\log(1/\delta)}{2m}}.
\]
\end{theorem}

We may bound the Rademacher complexity of a function class $\mathcal{F}$ using its covering number.

\begin{definition}[Covering number]
For a given class of (scalar-valued or vector-valued) functions $\mathcal F$, the covering number $\mathcal{N}_{\infty}(\mathcal{F};\varepsilon;\{x^{(i)}\}_{i=1}^m;\|\cdot\|)$ is the smallest size of a collection (a cover) $\mathcal{C} \subset \mathcal{F}$ such that for all $f\in\mathcal{F}$ there is $\hat{f}\in\mathcal{C}$ satisfying
\[
  \max_{i\in[m]} \left\|f(x^{(i)} - \hat{f}(x^{(i)})\right\| \le \varepsilon.
\]
\end{definition}

\begin{lemma}[Rademacher complexity via covering number; Corollary A.3 in \citet{edelman2022inductive}]\label{lem:rademacher_from_cover}
For a class of functions $\mathcal{F} = \{ f : \mathcal{X} \rightarrow \mathbb{R} \}$
such that $|f| \le A$ for all $f \in \mathcal{F}$, suppose that $\log \mathcal{N}_{\infty}(\mathcal{F};\varepsilon;\{x^{(i)}\}_{i=1}^m) \le C_{\mathcal{F}}/\varepsilon^2$, then for some consstant $c>0$:
\[
  \widehat{\mathcal R}(\mathcal{F};\{x^{(i)}\}_{i=1}^m) \le c \cdot \sqrt{\frac{C_{\mathcal{F}}}{m}} \cdot \left( 1 + \log\left(A\sqrt{m/C_{\mathcal{F}}}\right)\right).
\]
\end{lemma}

Combining \Cref{thm:generalization_from_rademacher} and \Cref{lem:rademacher_from_cover} we may bound the generalization gap $|\mathsf{risk}-\widehat{\mathsf{risk}}|$ using the covering number of a function class, as stated in the following lemma.

\begin{lemma}\label{lem:generalization_from_cover}
For a class of functions $\mathcal{F} = \{ f : \mathcal{X} \rightarrow \mathbb{R} \}$
such that $f$ is bounded for all $f \in \mathcal{F}$ and that $\log \mathcal{N}_{\infty}(\mathcal{F};\varepsilon;\{x^{(i)}\}_{i=1}^m) \le C_{\mathcal{F}}/\varepsilon^2$ for all $\{x^{(i)}\}_{i=1}^m \subset \mathcal{X}$. Then for any $\delta>0$, with probability at least $1-\delta$, simultaneously for all $f\in\mathcal{F}$, it holds that
\[
  \left| \mathsf{risk}(f;\mathcal{D}) - \widehat{\mathsf{risk}}(f;(x{(i)},y^{(i)})_{i=1}^m) \right|
  \le
  \tilde O \left(\sqrt{\frac{C_{\mathcal{F}}}{m}} + \sqrt{\frac{\log(1/\delta)}{m}}\right).
\].
\end{lemma}

\subsubsection{Some useful lemmas}

\begin{lemma}[Corollary A.7 in \citet{edelman2022inductive}]\label{lem:lip_softmax}
For vectors $x,y\in\mathbb{R}^n$ we have that 
$$
\normOne{\softmax(x) - \softmax(y)} \le 2 \norminf{x - y}.
$$
\end{lemma}

\begin{lemma}[Lemma 4.6 in \citet{edelman2022inductive}]\label{lem:basic_cover}
Let $\mathcal{W}:= \{W\in\mathbb{R}^{d \times d'} \ : \ \|W^\top \|_{2,1} \le B_W\}$, and consider the function class $\mathcal{F}_W := \{x \mapsto Wx  :  W\in\mathcal{W}\}$. For any $\varepsilon > 0$ and $x^{(1)},x^{(2)},\ldots,x^{(m)}\in\mathbb{R}^{d}$ satisfying $\|x^{(i)}\|_2 \le B_X$ for all $i\in[m]$, we have
\begin{equation*}
  \log \mathcal{N}_\infty(\mathcal{F}_W;\varepsilon; x^{(1)},\ldots,x^{(m)};\|\cdot\|_2) \lesssim \frac{B_W^2 B_X^2}{\varepsilon^2} \log (d m).
\end{equation*}
\end{lemma}

\begin{lemma}[Lemma A.8 in \citet{edelman2022inductive}]\label{lem:setting_epsilons}
For $a_1, b_i \ge 0$, the solution to the following optimization problem
\begin{align*}
  \min_{\varepsilon_1,\ldots,\varepsilon_t} & \quad \sum_{i=1}^t \frac{a_i}{\varepsilon_i^2} \\
  \mathrm{s.t} & \quad \sum_{i=1}^t b_i \varepsilon_i = \varepsilon
\end{align*}
is $\nu^3/\varepsilon^2$ and is achieved at $\varepsilon_i = \varepsilon( a_1/b_i)^{1/3}/\nu$ where $\nu:=\sum_{i=1}^t a_i^{1/3} b_i^{2/3}$.
\end{lemma}

\subsection{Lipschitz Property of the Architecture}
\label{app:gen_lip}

Denote 
\begin{align*}
  f(Z;\{W_Q^{(\cdot,h)}\}_{h=1}^H,\{W_K^{(\cdot,h)}\}_{h=1}^H,\{W_V^{(\cdot,h)}\}_{h=1}^H) 
  &:= \left(\bigoplus_{h=1}^H A^{(\cdot,h)}ZW_V^{(\cdot,h)}\right)\oplus Z \\
  &= \left[A^{(\cdot,1)}ZW_V^{(\cdot,1)}, A^{(\cdot,2)}ZW_V^{(\cdot,2)}, \cdots, A^{(\cdot,H)}ZW_V^{(\cdot,H)}, Z \right],
\end{align*}
where $A^{(\cdot,h)} = \softmax(PW_Q^{(\cdot,h)}{W_K^{(\cdot,h)}}^\top P^\top )$.

\begin{lemma}\label{lem:lip_pos_attn}
For any $Z$, $\widehat{Z}$, $\{W_Q^{(\cdot,h)}\}_{h=1}^H$, $\{\widehat{W}_Q^{(\cdot,h)}\}_{h=1}^H$, $\{W_K^{(\cdot,h)}\}_{h=1}^H$, $\{\widehat{W}_K^{(\cdot,h)}\}_{h=1}^H$, $\{W_V^{(\cdot,h)}\}_{h=1}^H$, $\{\widehat{W}_V^{(\cdot,h)}\}_{h=1}^H$,
\begin{align*}
  &~\left\|
  \left(f\left((Z;\{W_Q^{(\cdot,h)}\}_{h=1}^H,\{W_K^{(\cdot,h)}\}_{h=1}^H,\{W_V^{(\cdot,h)}\}_{h=1}^H\right) - f\left(\widehat{Z};\{\widehat{W}_Q^{(\cdot,h)}\}_{h=1}^H,\{\widehat{W}_K^{(\cdot,h)}\}_{h=1}^H,\{\widehat{W}_V^{(\cdot,h)}\}_{h=1}^H\right)\right)^\top \right\|_{2,\infty}\\
  &~\le \max\left(\max_{h\in[H]}\|W_V^{(\cdot,h)}\|_2,1\right) \\
  &\hspace{10mm} \cdot \Bigg(\|Z^\top -\widehat{Z}^\top \|_{2,\infty} + 2\|Z^\top \|_{2,\infty}\|P^\top \|_{2,\infty}\max_{h\in[H]}\left\|\left(W_Q^{(\cdot,h)}{W_K^{(\cdot,h)}}^\top -\widehat{W}_Q^{(\cdot,h)}\widehat{W}_K{^{(\cdot,h)}}^\top \right)P^\top \right\|_{2,\infty}\Bigg)\\
  &\qquad + \sqrt{H}\max_{h\in[H]} \left\|\left(W_V^{(\cdot,h)}-\widehat W_V^{(\cdot,h)}\right)^\top \widehat Z^\top \right\|_{2,\infty}
\end{align*}
\end{lemma}

\begin{proof}
Denote $A^{(\cdot,h)} = \softmax(PW_Q^{(\cdot,h)}{W_K^{(\cdot,h)}}^\top P^\top )$ and $\widehat A^{(\cdot,h)} = \softmax(P\widehat W_Q^{(\cdot,h)}\widehat W_K{^{(\cdot,h)}}^\top P^\top )$:
\begin{allowdisplaybreaks}
\begin{align}
  &\left\|
  \left(f\left((Z;\{W_Q^{(\cdot,h)}\}_{h=1}^H,\{W_K^{(\cdot,h)}\}_{h=1}^H,\{W_V^{(\cdot,h)}\}_{h=1}^H\right) - f\left(\widehat{Z};\{\widehat{W}_Q^{(\cdot,h)}\}_{h=1}^H,\{\widehat{W}_K^{(\cdot,h)}\}_{h=1}^H,\{\widehat{W}_V^{(\cdot,h)}\}_{h=1}^H\right)\right)^\top \right\|_{2,\infty}\nonumber\\
  =~&\left\|\begin{bmatrix}W_V{^{(\cdot,1)}}^\top (A^{(\cdot,1)}Z)^\top \\ \vdots \\ W_V{^{(\cdot,H)}}^\top (A^{(\cdot,H)}Z)^\top \\ Z^\top \end{bmatrix} - \begin{bmatrix}\widehat W_V{^{(\cdot,1)}}^\top (\widehat A^{(\cdot,1)}\widehat Z)^\top \\ \vdots \\ \widehat W_V{^{(\cdot,H)}}^\top (\widehat A^{(\cdot,H)}\widehat Z)^\top \\ \widehat Z^\top \end{bmatrix}\right\|_{2,\infty} \nonumber\\
  \le~&\left\|\begin{bmatrix}W_V{^{(\cdot,1)}}^\top \left(A^{(\cdot,1)}Z-\widehat A^{(\cdot,1)}\widehat Z\right)^\top \\ \vdots \\ W_V{^{(\cdot,H)}}^\top \left(A^{(\cdot,H)}Z-\widehat A^{(\cdot,H)}\widehat Z\right)^\top \\ Z^\top -\widehat Z^\top \end{bmatrix}\right\|_{2,\infty} + \quad\left\|\begin{bmatrix}\left(W_V^{(\cdot,1)}-\widehat W_V^{(\cdot,1)}\right)^\top (\widehat A^{(\cdot,1)}\widehat Z)^\top \\ \vdots \\ \left(W_V^{(\cdot,H)}-\widehat W_V^{(\cdot,H)}\right)^\top (\widehat A^{(\cdot,H)}\widehat Z)^\top \\ 0\end{bmatrix}\right\|_{2,\infty}\nonumber\\
  =~& \left\|\begin{bmatrix}W_V{^{(\cdot,1)}}^\top  & & & \\ & \ddots & & \\ & & W_V{^{(\cdot,H)}}^\top  & \\ & & & I \end{bmatrix} \begin{bmatrix}(A^{(\cdot,1)}Z)^\top -\widehat A^{(\cdot,1)}\widehat Z)^\top \\ \vdots \\ (A^{(\cdot,H)}Z)^\top -\widehat A^{(\cdot,H)}\widehat Z)^\top \\ Z^\top -\widehat Z^\top \end{bmatrix}\right\|_{2,\infty} +\quad\left\|\begin{bmatrix}\left(W_V^{(\cdot,1)}-\widehat W_V^{(\cdot,1)}\right)^\top (\widehat A^{(\cdot,1)}\widehat Z)^\top \\ \vdots \\ \left(W_V^{(\cdot,H)}-\widehat W_V^{(\cdot,H)}\right)^\top (\widehat A^{(\cdot,H)}\widehat Z)^\top \end{bmatrix}\right\|_{2,\infty}\nonumber\\
  \le~&\left\|\begin{bmatrix}W_V{^{(\cdot,1)}}^\top  & & & \\ & \ddots & & \\ & & W_V{^{(\cdot,H)}}^\top  & \\ & & & I \end{bmatrix}\right\|_2\left\|\begin{bmatrix}(A^{(\cdot,1)}Z)^\top -\widehat A^{(\cdot,1)}\widehat Z)^\top \\ \vdots \\ (A^{(\cdot,H)}Z)^\top -\widehat A^{(\cdot,H)}\widehat Z)^\top \\ Z^\top -\widehat Z^\top \end{bmatrix}\right\|_{2,\infty} +\quad\left\|\begin{bmatrix}\left(W_V^{(\cdot,1)}-\widehat W_V^{(\cdot,1)}\right)^\top (\widehat A^{(\cdot,1)}\widehat Z)^\top \\ \vdots \\ \left(W_V^{(\cdot,H)}-\widehat W_V^{(\cdot,H)}\right)^\top (\widehat A^{(\cdot,H)}\widehat Z)^\top \end{bmatrix}\right\|_{2,\infty}\nonumber\\
  \le~&\max\left(\max_{h\in[H]}\|W_V^{(\cdot,h)}\|_2,1\right)\cdot \left\|\begin{bmatrix}(A^{(\cdot,1)}Z)^\top -\widehat A^{(\cdot,1)}\widehat Z)^\top \\ \vdots \\ (A^{(\cdot,H)}Z)^\top -\widehat A^{(\cdot,H)}\widehat Z)^\top \\ Z^\top -\widehat Z^\top \end{bmatrix}\right\|_{2,\infty} + \quad \sqrt{H}\max_{h\in[H]} \left\|\left(W_V^{(\cdot,h)}-\widehat W_V^{(\cdot,h)}\right)^\top \left(\widehat A^{(\cdot,h)}\widehat Z\right)^\top \right\|_{2,\infty}\nonumber\\
  \le~&\max\left(\max_{h\in[H]}\|W_V^{(\cdot,h)}\|_2,1\right)\cdot\left(\left\|Z^\top \left[A{^{(\cdot,1)}}^\top -\widehat A{^{(\cdot,1)}}^\top , \ldots, A{^{(\cdot,H)}}^\top  - \widehat A{^{(\cdot,H)}}^\top \right]\right\|_{2,\infty}\right.\nonumber\\
  &\hspace{10mm}+\left.
  \left\|(Z-\widehat Z)^\top \left[\widehat A{^{(\cdot,1)}}^\top ,\ldots,\widehat A{^{(\cdot,H)}}^\top , I \right]\right\|_{2,\infty}\right) + \sqrt{H}\max_{h\in[H]} \left\|\left(W_V^{(\cdot,h)}-\widehat W_V^{(\cdot,h)}\right)^\top \left(\widehat A^{(\cdot,h)}\widehat Z\right)^\top \right\|_{2,\infty}.\label{eq:AZ-A_hatZ_hat}
\end{align}
\end{allowdisplaybreaks}
Using $\|Pv\|_2 \le \|P\|_{2,\infty}\|v\|_1$ (and therefore $\|PQ\|_{2,\infty}\le \|P\|_{2,\infty}\|Q\|_{1,\infty}$):
\begin{equation}\label{eq:(Z-Z_hat)A_hat}
  \left\|(Z-\widehat Z)^\top \left[\widehat A{^{(\cdot,1)}}^\top ,\ldots,\widehat A{^{(\cdot,H)}}^\top , I \right]\right\|_{2,\infty} 
  \le \left\|(Z-\widehat Z)^\top \right\|_{2,\infty} \left\|\left[\widehat A{^{(\cdot,1)}}^\top ,\ldots,\widehat A{^{(\cdot,H)}}^\top , I \right]\right\|_{1,\infty} 
  = \left\|Z^\top -\widehat Z^\top \right\|_{2,\infty},
\end{equation}
and
\begin{allowdisplaybreaks}
\begin{align}
  \left\|\left(W_V^{(\cdot,h)}-\widehat W_V^{(\cdot,h)}\right)^\top \left(\widehat A^{(\cdot,h)}\widehat Z\right)^\top \right\|_{2,\infty}
  &\le \left\|\left(W_V^{(\cdot,h)}-\widehat W_V^{(\cdot,h)}\right)^\top  \widehat Z^\top \right\|_{2,\infty} \left\|\widehat A{^{(\cdot,h)}}^\top \right\|_{1,\infty}\nonumber\\
  &= \left\|\left(W_V^{(\cdot,h)}-\widehat W_V^{(\cdot,h)}\right)^\top  \widehat Z^\top \right\|_{2,\infty}. \label{eq:(W-W_hat)AZ}
\end{align}
\end{allowdisplaybreaks}
Using $\|PQ\|_{2,\infty}\le \|P\|_{2,\infty}\|Q\|_{1,\infty}$,  $\|PQ\|_{\infty,\infty} = \max_{i,j}|P_{i,:}Q_{:,j}| \le \max_i \|P_{i,:}^\top \|_2 \max_j\|Q_{:,j}\|_2 = \|P^\top \|_{2,\infty}\|Q\|_{2,\infty}$, and  \Cref{lem:lip_softmax},
\begin{allowdisplaybreaks}
\begin{align}
  &\left\|Z^\top \left[A{^{(\cdot,1)}}^\top -\widehat A{^{(\cdot,1)}}^\top , \ldots, A{^{(\cdot,H)}}^\top  - \widehat A{^{(\cdot,H)}}^\top \right]\right\|_{2,\infty} \nonumber\\
  =~& \max_{h \in [H]} \left\|Z^\top \left(A{^{(\cdot,h)}}^\top -\widehat A{^{(\cdot,h)}}^\top \right)\right\|_{2,\infty}\nonumber\\
  \le~& \max_{h\in[H]}\|Z^\top \|_{2,\infty} \left\|A{^{(\cdot,h)}}^\top -\widehat A{^{(\cdot,h)}}^\top \right\|_{1,\infty}\nonumber\\
  \le~& \max_{h\in[H]}2\|Z^\top \|_{2,\infty} \left\|A{^{(\cdot,h)}}^\top -\widehat A{^{(\cdot,h)}}^\top \right\|_{\infty,\infty}\nonumber\\
  =~& \max_{h\in[H]}2\|Z^\top \|_{2,\infty} \left\|P\left(W_K{^{(\cdot,h)}}W_Q{^{(\cdot,h)}}^\top  - \widehat W_K{^{(\cdot,h)}}\widehat W_Q{^{(\cdot,h)}}^\top \right)P^\top \right\|_{\infty,\infty}\nonumber\\
  \le~&\max_{h\in[H]}2\|Z^\top \|_{2,\infty}\|P^\top \|_{2,\infty} \left\|\left(W_Q{^{(\cdot,h)}}W_K{^{(\cdot,h)}}^\top  - \widehat W_Q{^{(\cdot,h)}}\widehat W_K{^{(\cdot,h)}}^\top \right)P^\top \right\|_{2,\infty}\label{eq:Z(A-A_hat)}
\end{align}
\end{allowdisplaybreaks}
Combining \Cref{eq:AZ-A_hatZ_hat,eq:(Z-Z_hat)A_hat,eq:(W-W_hat)AZ,eq:Z(A-A_hat)} gives the result.
\end{proof}

We introduce one more term to simplify the notation. For $\ell = 1,2,\ldots, L$ denote 
\begin{equation}\label{eq:g}
  g^{(\ell)}(X;W^{1:\ell}) = \left(\bigoplus_{h=1}^H A^{(\ell,h)}F^{(\ell-1)}(X;W^{1:\ell-1})W_V^{(\ell,h)}\right) \oplus F^{(\ell-1)}(X;W^{1:\ell-1})
\end{equation}
where $F^{(\ell-1)}(X;W^{1:\ell-1})$ and $A^{(\ell,h)}$ are the same as in \Cref{eq:F}. That is, $g^{(\ell)}(X;W^{1:\ell})$ is the input to the MLP in the $\ell$-th layer of the positional transformer architecture.

\begin{lemma}\label{lem:lip_pos_arch}
For any $W^{1:\ell}$ and $\widehat W^{1:\ell}$,
\begin{allowdisplaybreaks}
\begin{align*}
  &\left\|\left(F^{(\ell)}(X;W^{1:\ell})-F^{(\ell)}(X;\widehat W^{1:\ell})\right)^\top \right\|_{2,\infty}\\
  &\le~ \left\|\left(W_2^{(\ell)}-\widehat W_2^{(\ell)}\right)^\top \sigma\left(g^{(\ell)}(X;\widehat W^{1:\ell})\widehat W_1^{(\ell)}\right)^\top \right\|_{2,\infty}\\
  &\hspace{10mm} + L_\sigma \|W_2^{(\ell)}\|_2 \left\|\left(W_1^{(\ell)}-\widehat W_1^{(\ell)}\right)^\top g^{(\ell)}(X;\widehat W^{1:\ell})^\top \right\|_{2,\infty}\\
  &\hspace{10mm} +  L_\sigma \|W_1^{(\ell)}\|_2 \|W_2^{(\ell)}\|_2 \max\big(\max_{h\in[H]}\|W_V^{(\ell,h)}\|_2, 1\big)\left\|\left(F^{(\ell-1)}(X;W^{1:\ell-1})-F^{(\ell-1)}(X;\widehat W^{1:\ell-1})\right)^\top \right\|_{2,\infty}\\
  &\hspace{10mm} +  2 L_\sigma \|W_1^{(\ell)}\|_2 \|W_2^{(\ell)}\|_2 \|F^{(\ell-1)}(X;W^{1:\ell-1})^\top \|_{2,\infty}\|P^\top \|_{2,\infty}\max\big(\max_{h\in[H]}\|W_V^{(\ell,h)}\|_2, 1\big) \\
  &\hspace{20mm} \cdot \max_{h\in[H]}\left\|\left(W_Q^{(\ell,h)}{W_K^{(\ell,h)}}^\top -\widehat{W}_Q^{(\ell,h)}\widehat{W}_K{^{(\ell,h)}}^\top \right)P^\top \right\|_{2,\infty}\\
  &\hspace{10mm} +  L_\sigma \|W_1^{(\ell)}\|_2 \|W_2^{(\ell)}\|_2 \sqrt{H}\max_{h\in[H]} \left\|\left(W_V^{(\ell,h)}-\widehat W_V^{(\ell,h)}\right)^\top  F^{(\ell-1)}(X;W^{1:\ell-1})^\top \right\|_{2,\infty}
\end{align*}
\end{allowdisplaybreaks}
\end{lemma}

\begin{proof}
Using the definition of $F^{(\ell)}(X;W^{1:\ell})$ in \Cref{eq:F} and the definition of $g^{(\ell)}(X;W^{1:\ell})$ in \Cref{eq:g}:
\begin{allowdisplaybreaks}
\begin{align}
  &\hspace{4.6mm}\left\|\left(F^{(\ell)}(X;W^{1:\ell})-F^{(\ell)}(X;\widehat W^{1:\ell})\right)^\top \right\|_{2,\infty}&&\nonumber\\
  &=~\left\|W_2{^{(\ell)}}^\top \sigma\left(g^{(\ell)}(X;W^{1:\ell})W_1^{(\ell)}\right)^\top  - \widehat W_2{^{(\ell)}}^\top \sigma\left(g^{(\ell)}(X;\widehat W^{1:\ell})\widehat W_1^{(\ell)}\right)^\top \right\|_{2,\infty}\nonumber\\
  &\mbox{(Using triangle inequality):}\nonumber\\
  &\le~\left\|\left(W_2{^{(\ell)}}-\widehat W_2{^{(\ell)}}\right)^\top \sigma\left(g^{(\ell)}(X;\widehat W^{1:\ell})\widehat W_1^{(\ell)}\right)^\top \right\|_{2,\infty}\label{eq:(W-W_hat)g_hatW_hat}\\
  &\hspace{10mm} + ~ \left\|W_2{^{(\ell)}}^\top \left(\sigma\left(g^{(\ell)}(X;W^{1:\ell})W_1^{(\ell)}\right)-\sigma\left(g^{(\ell)}(X;\widehat W^{1:\ell})\widehat W_1^{(\ell)}\right)\right)^\top \right\|_{2,\infty}\label{eq:W(gW-g_hatW_hat)}.
\end{align}
\end{allowdisplaybreaks}
The term in (\ref{eq:(W-W_hat)g_hatW_hat}) gives the first part of the required bound. We focus on the term in (\ref{eq:W(gW-g_hatW_hat)}) below and show that it is bounded by the second to the last parts of the required bound.
\begin{allowdisplaybreaks}
\begin{align*}
  &\hspace{4.6mm}\left\|W_2{^{(\ell)}}^\top \left(\sigma\left(g^{(\ell)}(X;W^{1:\ell})W_1^{(\ell)}\right)-\sigma\left(g^{(\ell)}(X;\widehat W^{1:\ell})\widehat W_1^{(\ell)}\right)\right)^\top \right\|_{2,\infty}\nonumber\\
  &\mbox{(Using $\|PQ\|_{2,\infty}\le\|P\|_2\|Q\|_{2,\infty}$ and $\|P\|_2 = \|P^\top \|_2$):}\nonumber\\
  &\le~\|W_2^{(\ell)}\|_2\left\|\left(\sigma\left(g^{(\ell)}(X;W^{1:\ell})W_1^{(\ell)}\right)-\sigma\left(g^{(\ell)}(X;\widehat W^{1:\ell})\widehat W_1^{(\ell)}\right)\right)^\top \right\|_{2,\infty}\nonumber\\
  &\mbox{(Using Lipschitzness of $\sigma$ and $\sigma(0)=0$):}\nonumber\\
  &\le~L_\sigma \|W_2^{(\ell)}\|_2 \left\|\left(g^{(\ell)}(X;W^{1:\ell})W_1^{(\ell)}-g^{(\ell)}(X;\widehat W^{1:\ell})\widehat W_1^{(\ell)}\right)^\top \right\|_{2,\infty}\nonumber\\
  &\mbox{(Using triangle inequality):}\nonumber\\
  &\le~L_\sigma \|W_2^{(\ell)}\|_2 \left\|\left(W_1^{(\ell)}-\widehat W_1^{(\ell)}\right)^\top  g^{(\ell)}(X;\widehat W^{1:\ell})^\top \right\|_{2,\infty}\nonumber\\
  &\hspace{10mm} + L_\sigma \|W_2^{(\ell)}\|_2\left\|W_1{^{(\ell)}}^\top \left(g^{(\ell)}(X;W^{1:\ell}) - g^{(\ell)}(X;\widehat W^{1:\ell})\right)^\top \right\|_{2,\infty}\nonumber\\
  &\mbox{(Using $\|PQ\|_{2,\infty}\le\|P\|_2\|Q\|_{2,\infty}$ and $\|P\|_2 = \|P^\top \|_2$):}\nonumber\\
  &\le~L_\sigma \|W_2^{(\ell)}\|_2 \left\|\left(W_1^{(\ell)}-\widehat W_1^{(\ell)}\right)^\top  g^{(\ell)}(X;\widehat W^{1:\ell})^\top \right\|_{2,\infty}\nonumber\\
  &\hspace{10mm} + L_\sigma \|W_1^{(\ell)}\|_2\|W_2^{(\ell)}\|_2\left\|\left(g^{(\ell)}(X;W^{1:\ell}) - g^{(\ell)}(X;\widehat W^{1:\ell})\right)^\top \right\|_{2,\infty}\nonumber\\
  &\mbox{(Using \Cref{lem:lip_pos_attn} and definition of $g^{(\ell)}(X;W^{1:\ell})$):}\nonumber\\
  &\le~L_\sigma \|W_2^{(\ell)}\|_2 \left\|\left(W_1^{(\ell)}-\widehat W_1^{(\ell)}\right)^\top  g^{(\ell)}(X;\widehat W^{1:\ell})^\top \right\|_{2,\infty}\nonumber\\
  &\hspace{10mm} + ~ L_\sigma \|W_1^{(\ell)}\|_2 \|W_2^{(\ell)}\|_2 \max\big(\max_{h\in[H]}\|W_V^{(\ell,h)}\|_2, 1\big) \cdot \Bigg(\left\|\left(F^{(\ell-1)}(X;W^{1:\ell-1})-F^{(\ell-1)}(X;\widehat W^{1:\ell-1})\right)^\top \right\|_{2,\infty}\nonumber\\
  &\hspace{25mm}  + ~  2 \|F^{(\ell-1)}(X;W^{1:\ell-1})^\top \|_{2,\infty}\|P^\top \|_{2,\infty} \max_{h\in[H]}\left\|\left(W_Q^{(\ell,h)}{W_K^{(\ell,h)}}^\top -\widehat{W}_Q^{(\ell,h)}\widehat{W}_K{^{(\ell,h)}}^\top \right)P^\top \right\|_{2,\infty}\Bigg)\\
  &\hspace{10mm} + ~ L_\sigma \|W_1^{(\ell)}\|_2 \|W_2^{(\ell)}\|_2\sqrt{H}\max_{h\in[H]} \left\|\left(W_V^{(\ell,h)}-\widehat W_V^{(\ell,h)}\right)^\top  F^{(\ell-1)}(X;W^{1:\ell-1})^\top \right\|_{2,\infty}.
\end{align*}
\end{allowdisplaybreaks}
This gives the required result.
\end{proof}

\subsection{Covering Number Upper Bound}
\label{app:gen_cov}

Let $X^{(1)}, X^{(2)},\ldots, X^{(m)} \in \mathbb{R}^{n \times d}$ be a collection of inputs. Let $\mathcal{F}$ be the class of functions $\mathbb{R}^{n \times d} \rightarrow \mathbb{R}$ as defined in \Cref{eq:bounded_norm_arch_class}. Given $\epsilon > 0$, let $\mathcal{C} \subset \mathcal{F}$ be such that, for all $f\in\mathcal{F}$ there is $\hat{f} \in \mathcal{C}$ such that
\[
  \max_{i \in [m]} |f(X^{(i)}) - \hat{f}(X^{(i)})| \le \epsilon.
\]
Our goal is to bound the size of the cover set $\mathcal{C}$. To do this we will inductively construct $\mathcal{C}$. Let us start with a simple bound on the norm of the output. 

For $W^{1:1}$ satisfying the norm bounds in \Cref{eq:bounded_norm_arch_class}, using $\|Pv\|_2 \le \|P\|_2\|v\|_2$ repeatedly, and the fact that $\sigma$ is $L_\sigma$-Lipschitz with $\sigma(0)=0$,
\[
  \left\|F^{(1)}(X;W^{1:1})^\top \right\|_{2,\infty} \le L_\sigma\|W_2^{(1)}\|_2\|W_1^{(1)}\|_2\sqrt{1+\sum_{h\in H}\|W_V^{(1,h)}\|_2^2}\|X^\top \|_{2,\infty} \le L_\sigma B_2^2\sqrt{1+HB_2^2}\|X^\top \|_{2,\infty},
\]
and inductively for each $\ell \in [L]$ and $W^{1:\ell}$ satisfying the norm bounds in \Cref{eq:bounded_norm_arch_class} we get
\[
  \left\|F^{(\ell)}(X;W^{1:\ell})^\top \right\|_{2,\infty} \le B^{(\ell)}\|X^\top \|_{2,\infty}, \; \mbox{where} \; B^{(\ell)} := L_{\sigma}^\ell B_2^{2\ell}(1+HB_2^2)^{\ell/2}.
\]

Let us now build a cover $\mathcal{C}^{(1)}$ for the first layer of the architecture. For any $X^{(1)}, X^{(2)},\ldots, X^{(m)} \in \mathbb{R}^{n \times d}$ satisfying $\|X{^{(i)}}^\top \|_{2,\infty}\le B_X$ for all $i$, and $W,\widehat W \in \mathcal{W} = \{W \in \mathbb{R}^{d \times d'}: \|W^\top \|_{2,1} \le B_W\}$, because
\[
  \max_{i\in[m]}\left\|(W - \widehat W)^\top  X{^{(i)}}^\top \right\|_{2,\infty} = \max_{i\in[m],j\in[n]}\left\|(W - \widehat W)^\top X{^{(i)}_{j,:}}^\top \right\|_2,
\]
we may apply \Cref{lem:basic_cover} and get that there is an $\varepsilon_W$-cover $\widehat{\mathcal{W}}\subset \mathbb{R}^{d\times d'}$ such that
\[
  \log |\widehat{\mathcal{W}}| \lesssim \frac{B_W^2B_X^2}{\varepsilon_W^2}\log(dmn),
  \quad\mbox{and}\quad
  \max_{W \in \mathcal{W}}\min_{\widehat W \in \widehat{\mathcal{W}}}\max_{i\in[m]}\left\|(W - \widehat W)^\top  X{^{(i)}}^\top \right\|_{2,\infty} \le \varepsilon_W.
\]
Therefore, given inputs $X^{(1)}, X^{(2)},\ldots, X^{(m)} \in \mathbb{R}^{n \times d}$ such that $\|X{^{(i)}}^\top \|_{2,\infty}\le B_X$ for all $i$, using \Cref{lem:basic_cover} we get for each $h\in[H]$ there is an $\varepsilon_V^{(1)}$-cover $\mathcal{C}_V^{(1,h)} \subset \mathbb{R}^{d \times d}$ with
\[
   \log \left|\mathcal{C}^{(1,h)}_V\right| \lesssim \frac{B_{2,1}^2B_X^2}{\varepsilon{^{(1)}_V}^2}\log(dmn)
\]
such that for all $h\in[H]$ and $W_V^{(1,h)}$ satisfying $\|W_V^{(1,h)}\|_{2,1} \le B_{2,1}$ there is $\widehat W_V^{(1,h)}\in \mathcal{C}_V^{(1,h)}$ such that
\[
  \max_{i\in[m]}\left\|\left(W_V^{(1,h)}-\widehat W_V^{(1,h)}\right)^\top X{^{(i)}}^\top \right\|_{2,\infty} \le \varepsilon{^{(1)}_V}.
\]
It follows that there is an $\varepsilon_V^{(1)}$-cover $\mathcal{C}_V^{(1)} \subset \bigotimes_{h=1}^H\mathbb{R}^{d \times d}$ with 
\[
  |\mathcal{C}_V^{(1)}| \le \prod_{h=1}^H |\mathcal{C}_V^{(1,h)}| \quad \implies \quad \log \left|\mathcal{C}^{(1)}_V\right| \lesssim \frac{HB_{2,1}^2B_X^2}{\varepsilon{^{(1)}_V}^2}\log(dmn)
\]
such that for any $W_V^{(1,1)},W_V^{(1,2)},\ldots,W_V^{(1,H)}$ satisfying $\|W_V^{(1,h)}\|_{2,1} \le B_{2,1}$ for all $h$, there is $(\widehat W_V^{(1,1)},\ldots,\widehat W_V^{(1,H)})\in \mathcal{C}_V^{(1)}$ such that
\[
  \max_{h\in[H]}\max_{i\in[m]}\left\|\left(W_V^{(1,h)}-\widehat W_V^{(1,h)}\right)^\top X{^{(i)}}^\top \right\|_{2,\infty} \le \varepsilon{^{(1)}_V}.
\]
Similarly, there is an $\varepsilon_{QK}^{(1)}$-cover $\mathcal{C}_{QK}^{(1)} \subset \prod_{h=1}^H\mathbb{R}^{d_P \times d_P}$ with
\[
   \log \left|\mathcal{C}_{QK}^{(1)}\right| \lesssim \frac{HB_{2,1}^2B_P^2}{\varepsilon{^{(1)}_{QK}}^2}\log(d_Pn)
\]
such that for any $W_Q^{(1,1)}W_K{^{(1,1)}}^\top ,W_Q^{(1,2)}W_K{^{(1,2)}}^\top ,\ldots,W_Q^{(1,H)}W_K{^{(1,H)}}^\top $ satisfying $\|W_Q^{(1,h)}W_K{^{(1,h)}}\|_{2,1} \le B_{2,1}$ for all $h$, there is $(\widehat W_Q^{(1,1)}\widehat W_K{^{(1,1)}}^\top ,\ldots,\widehat W_Q^{(1,H)}\widehat W_K{^{(1,H)}}^\top )\in \mathcal{C}_{QK}^{(1)}$ such that
\[
  \max_{h\in[H]}\left\|\left(W_Q^{(1,h)}W_K{^{(1,h)}}^\top -\widehat W_Q^{(1,h)}\widehat W_K{^{(1,h)}}^\top \right)^\top  P^\top \right\|_{2,\infty} \le \varepsilon{^{(1)}_{QK}}.
\]
Given $(\widehat W_Q^{(1,1)}\widehat W_K{^{(1,1)}}^\top ,\ldots,\widehat W_Q^{(1,H)}\widehat W_K{^{(1,H)}}^\top )\in \mathcal{C}_{QK}^{(1)}$ and $(\widehat W_V^{(1,1)},\ldots,\widehat W_V^{(1,H)})\in \mathcal{C}_V^{(1)}$, we have that $\|g^{(1)}(X;\widehat{W}^{1:1})\|_{2,\infty}\le B^{(1)}B_X$. Therefore, using \Cref{lem:basic_cover} there is an $\varepsilon^{(1)}_{W_1}$-cover $\mathcal{C}^{(1)}_{W_1} \subset \mathbb{R}^{d(H+1) \times d}$ with
\[
  \log \left|\mathcal{C}^{(1)}_{W_1}\right| \lesssim \frac{B{^{(1)}}^2 B_{2,1}^2 B_X^2}{\varepsilon{^{(1)}_{W_1}}^2}\log(dHmn)
\]
such that for any $W_1^{(1)}$ satisfying $\|W_1^{(1)}\|_{2,1}\le B_{2,1}$ there is $\widehat W_1^{(1)} \in \mathcal{C}^{(1)}_{W_1}$ such that
\[
  \left\|\left(W_1^{(1)} - \widehat W_1^{(1)}\right)^\top  g^{(1)}(X;\widehat W^{1:1})^\top \right\|_{2,\infty} \le \varepsilon{^{(1)}_{W_1}}.
\]
Similarly, given $(\widehat W_Q^{(1,1)}\widehat W_K{^{(1,1)}}^\top ,\ldots,\widehat W_Q^{(1,H)}\widehat W_K{^{(1,H)}}^\top )\in \mathcal{C}_{QK}^{(1)}$, $(\widehat W_V^{(1,1)},\ldots,\widehat W_V^{(1,H)})\in \mathcal{C}_V^{(1)}$, and $\widehat W_1^{(1)}\in\mathcal{C}_{W_1}^{(1)}$, there is an $\varepsilon^{(1)}_{W_2}$-cover $\mathcal{C}^{(1)}_{W_2} \in \mathbb{R}^{d\times d}$ with
\[
  \log \left|\mathcal{C}^{(1)}_{W_2}\right| \lesssim \frac{B{^{(1)}}^2 B_{2,1}^2 B_X^2}{\varepsilon{^{(1)}_{W_2}}^2}\log(dmn)
\]
such that for any $W_2^{(1)}$ satisfying $\|W_2^{(1)}\|_{2,1}\le B_{2,1}$ there is $\widehat W_2^{(1)} \in \mathcal{C}^{(1)}_{W_2}$ such that
\[
  \left\|\left(W_2^{(1)}-\widehat W_2^{(1)}\right)^\top \sigma\left(g^{(1)}(X;\widehat W^{1:1})\widehat W_1^{(1)}\right)^\top \right\|_{2,\infty} \le \varepsilon^{(1)}_{W_2}.
\]
Therefore, using \Cref{lem:lip_pos_arch}, an $\varepsilon^{(1)}$-cover $\mathcal{C}^{(1)}$ for the first layer of the architecture, where
\[
  \varepsilon^{(1)} = \varepsilon^{(1)}_{W_2} + L_\sigma B_2\varepsilon^{(1)}_{W_1} + 2 L_\sigma B_2^3  B_P \varepsilon^{(1)}_{QK} + \sqrt{H}L_\sigma B_2^2 \varepsilon^{(1)}_{V},
\]
has size
\begin{align*}
  &|\mathcal{C}^{(1)}| 
  \le 
  |\mathcal{C}^{(1)}_{V}| 
  \cdot 
  |\mathcal{C}^{(1)}_{QK}| 
  \cdot 
  \max\left\{\left|\mathcal{C}_{W_1}^{(1)}\left(\widehat W_V^{(1,h)}, \widehat W_Q^{(1,h)}\widehat W_K{^{(1,h)}}^\top ,\forall h\right)\right|:\left\{\widehat W_V^{(1,h)}\right\}_{h=1}^H \in \mathcal{C}^{(1)}_{V}, \left\{\widehat W_Q^{(1,h)} \widehat W_K{^{(1,h)}}^\top \right\}_{h=1}^H \in \mathcal{C}^{(1)}_{QK}\right\}\\
  &\cdot \max\left\{\left|\mathcal{C}_{W_2}^{(1)}\left(\widehat W_V^{(1,h)}, \widehat W_Q^{(1,h)}\widehat W_K{^{(1,h)}}^\top ,\forall h; \widehat W_1^{(1)}\right)\right|:\left\{\widehat W_V^{(1,h)}\right\}_{h=1}^H \in \mathcal{C}^{(1)}_{V}, \left\{\widehat W_Q^{(1,h)} \widehat W_K{^{(1,h)}}^\top \right\}_{h=1}^H \in \mathcal{C}^{(1)}_{QK}, \widehat W_1^{(1)}\in\mathcal{C}_{W_1}^{(1)}\right\}.
\end{align*}
This means that
\[
  \log |\mathcal{C}^{(1)}|\lesssim \left(\frac{HB_{2,1}^2B_X^2}{\varepsilon{^{(1)}_{V}}^2} + \frac{HB_{2,1}^2B_P^2}{\varepsilon{^{(1)}_{QK}}^2} + \frac{B{^{(1)}}^2B_{2,1}^2B_X^2}{\varepsilon{^{(1)}_{W_1}}^2} + \frac{B{^{(1)}}^2B_{2,1}^2B_X^2}{\varepsilon{^{(1)}_{W_2}}^2}\right)\log(H(d+d_P)mn).
\]
We now inductively construct a cover for deeper layers as follows. For each element $\widehat W^{1:\ell} \in \mathcal{C}^{1:\ell}$ where $\mathcal{C}^{1:\ell}$ denotes a cover for the architecture up to (and including) layer $\ell$ on inputs $X^{(1)},\ldots,X^{(m)}$, we construct an $\varepsilon'_{\ell+1}$-cover $\mathcal{C}^{(\ell+1)}(\widehat W^{1:\ell})$ on inputs $\left\{F^{(\ell)}(X^{(i)};\widehat W^{1:\ell})\right\}_{i\in[m]}$ for the $(\ell+1)$-th layer by following the same steps as above, where
\[
  \varepsilon'_{\ell+1} = \varepsilon^{(\ell+1)}_{W_2} + L_\sigma B_2\varepsilon^{(\ell+1)}_{W_1} + 2 L_\sigma B^{(\ell)} B_2^3  B_P \varepsilon^{(\ell+1)}_{QK} + \sqrt{H}L_\sigma B_2^2 \varepsilon^{(\ell+1)}_{V}.
\]
It follows that from \Cref{lem:lip_pos_arch} that
\[
  \mathcal{C}^{1:\ell+1} := \left\{(\widehat W^{1:\ell}, \widehat W^{(\ell)}) : \widehat W^{1:\ell} \in \mathcal{C}^{1:\ell}, \widehat W^{(\ell)}\in\mathcal{C}^{(\ell+1)}(\widehat W^{1:\ell})\right\}
\]
is an $\varepsilon^{(\ell+1)}$-cover of the architecture up to (and including) layer $\ell+1$, where
\[
  \varepsilon^{(\ell+1)} = L_\sigma B_2^3 \varepsilon^{(\ell)} + \varepsilon^{(\ell+1)}_{W_2} + L_\sigma B_2\varepsilon^{(\ell+1)}_{W_1} + 2 L_\sigma B^{(\ell)} B_2^3  B_P \varepsilon^{(\ell+1)}_{QK} + \sqrt{H}L_\sigma B_2^2 \varepsilon^{(\ell+1)}_{V},
\]
and
\[
  |\mathcal{C}^{1:\ell+1}| \le |\mathcal{C}^{1:\ell}| \cdot \max_{\widehat W^{1:\ell} \in \mathcal{C}^{1:\ell}} |\mathcal{C}^{(\ell+1)}(\widehat W^{1:\ell})|.
\]
By induction, we obtain an $\varepsilon^{(L)}$-cover $\mathcal{C}^{1:L}$ of the function $F^{(L)}(X;W^{1:L})$ on inputs $X^{(1)},\ldots,X^{(m)}$
with
\[
  \varepsilon^{(L)} = \sum_{\ell=1}^L \alpha^{(\ell)} \left( \varepsilon_{W_2}^{(\ell)} + \beta^{(\ell)}\varepsilon_{W_1}^{(\ell)} + \gamma^{(\ell)}\varepsilon_{QK}^{(\ell)} + \eta^{(\ell)}\varepsilon_V^{(\ell)}\right)
\]
where
\[
 \alpha^{(\ell)} = L_\sigma^{\ell-1} B_2^{3\ell-3}, \quad
 \beta^{(\ell)} = L_\sigma B_2, \quad
 \gamma^{(\ell)} = 2 L_\sigma B^{(\ell-1)} B_2^3 B_P, \quad
 \eta^{(\ell)} = \sqrt{H}L_\sigma B_2^2.
\]
The size of the cover satisfies
\[
  \log \big|\mathcal{C}^{1:L}\big|\lesssim \sum_{\ell=1}^L \left(\frac{HB_{2,1}^2B_X^2}{\varepsilon{^{(\ell)}_{V}}^2} + \frac{HB{^{(\ell-1)}}^2B_{2,1}^2B_P^2}{\varepsilon{^{(\ell)}_{QK}}^2} + \frac{B{^{(\ell)}}^2B_{2,1}^2B_X^2}{\varepsilon{^{(\ell)}_{W_1}}^2} + \frac{B{^{(\ell)}}^2B_{2,1}^2B_X^2}{\varepsilon{^{(\ell)}_{W_2}}^2}\right)\log(H(d+d_P)mn).
\]
To build the final cover $\mathcal{C}$ for the class of scalar-output architectures $\mathcal{F}$, note that for any $W^{1:L}, \widehat W^{1:L}, w, \widehat w$,
\begin{align*}
  &\left|w^\top  \left(F^{(L)}(X; W^{1:L})[n]\right) - \widehat w^\top  \left(F^{(L)}(X; \widehat W^{1:L})[n]\right)\right|\\
  \le~
  &\left|w^\top  \left(F^{(L)}(X; W^{1:L})[n] - F^{(L)}(X; \widehat W^{1:L})[n]\right)\right| + \left|(w - \widehat w)^\top  \left(F^{(L)}(X; \widehat W^{1:L})[n]\right)\right|.
\end{align*}
Therefore, we may apply \Cref{lem:basic_cover} and build $\mathcal{C}$ inductively for every element in $\mathcal{C}^{1:L}$, and get that $\mathcal{C}$ is an $\varepsilon$-cover for $\mathcal{F}$ with
\begin{equation}\label{eq:cover_epsilon}
  \varepsilon = \varepsilon_w + B_w \sum_{\ell=1}^L \alpha^{(\ell)} \left(\varepsilon_{W_2}^{(\ell)} + \beta^{(\ell)}\varepsilon_{W_1}^{(\ell)} + \gamma^{(\ell)}\varepsilon_{QK}^{(\ell)} + \eta^{(\ell)}\varepsilon_V^{(\ell)}\right)
\end{equation}
and that the size of $\mathcal{C}$ is bounded as
\begin{equation}\label{eq:cover_log_size}
  \log |\mathcal{C}|
  \lesssim 
  \frac{B{^{(L)}}^2B_X^2B_w^2}{\varepsilon_w^2}\log m + \log \big|\mathcal C^{1:L}\big|.
\end{equation}
To determine a good upper bound on $\log |\mathcal C|$, we pick $\varepsilon^{(\ell)}_V, \varepsilon^{(\ell)}_{QK}, \varepsilon^{(\ell)}_{W_1}, \varepsilon^{(\ell)}_{W_2}, \forall \ell \in [L]$, and $\varepsilon_w$ that minimize the right-hand size of \Cref{eq:cover_log_size} while satisfying \Cref{eq:cover_epsilon}. Invoking \Cref{lem:setting_epsilons}, this gives
\begin{allowdisplaybreaks}
\begin{align}
  \log |\mathcal C|
  &\lesssim \frac{1}{\varepsilon^2} 
  \Bigg( 
  \left(B{^{(L)}}^2B_X^2B_w^2\log m\right)^{1/3} 
  + 
  \left(B_w^2B_X^2B_{2,1}^2\log(H(d+d_P)mn)\right)^{1/3} \cdot \nonumber\\
  &\hspace{20mm}
  \sum_{\ell=1}^L \left(H^{1/3}\alpha{^{(\ell)}}^{2/3}\eta{^{(\ell)}}^{2/3} + H^{1/3}B{^{(\ell-1)}}^{2/3}B_P^{2/3}\alpha{^{(\ell)}}^{2/3}\gamma{^{(\ell)}}^{2/3} \right.\nonumber\\
  &\hspace{30mm}\left. + ~ B{^{(\ell)}}^{2/3}\alpha{^{(\ell)}}^{2/3}\beta{^{(\ell)}}^{2/3} + B{^{(\ell)}}^{2/3}\alpha{^{(\ell)}}^{2/3}\right)\Bigg)^3 \nonumber\\
  &\lesssim \frac{H B_w^2 B_X^2 B_P^2 B_{2,1}^2 B{^{(L)}}^2 \alpha{^{(L)}}^2 \left(\beta{^{(L)}}^2 + \gamma{^{(L)}}^2 + \eta{^{(L)}}^2\right)}{\varepsilon^2}\log(H(d+d_P)mn) \nonumber\\
  &\lesssim \frac{L_\sigma^{2L} B_2^{3L} B{^{(L)}}^2 H^2 B_w^2 B_{2,1}^2 B_X^2 B_P^4}{\varepsilon^2} \log(H(d+d_P)mn) \nonumber\\
  &=(H L_\sigma B_2)^{O(L)} \frac{B_w^2 B_{2,1}^2 B_X^2 B_P^4}{\varepsilon^2}\log(H(d+d_P)mn). \label{eq:cover_number_bound}
\end{align}
\end{allowdisplaybreaks}
Letting $\|w\|_2\le B_2$, $\|X^\top\|_{2,\infty}=\|P^\top\|_{2,\infty}=1$ and using $d\ge d_P$ gives
\begin{equation}
\label{eq:cover_number_bound_simp}
  \log |\mathcal{C}| \lesssim \frac{(HL_\sigma B_2)^{O(L)}B_{2,1}^2}{\varepsilon^2} \log(Hdmn).
\end{equation}

Combining \Cref{eq:cover_number_bound_simp} and \Cref{lem:generalization_from_cover} proves \Cref{thm:generalization}.

{\bf Remark.} Note that if we additionally apply layer normalization by projecting the output of each layer to the $\ell_2$ unit ball as in \citet{edelman2022inductive}, such that
\[
  \|g^{(\ell)}(X;W^{1:\ell})^\top \|_{2,\infty}, \|F^{(\ell)}(X;W^{1:\ell})^\top \|_{2,\infty} \le B^{(\ell)} = 1,
\]
then the bound on $\log |\mathcal C|$ in \Cref{eq:cover_number_bound_simp} becomes
\begin{equation}\label{eq:cover_number_bound_with_layer_normalization}
  \log |\mathcal C| \lesssim \frac{(L_\sigma B_2)^{O(L)} H^2 B_{2,1}^2}{\varepsilon^2} \log(Hdmn).
\end{equation}

\section{Experiments}
\label{appdx:experiments}
This section presents detailed results for the experiments reported in \Cref{sec:experiments} as well as additional results not present in the main paper. 
Most reported results combine in-distribution (ID) and out-of-distribution (OOD) in one plot. This is shown by the scale factor on the x-axis, which indicates the level of out-of-distribution. A scale of one indicates in-distribution, whereas a scale greater than one shows an increasing level of out-of-distribution.

\begin{itemize}[label={}, leftmargin=*]
  \item \ref{app:num} - \textbf{Numeric tasks} 
  \begin{itemize}[label={}]
    \item \ref{app:num_setting} - Experimental setting
    \item \ref{app:num_length} - Input length vs. generalization experiments
    \item \ref{app:num_sample} - Sample complexity experiments
    \item \ref{app:num_var} - Performance over variable input lengths 
    \item \ref{app:num_comp} - Performance of compact Transformers 
    \item \ref{app:num_posenc} - Performance with alternative positional encodings 
    \item \ref{app:num_ablation} - Ablation study on architectural design choices
    \item \ref{app:num_fine_tuning} - Performance on OOD data with fine-tuned V projection matrices
    \item \ref{app:num_gnns} - Empirical results using Graph Neural Networks (GNNs)
    \item \ref{app:num_acc} - Accuracy measures
  \end{itemize}
  \item \ref{app:indheads} - \textbf{$k$-hop induction heads}
  \begin{itemize}[label={}]
    \item \ref{app:indheads_setting} - Experimental setting
    \item \ref{app:indheads_sample} - Sample complexity experiment
    \item \ref{app:indheads_exp} - Expressive power experiment
  \end{itemize}
  \item \ref{app:nlp} - \textbf{Mixed-type input task}
  \begin{itemize}[label={}]
    \item \ref{app:nlp_task} - Task description
    \item \ref{app:nlp_setting} - Experimental setting
    \item \ref{app:nlp_sample} - Sample complexity experiments
    \item \ref{app:nlp_gpt2} - Experiments with GPT2
  \end{itemize}
\end{itemize}

\subsection{Numeric tasks}
\label{app:num}
\subsubsection{Experimental setting}
\label{app:num_setting}
All numeric tasks employ the same model configuration. The model uses the structure of \Cref{eq:arch}, augmented with linear encoding and decoding layers.

We compare the standard Transformer, which utilizes the attention mechanism in \Cref{eq:selfattn}, and the positional Transformer, which employs the attention defined in \Cref{eq:attn}. Both variants share the same number of layers and dimensional configurations, with any specific differences explicitly noted.

In all configurations, the total number of layers is set to $\lceil\log_2 n\rceil +1$, where $n$ denotes the maximum input length, and each layer uses $2$ attention heads.
Along with each input sequence, we also append an empty scratchpad entry. This extra entry does not count toward the total number of layers and is not used to compute the loss. It is included solely to aid in the computation of the tasks. For the function $\Phi^{(\ell)}$, we employ a 2-layer MLP with ReLU activation functions. The embedding dimension of the encoder and the hidden dimensions of the MLP are both set to $64$. 

We use one-hot encoded vectors of dimension $n$ for positional encodings, where the non-zero entry corresponds to the node position. Consequently, the embedding dimensions of $W_Q$ and $W_K$ are set to $n$. A key difference between the models is that standard Transformers concatenate positional encodings to the input, whereas positional Transformers supply positional information exclusively through the matrix $P$. Therefore, in positional Transformers, input values are solely encoded in the input, and positional information is exclusively encoded in the positional encoding matrix.

Both models are trained end-to-end 
using the squared loss between the predicted and target vectors of size $n$, with no intermediate supervision.
We train models with Adam, starting with a learning rate of $5\cdot 10^{-4}$ and a learning rate scheduler for a total of 2000 epochs.

Our training data consists of samples from the range $[-2, 2]$. To ensure diversity in the data, for each input sample, we first select lower and upper bounds $\gamma_l$ and $\gamma_u$ uniformly in $[-2, 2]$, and then for each of the $n$ elements of the input sample, we select its value uniformly from  $[\gamma_l, \gamma_u]$.

\subsubsection{Input length vs. Generalization experiments}
\label{app:num_length}
In this section, we validate that our results hold for multiple values of $n$. We train models for each fixed length $n \in \{2, 4, 8, 16, 32\}$ using $30,\!000$ samples across all settings. The model depth varies with the input length $n$, with the number of layers set to $\lceil \log_2 n \rceil + 1$. We report the performance in-distribution and out-of-distribution using $1,\!000$ test samples. As illustrated across \Cref{fig:app_sum_length,fig:app_min_length,fig:app_median_length,fig:app_sort_length,fig:app_maxsub_length}, positional Transformers exhibit comparable in-distribution loss to standard Transformers across various sequence lengths. Naturally, for a fixed number of samples, the in-distribution loss slightly increases as the sequence length grows, indicating the need for more samples.
For out-of-distribution performance, we observe that the positional Transformer maintains a more stable performance even with inputs of length $n=32$. In contrast, the standard Transformer's performance rapidly degrades with the input length.

\begin{figure}[htb]
    \centering
    \includegraphics[width=0.9\linewidth]{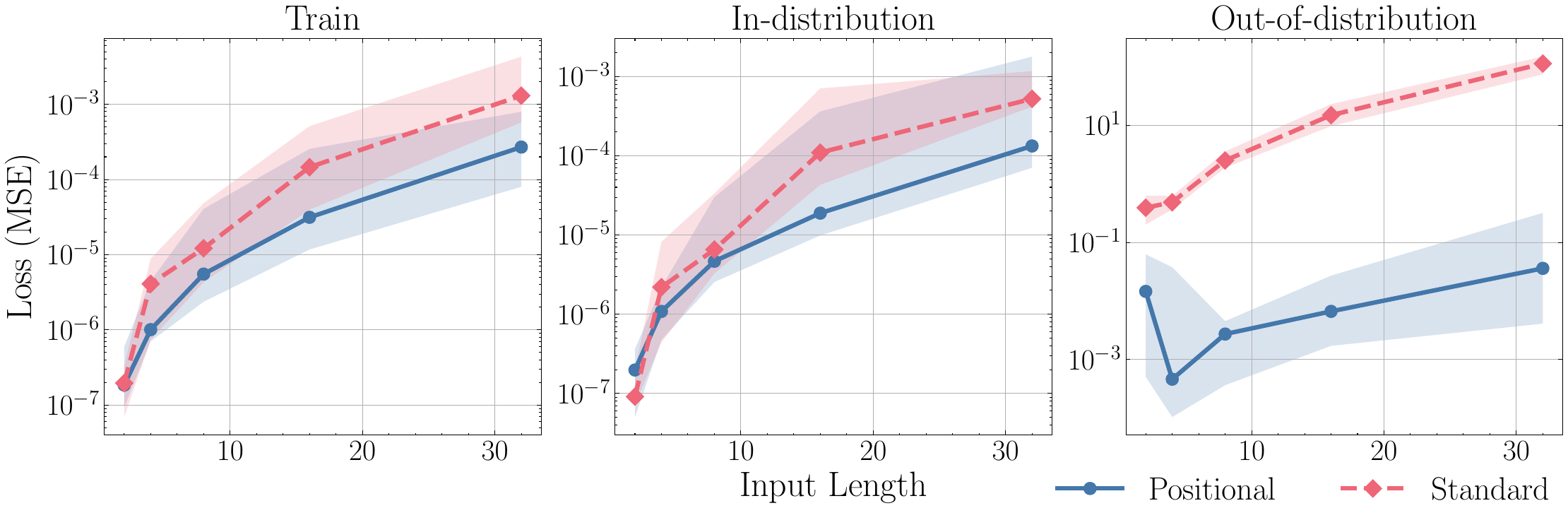}
    \vspace{-4mm}
    \caption{Training, in-distribution, and out-of-distribution test performance for the summing task is shown for standard Transformers (red) and positional Transformers (blue) across different input lengths. The x-axis indicates the fixed input length on which the model was trained. Models are trained on the range $[-2, 2]$ with 30,000 samples, validated on the same domain, and tested on an extended domain, $[-6, 6]$, each with 1,000 samples.}
    \label{fig:app_sum_length}
\end{figure}

\vspace{-5mm}

\begin{figure}[hbt]
    \centering
    \includegraphics[width=0.9\linewidth]{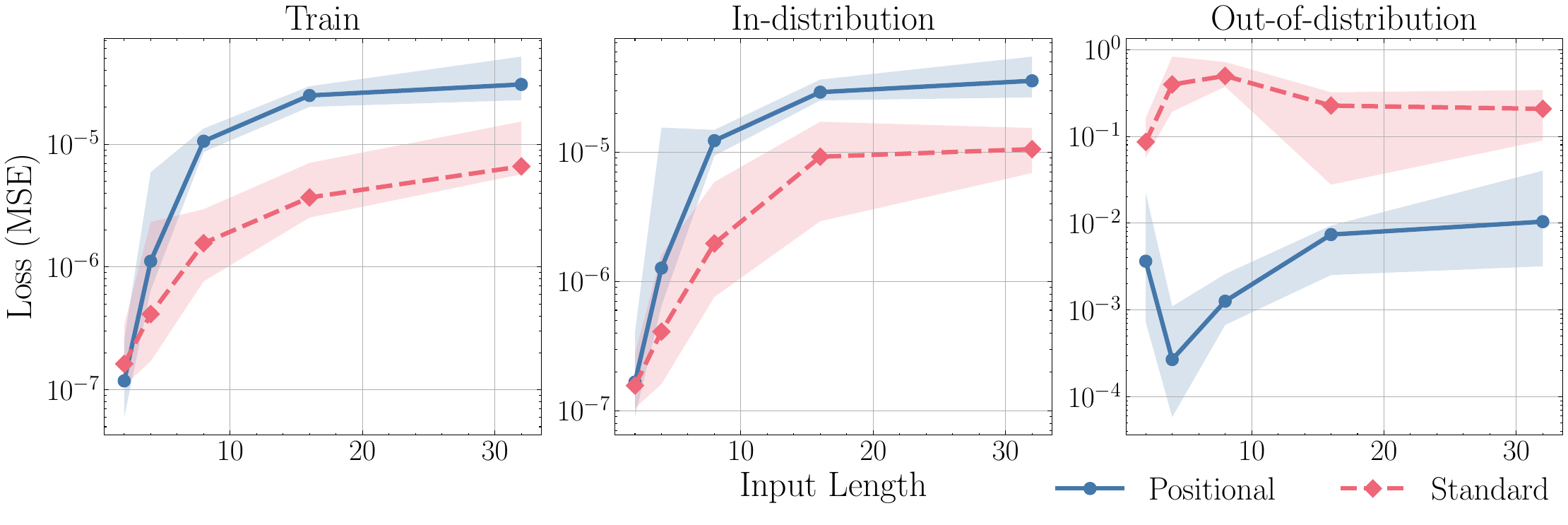}
    \vspace{-4mm}
    \caption{Training, in-distribution, and out-of-distribution test performance for the minimum task is shown for standard Transformers (red) and positional Transformers (blue) across different input lengths. The x-axis indicates the fixed input length on which the model was trained. Models are trained on the range $[-2, 2]$ with 30,000 samples, validated on the same domain, and tested on an extended domain, $[-6, 6]$, each with 1,000 samples.}
    \label{fig:app_min_length}
\end{figure}

\begin{figure}[H]
    \centering
\includegraphics[width=0.9\linewidth]{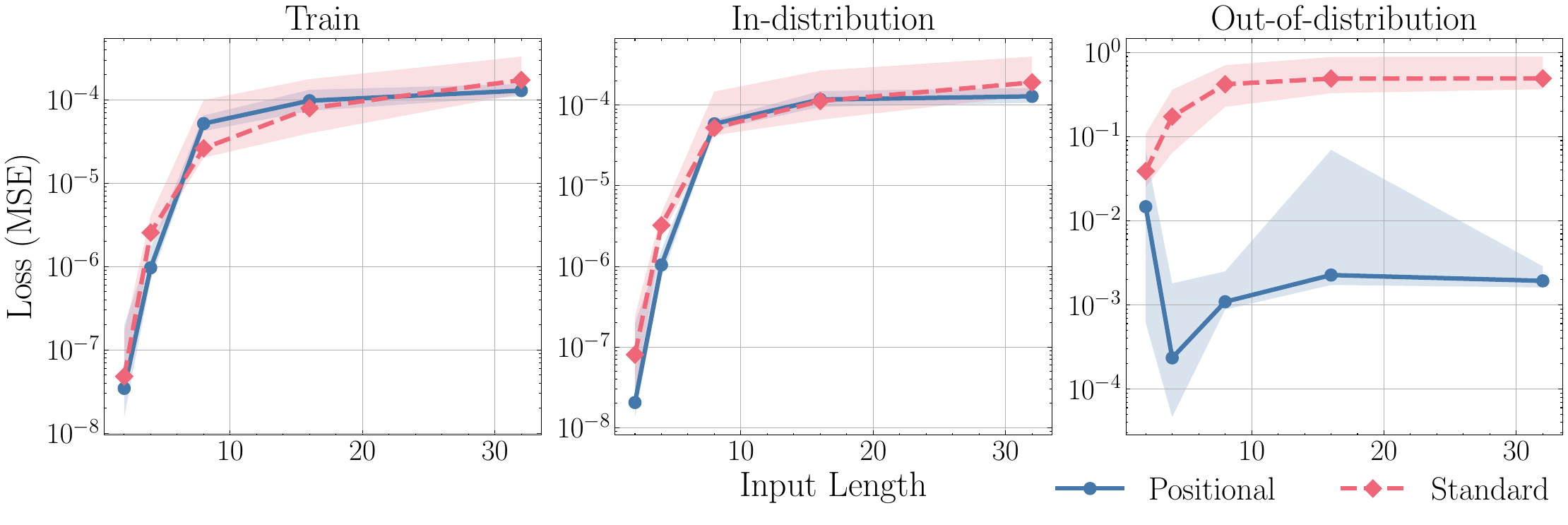}
    \vspace{-4mm}
    \caption{Training, in-distribution, and out-of-distribution test performance for the median task is shown for standard Transformers (red) and positional Transformers (blue) across different input lengths. The x-axis indicates the fixed input length on which the model was trained. Models are trained on the range $[-2, 2]$ with 30,000 samples, validated on the same domain, and tested on an extended domain, $[-6, 6]$, each with 1,000 samples.}
    \label{fig:app_median_length}
\end{figure}

\begin{figure}[H]
    \centering
    \includegraphics[width=0.9\linewidth]{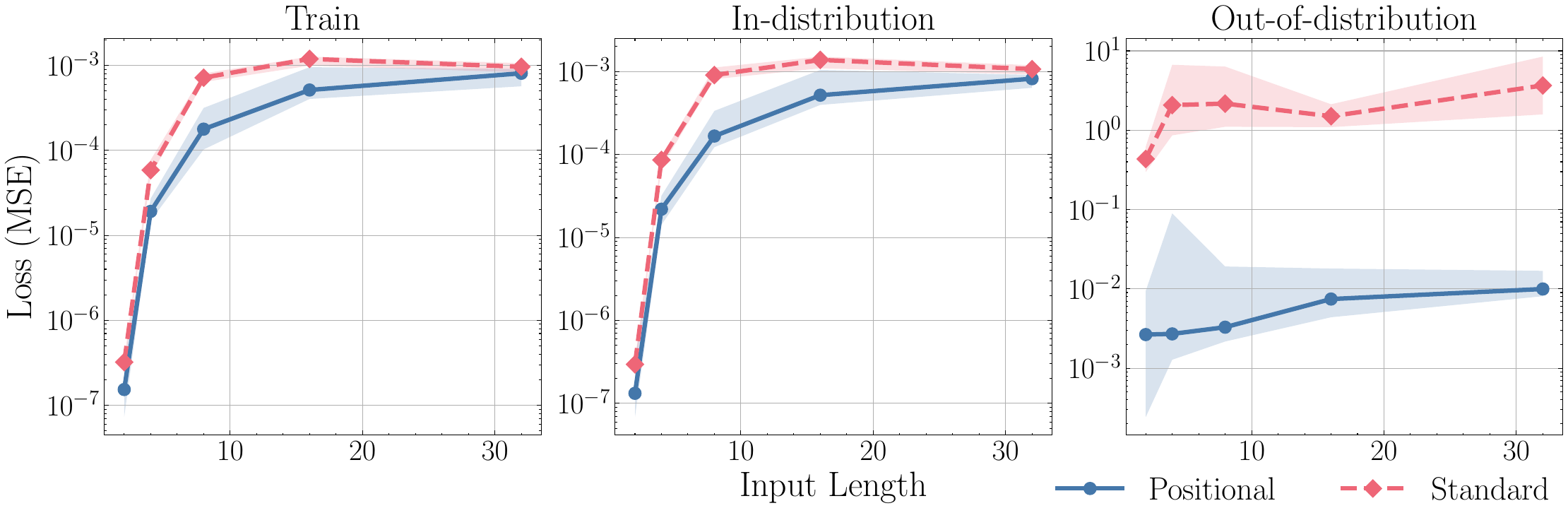}
    \caption{Training, in-distribution, and out-of-distribution test performance for the sorting task is shown for standard Transformers (red) and positional Transformers (blue) across different input lengths. The x-axis indicates the fixed input length on which the model was trained. Models are trained on the range $[-2, 2]$ with 30,000 samples, validated on the same domain, and tested on an extended domain, $[-6, 6]$, each with 1,000 samples.}
    \label{fig:app_sort_length}
\end{figure}

\begin{figure}[H]
    \centering
    \includegraphics[width=0.9\linewidth]{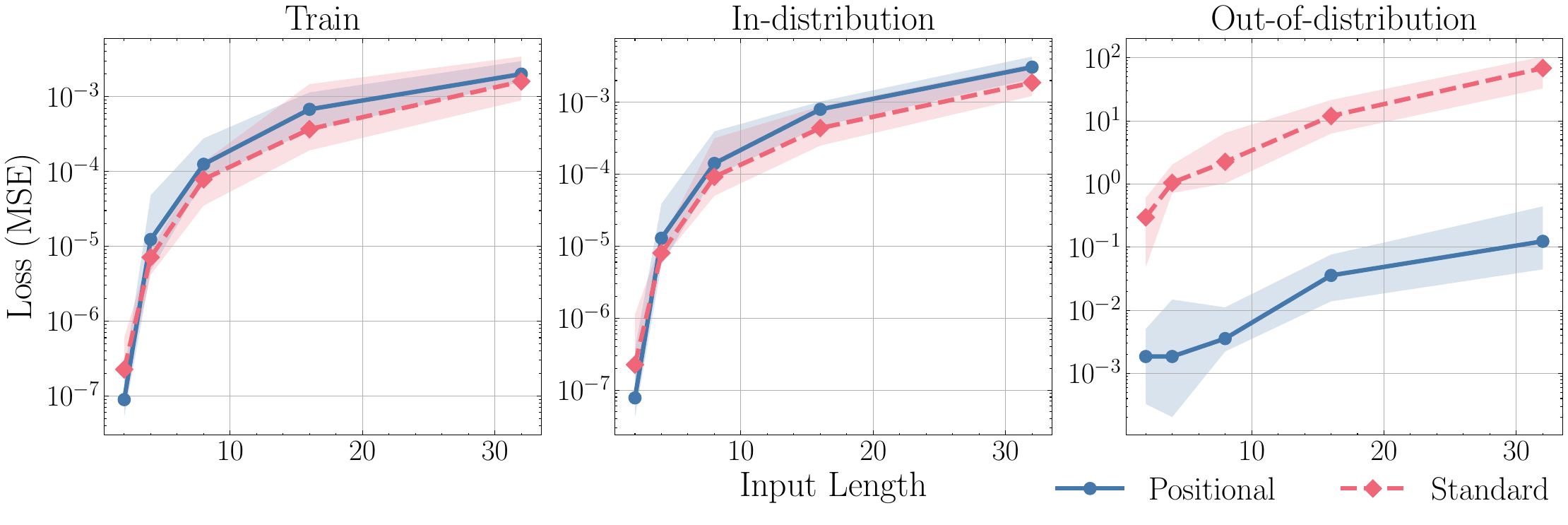}
    \caption{Training, in-distribution, and out-of-distribution test performance for the maximum sum subarray task is shown for standard Transformers (red) and positional Transformers (blue) across different input lengths. The x-axis indicates the fixed input length on which the model was trained. Models are trained on the range $[-2, 2]$ with 30,000 samples, validated on the same domain, and tested on an extended domain, $[-6, 6]$, each with 1,000 samples.}
    \label{fig:app_maxsub_length}
\end{figure}

\subsubsection{Sample size vs. Generalization experiments}
\label{app:num_sample}
In this section, we provide detailed results showcasing the training, in-distribution, and out-of-distribution test performance for each of the five numerical tasks as a function of the number of training samples used. From the results, we can draw two conclusions about the behavior of the models as the number of samples increases. First, both models achieve good in-distribution performance. Second, only the positional Transformer achieves better OOD performance. The results for this experiment are presented in \Cref{fig:app_sum_sample,fig:app_min_sample,fig:app_median_sample,fig:app_sort_sample,fig:app_maxsub_sample}.

\begin{figure}[htb]
    \centering
    \includegraphics[width=0.9\linewidth]
    {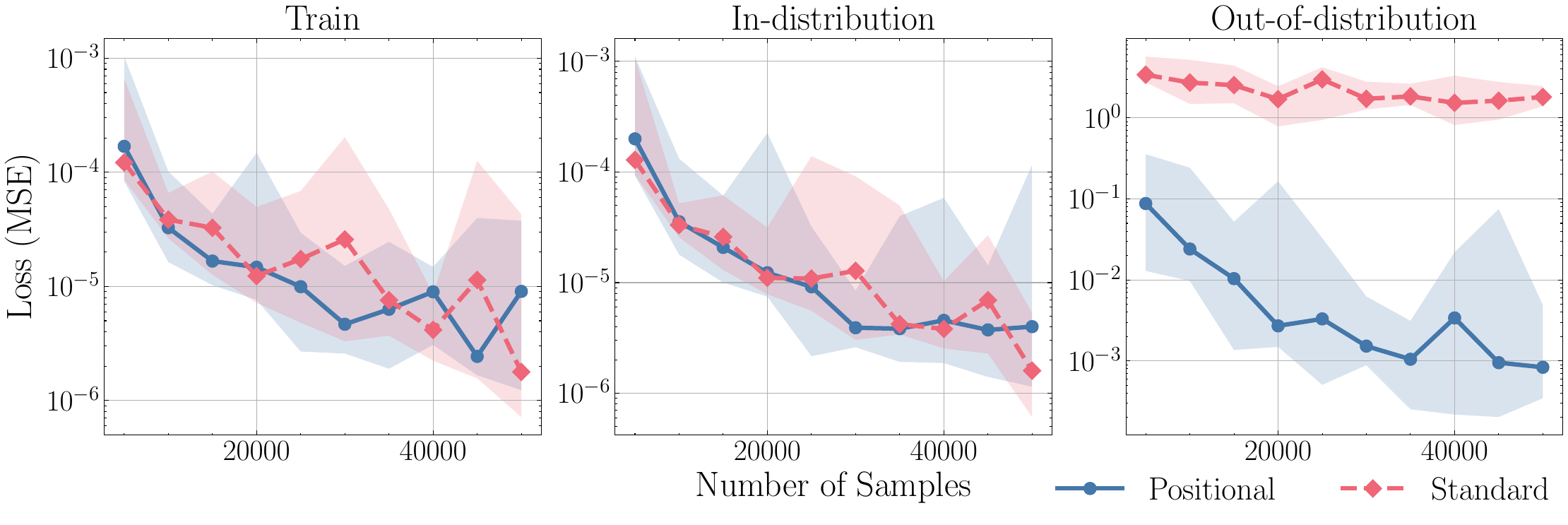}
    \caption{Training, in-distribution, and out-of-distribution test performance for the summing task is shown for standard Transformers (red) and positional Transformers (blue) as a function of the number of training samples (indicated on the x-axis). Models are trained on the range $[-2, 2]$ with varying training set sizes. In-distribution testing is performed on the same domain, and testing is conducted on an extended domain, $[-6, 6]$, each using 1,000 samples.}
    \label{fig:app_sum_sample}
\end{figure}

\begin{figure}[hbt]
    \centering
    \includegraphics[width=0.9\linewidth]{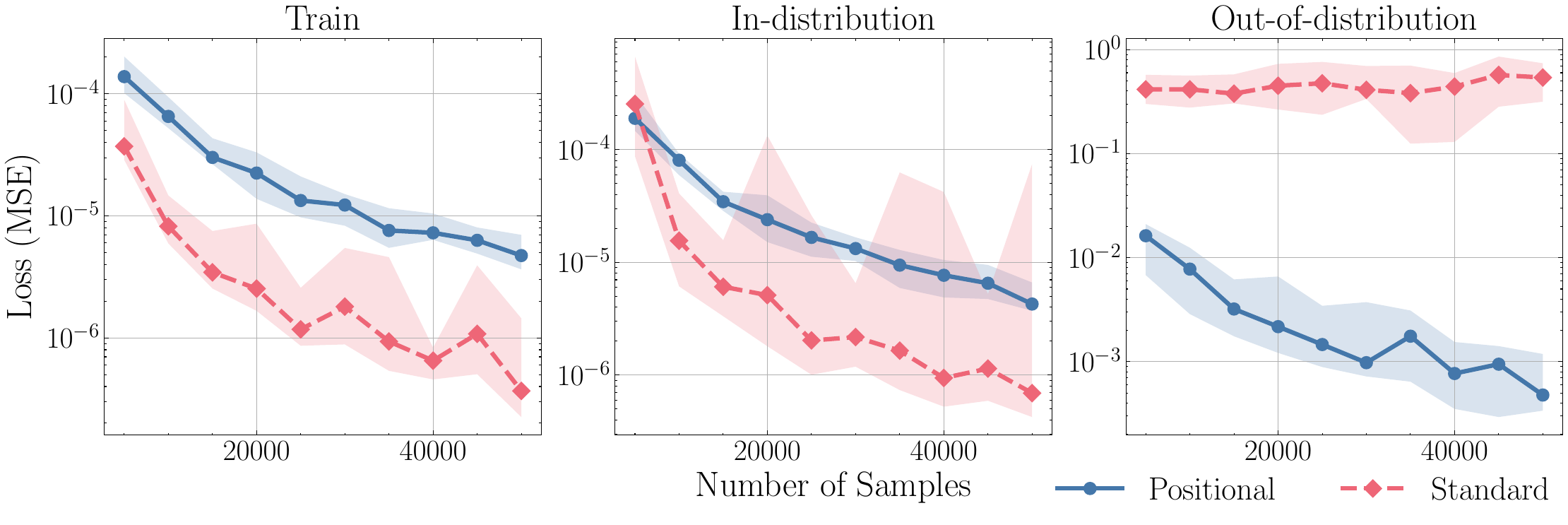}
    \caption{Training, in-distribution, and out-of-distribution test performance for the minimum task is shown for standard Transformers (red) and positional Transformers (blue) as a function of the number of training samples (indicated on the x-axis). Models are trained on the range $[-2, 2]$ with varying training set sizes. In-distribution testing is performed on the same domain, and testing is conducted on an extended domain, $[-6, 6]$, each using 1,000 samples.}
    \label{fig:app_min_sample}
\end{figure}

\begin{figure}[H]
    \centering
\includegraphics[width=0.9\linewidth]{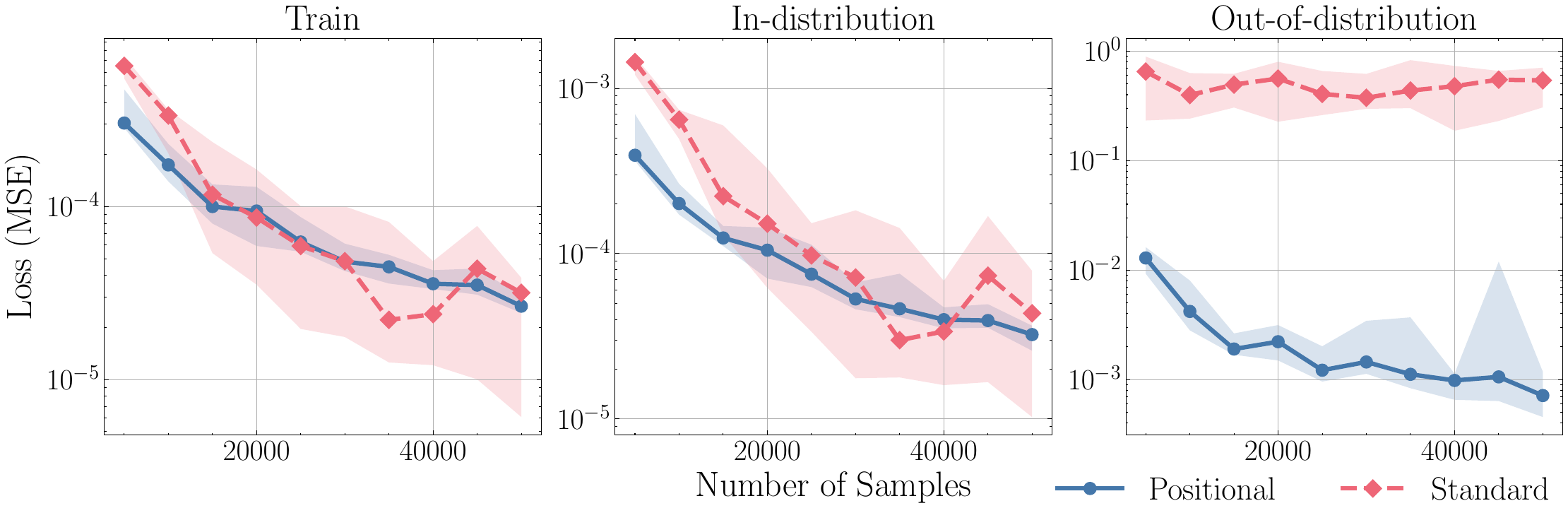}
    \caption{Training, in-distribution, and out-of-distribution test performance for the median task is shown for standard Transformers (red) and positional Transformers (blue) as a function of the number of training samples (indicated on the x-axis). Models are trained on the range $[-2, 2]$ with varying training set sizes. In-distribution testing is performed on the same domain, and testing is conducted on an extended domain, $[-6, 6]$, each using 1,000 samples.}
    \label{fig:app_median_sample}
\end{figure}

\begin{figure}[H]
    \centering
    \includegraphics[width=0.9\linewidth]{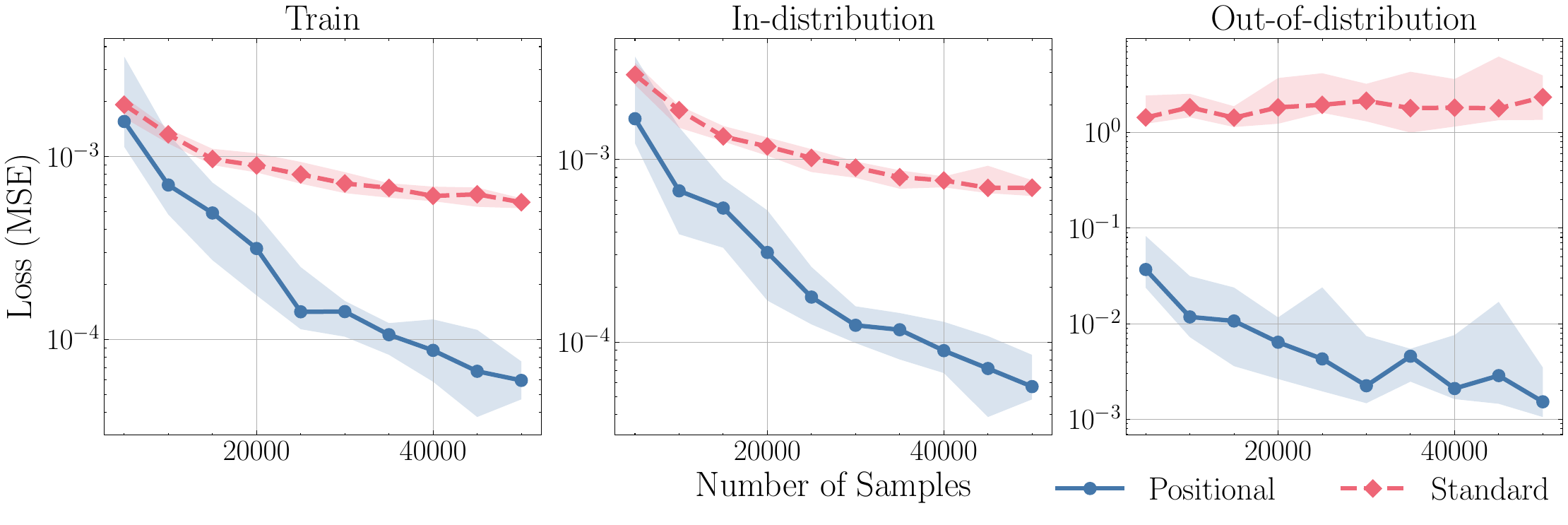}
    \caption{Training, in-distribution, and out-of-distribution test performance for the sorting task is shown for standard Transformers (red) and positional Transformers (blue) as a function of the number of training samples (indicated on the x-axis). Models are trained on the range $[-2, 2]$ with varying training set sizes. In-distribution testing is performed on the same domain, and out-of-distribution testing is conducted on an extended domain, $[-6, 6]$, each using 1,000 samples.}
    \label{fig:app_sort_sample}
\end{figure}

\begin{figure}[H]
    \centering
    \includegraphics[width=0.9\linewidth]{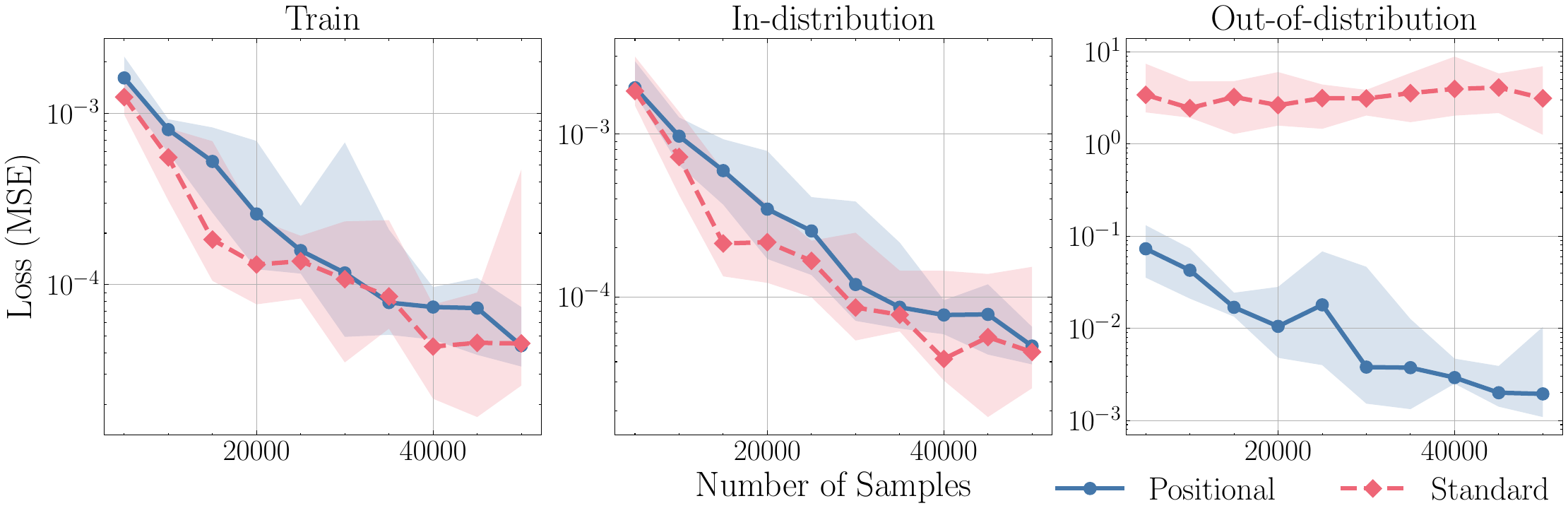}
    \caption{Training, in-distribution, and out-of-distribution test performance for the maximum sum subarray task is shown for standard Transformers (red) and positional Transformers (blue) as a function of the number of training samples (indicated on the x-axis). Models are trained on the range $[-2, 2]$ with varying training set sizes. In-distribution testing is performed on the same domain, and out-of-distribution testing is conducted on an extended domain, $[-6, 6]$, each using 1,000 samples.}
    \label{fig:app_maxsub_sample}
\end{figure}

\subsubsection{Performance over variable input lengths}
\label{app:num_var}
In this section, we present generalization results for models operating on variable-length inputs. This setting aims to verify the models' ability to generalize across different scales while maintaining the flexibility to handle inputs of varying lengths.

We evaluate the models' ability to process sequences of varying lengths up to a maximum size of $n = 8$ Specifically, the model is required to perform tasks on input sequences with lengths ranging from 1 to 8. 
Due to the more challenging setting, we train models with 500,000 samples and ensure that all input lengths are equally represented.

We then evaluate the in-distribution (ID) and out-of-distribution (OOD) loss across different scale factors $c \in \{1, 2, \dots, 10\}$. Note that when $c = 1$, the setting actually corresponds to ID generalization. The losses reported are calculated using 3,000 samples for each scale. As shown in \Cref{fig:var_scale}, positional Transformers maintain the same in-distribution performance while consistently outperforming standard Transformers across all out-of-distribution scales and tasks. Additionally, our architecture maintains robust OOD performance even in tasks where the output can exceed the input magnitude (e.g., sum and maximum sum subarray).

\begin{figure}[ht]
    \centering
\includegraphics[width=\linewidth]{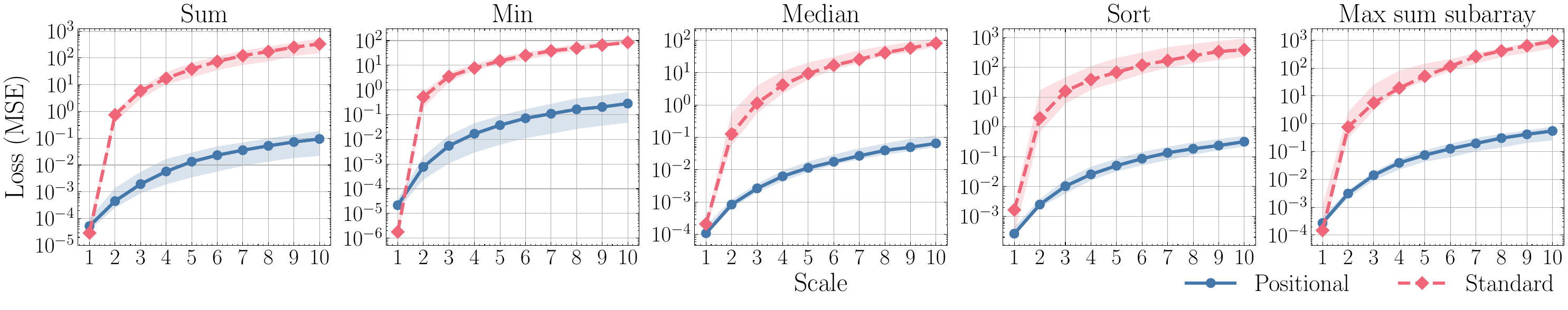}
    \caption{ID/OOD loss (measured as mean squared error, MSE) for standard Transformers (red) and positional Transformers (blue) across all five tasks for \emph{variable} lengths (up to $n = 8$). The x-axis represents the ID/OOD scale factor. The solid line and shaded area denote the median and the region between the 10\textsuperscript{th} and 90\textsuperscript{th} percentiles over ten trials, respectively.}
    \label{fig:var_scale}
\end{figure}

\subsubsection{Performance of compact Transformers}
\label{app:num_comp}
This section presents additional generalization results for simpler models, aiming to rule out potential overfitting caused by the excessive complexity of the standard Transformer. We examine two configuration variants: one with $\log n + 1$ layers ($4$ layers) and another with a single layer. The plots also illustrate the outcomes for different hidden dimensions in the MLP. We report the generalization results for the cumulative sum (\Cref{fig:combined_scale_sum}) and cumulative minimum (\Cref{fig:combined_scale_min}) tasks. As observed, in both cases, the OOD performance deteriorates as the network size decreases. Although single-layer networks exhibit slightly better performance, they remain inferior to the performance of positional attention reported in the main paper.

\begin{figure}[hbt]
    \centering
    \begin{minipage}{0.45\linewidth}
        \centering
        \includegraphics[width=\linewidth]{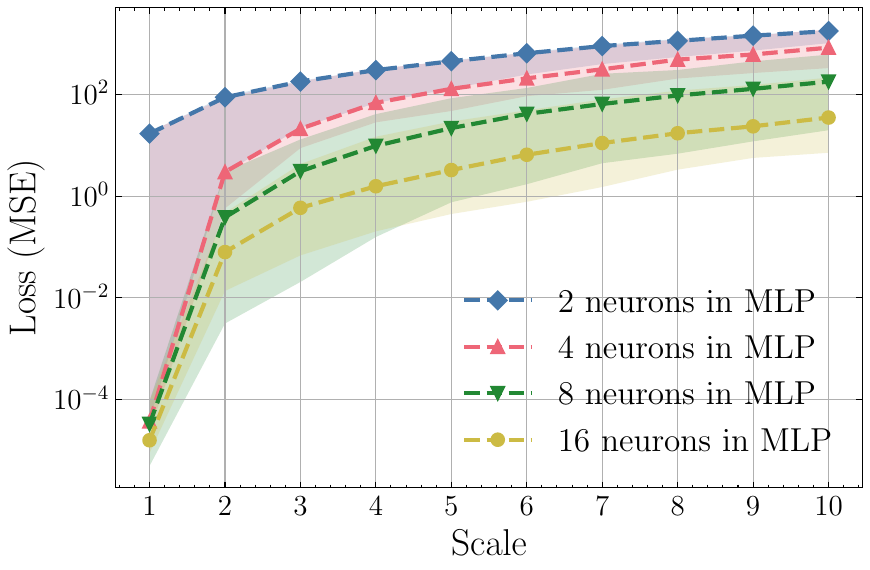}
    \end{minipage}
    \hspace{0.05\linewidth} %
    \begin{minipage}{0.45\linewidth}
        \centering
        \includegraphics[width=\linewidth]{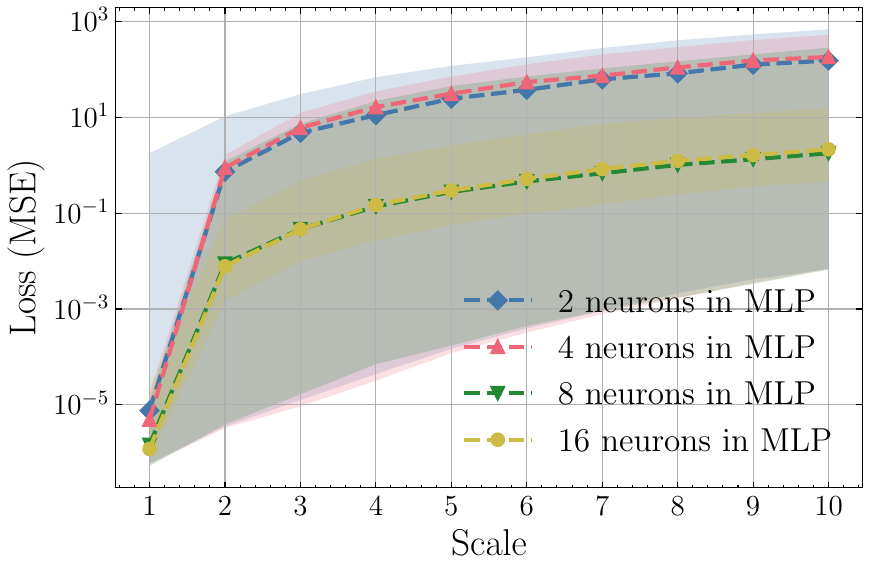}
    \end{minipage}
    \caption{ID/OOD loss for various standard Transformer models on the cumulative sum task with fixed length ($n = 8$). The left plot displays results for models with 4 layers, while the right plot shows results for single-layer models, both featuring varying hidden dimensions in the MLPs. The x-axis represents the out-of-distribution scale factor, indicating the distance from the training distribution. The solid lines and shaded areas denote the median and the regions between the 10\textsuperscript{th} and 90\textsuperscript{th} percentiles across ten trials, respectively.}
    \label{fig:combined_scale_sum}
\end{figure}

\begin{figure}[htb]
    \centering
    \begin{minipage}{0.45\linewidth}
        \centering
        \includegraphics[width=\linewidth]{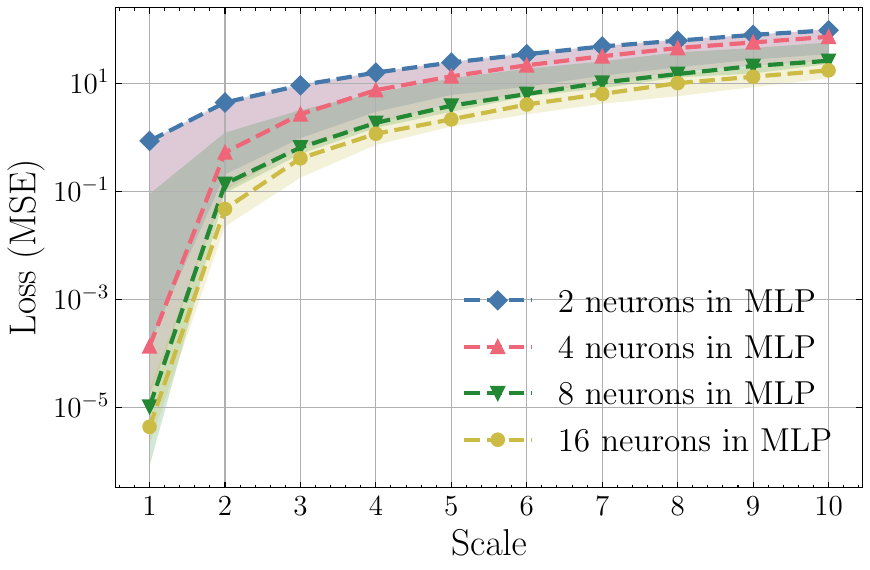}
    \end{minipage}
    \hspace{0.05\linewidth} %
    \begin{minipage}{0.45\linewidth}
        \centering
        \includegraphics[width=\linewidth]{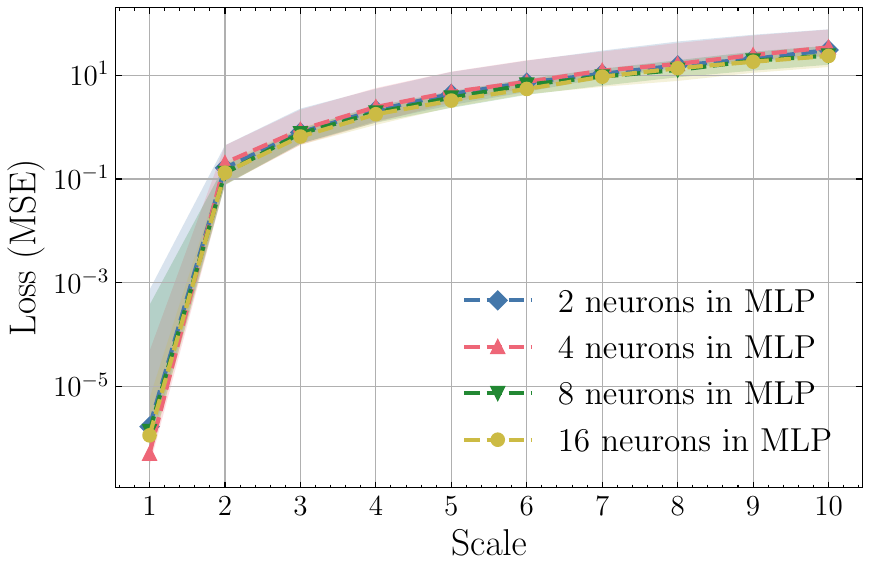}
    \end{minipage}
    \caption{ID/OOD loss for various standard Transformer models on the cumulative minimum task with fixed length ($n = 8$). The left plot displays results for models with 4 layers, while the right plot shows results for single-layer models, both featuring varying hidden dimensions in the MLPs. The x-axis represents the out-of-distribution scale factor, indicating the distance from the training distribution. The solid lines and shaded areas denote the median and the regions between the 10\textsuperscript{th} and 90\textsuperscript{th} percentiles across ten trials, respectively.}
    \label{fig:combined_scale_min}
\end{figure}

\subsubsection{Performance with alternative positional encodings}
\label{app:num_posenc}

We compare the performance of positional Transformers and standard Transformers using alternative positional encodings, such as binary and sinusoidal encodings \cite{vaswani2017attention} and Rotary Positional Embedding (RoPE) \citep{10.1016/j.neucom.2023.127063}, a widely adopted technique in natural language processing contexts that has also been applied to algorithmic tasks \citep{bounsi2024transformers}.

For binary positional encodings, we use $\lceil \log_2 n \rceil$ dimensions, where each entry represents the binary encoding of its index in $\lceil \log_2 n \rceil$ bits, with zeros encoded as $-1$. The result for binary encodings is shown in \Cref{fig:fix_scale_binary}.
For sinusoidal positional encodings, we follow the strategy outlined in \cite{vaswani2017attention}, with the encoding dimension set to $\lceil \nicefrac{n}{2} \rceil$. The result for sinusoidal encodings is shown in \Cref{fig:fix_scale_sine}.
From an expressivity perspective, while these encodings are less expressive than one-hot positional encodings, they maintain consistent out-of-distribution (OOD) performance across all ranges. Furthermore, positional Transformers outperform standard Transformers in every task tested.

As for RoPE, while it manages to decrease the OOD test loss,
this improvement is far from what is observed in positional Transformers across every task. The results of this experiment are presented in \Cref{fig:fix_scale_rope}.

\begin{figure}[H]
    \centering
    \includegraphics[width=\linewidth]{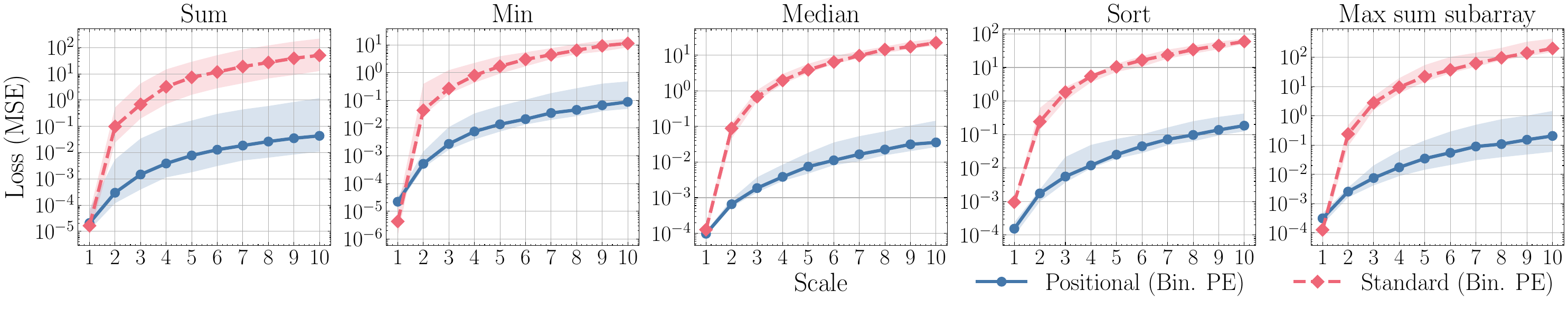}
    \caption{ID/OOD loss (measured as mean squared error, MSE) for standard Transformers (red) and positional Transformers (blue) using binary positional encodings across all five tasks for fixed length ($n = 8$). The x-axis represents the OOD scale factor, indicating the distance from the training distribution. The solid line and shaded area denote the median and the region between the 10\textsuperscript{th} and 90\textsuperscript{th} percentiles over ten trials, respectively.
    }
    \label{fig:fix_scale_binary}
\end{figure}

\begin{figure}[H]
    \centering
    \includegraphics[width=\linewidth]{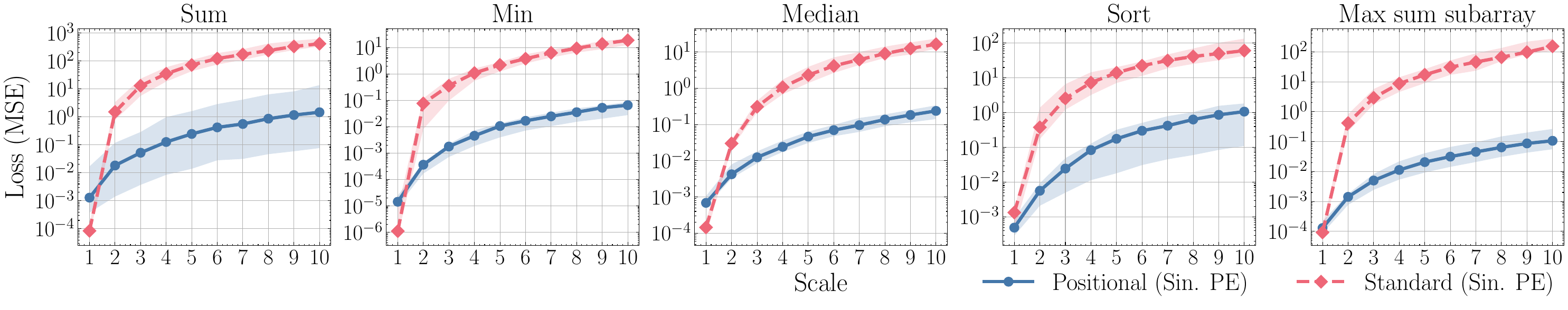}
    \caption{ID/OOD loss (measured as mean squared error, MSE) for standard Transformers (red) and positional Transformers (blue) using sinusoidal positional encodings across all five tasks for fixed length ($n = 8$). The x-axis represents the OOD scale factor, indicating the distance from the training distribution. The solid line and shaded area denote the median and the region between the 10\textsuperscript{th} and 90\textsuperscript{th} percentiles over ten trials, respectively.
    }
    \label{fig:fix_scale_sine}
\end{figure}

\begin{figure}[H]
    \centering
    \includegraphics[width=\linewidth]{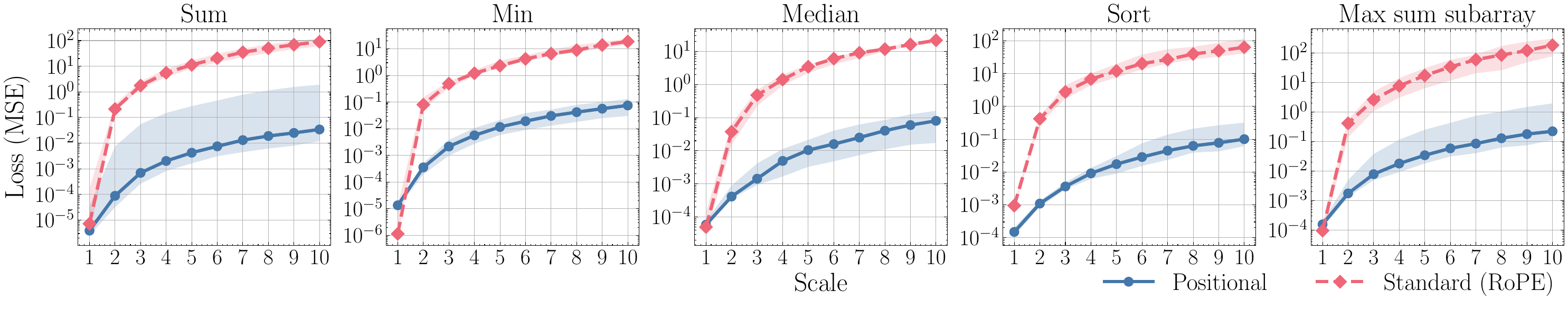}
    \caption{ID/OOD loss (measured as mean squared error, MSE) for standard Transformers with RoPE (red) and positional Transformers (blue) across all five tasks for fixed length ($n = 8$). The x-axis represents the OOD scale factor, indicating the distance from the training distribution. The solid line and shaded area denote the median and the region between the 10\textsuperscript{th} and 90\textsuperscript{th} percentiles over ten trials, respectively.
    }
    \label{fig:fix_scale_rope}
\end{figure}

\subsubsection{Ablation study on architectural design choices}
\label{app:num_ablation}

We carried out additional experiments to explore variations in the input data format and the placement of positional encodings. This leads to the following seven architectural choices (including two from the main paper and five additional variations):

\begin{enumerate}
  \item Standard Transformers: Input numbers and positional encodings are fed to MLPs, value, query, and key matrices.
  \item Standard Transformers with positional input (but no positional encodings): Input numbers are placed in one-hot positions, that is, the input is $\mbox{Diag}(X)$ where $X$ is the list of input numbers we give to other architectures. No additional positional encodings are used.
  \item Positional Transformers: Input numbers are fed to the MLPs and value matrix; positional encodings are fed to query and key matrices.
  \item Misaligned Positional Transformers: Input numbers are fed to the MLPs and value matrix; positional encodings are fed to the MLPs, value, query, and key matrices. That is, compared with Positional Transformers, we add positional encodings to the input.
  \item Input-regularized Standard Transformers: Input numbers are fed to the MLPs, value, query, and key matrices; positional encodings are only used in query and key matrices.
  \item No Positional Encodings: input numbers are fed to the MLPs, value, query, and key matrices. No positional encodings are used.
  \item Using RoPE Only: Input numbers are fed to MLPs, value, query, and key matrices, removing absolute positional encodings, and using only RoPE in standard transformers.
\end{enumerate}
We test the architectures for the cumulative sum task and the sorting task as representative tasks, for fixed input length $n=8$. We keep the train/valid/test setup the same as before. The results are shown in \Cref{fig:model_variations_sum} and \Cref{fig:model_variations_sort}. We note that our proposed positional Transformer architecture has all of the following three important factors that enabled OOD generalization: (1) use positional encodings, (2) do not use positional encodings in the value matrix of the attention, (3) use only fixed positional encodings in the key and query matrices. This design principle aligns with algorithms that are typically used to solve algorithmic tasks.

\begin{figure}[ht]
    \centering
    \includegraphics[width=\linewidth]{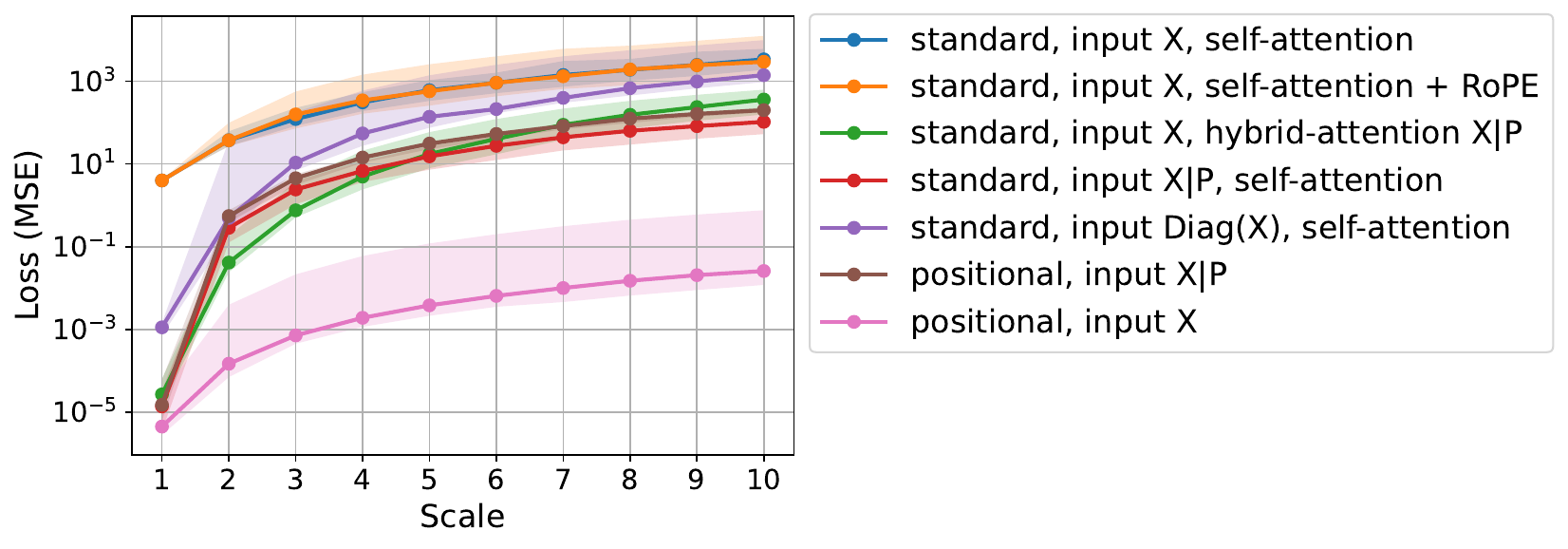}
    \caption{ID/OOD loss for the cumulative sum task under various architectural choices. We fix input length $n = 8$. The x-axis represents the OOD scale factor. The solid line and shaded area denote the median and the region between the 10\textsuperscript{th} and 90\textsuperscript{th} percentiles for $10$ trials, respectively.}
    \label{fig:model_variations_sum}
\end{figure}

\begin{figure}[ht]
    \centering
    \includegraphics[width=\linewidth]{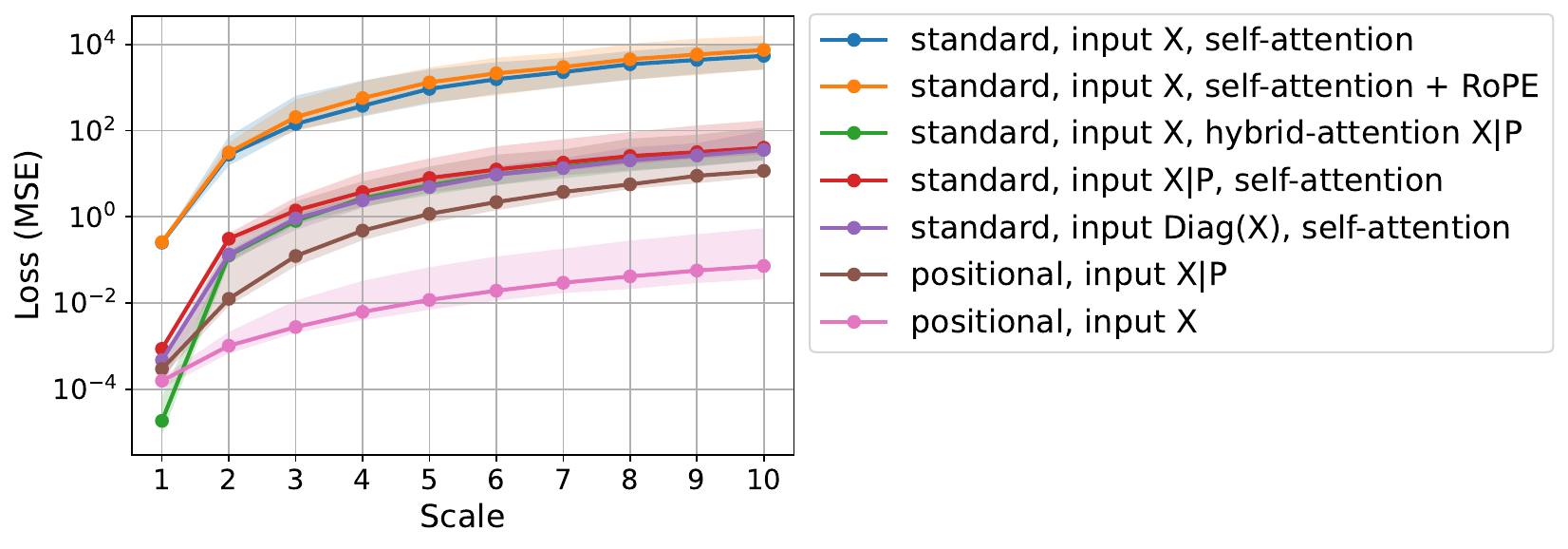}
    \caption{ID/OOD loss for the sorting task under various architectural choices. We fix input length $n = 8$. The x-axis represents the OOD scale factor. The solid line and shaded area denote the median and the region between the 10\textsuperscript{th} and 90\textsuperscript{th} percentiles for $10$ trials, respectively.}
    \label{fig:model_variations_sort}
\end{figure}

\subsubsection{Experiments on fine-tuning V projection matrices only}\label{app:num_fine_tuning}

Fine-tuning pre-trained models is very common in practice. In this case, one takes a model that has been trained on a different (and usually larger) dataset, and re-trains a small subset of layers on a new dataset, while keeping all other model parameters frozen. To further study the potential advantage of positional attention, we consider the following fine-tuning experiment. We take both positional Transformer and standard Transformer models that have been trained over in-distribution data (cf. \Cref{sec:ood-regime} where $c=1$, and \Cref{app:num_setting}) for the numeric algorithmic tasks, fix all their model parameters except the weight parameters in the V projection matrices, and fine-tune the V projection matrices over OOD data with 10x scale factor (cf. \Cref{sec:ood-regime} where $c=10$). We consider three algorithmic tasks: cumulative sum, cumulative minimum, and sorting. For these tasks, we find that fine-tuning only the V projection matrices in the OOD case allows the positional Transformer to achieve the same level of generalization as in the in-distribution case, while, on the other hand, the same does not hold for the standard Transformer.

\Cref{tab:fine_tuning_results} reports the median test error (MSE) for both positional and standard transformer models when (i) they are trained and tested on in-distribution data (in-dist), (ii) they are trained on in-distribution data but tested on numbers 10x larger than the original in-distribution data (10x OOD), (iii) they are trained on in-distribution data, fine-tuned V projection matrices on numbers 10x larger than the original in-distribution data, and then tested on numbers 10x larger than the original in-distribution data (10x OOD after fine-tuning). We use the same empirical setting as before, where we train both models for 2000 epochs on in-distribution data. For fine-tuning, we adopt the common practice and train for only 20 epochs. We normalize the MSE by the scale factor so that the results indicate relative accuracy. The results in \Cref{tab:fine_tuning_results} indicate that fine-tuning only the V projection matrices brings the accuracy of the positional Transformer to the same level as the in-distribution case. However, for the standard Transformer, even after fine-tuning the V projection matrices, the accuracy on 10x test data is still very high compared to that of the positional Transformer. The comparison highlights another potential advantage of the positional Transformer for executing algorithmic tasks: one might efficiently fine-tune a pre-trained positional Transformer on OOD data and achieve good performance.

\begin{table}[ht!]
\caption{Performance (MSE test loss) of fine-tuned positional and standard transformer models}
\label{tab:fine_tuning_results}
\centering
\small
\begin{tabular}{ccrrr}
\toprule
Task & Architecture & in-dist & 10x OOD & 10x OOD after fine-tuning \\
\midrule
\multirow{2}{*}{Cumulative Sum} & Positional & 6.07e-06 & 1.31e-03 & 1.32e-05\\
& Standard & 5.20e-06 & 1.09e+00 & 1.23e-01\\
\midrule
\multirow{2}{*}{Cumulative Minimum} & Positional & 1.03e-05 & 3.92e-04 & 2.19e-05\\
& Standard & 1.74e-06 & 1.39e-01 & 4.00e-02\\
\midrule
\multirow{2}{*}{Sorting} & Positional & 1.20e-04 & 7.37e-04 & 1.23e-04\\
& Standard & 9.05e-04 & 5.18e-01 & 6.90e-02\\
\bottomrule
\end{tabular}
\end{table}

\subsubsection{Empirical results using Graph Neural Networks (GNNs)}
\label{app:num_gnns}

Graph Neural Networks are a popular choice for solving algorithmic tasks on graphs. We tested the performance of Graph Convolutional Network (GCN) and Graph Attention Network (GAT) in solving the cumulative sum and sorting tasks. Since the tasks tested have no underlying native graph, we tested these models on complete and star graphs. Notably, the original GAT architecture on a complete graph is similar to a standard transformer but differs in that the value, key, and query weights are shared in GAT. In \Cref{tab:gnn_results}  we present results reporting the median MSE loss. As the results indicate, neither GCN nor GAT works very well even for in-distribution data (OOD scaling factor = 1), let alone achieving OOD generalization. We believe this is because these tasks are inherently unsuitable for standard message-passing architectures. 

\begin{table}[H]
\caption{ID/OOD loss of Graph Convolutional Network (GCN) and Graph Attention Network (GAT)}
\label{tab:gnn_results}
\centering
\small
\begin{tabular}{cccrrrrrr}
\toprule
& & & \multicolumn{6}{c}{OOD scale factor}\\
\cmidrule(l{2pt}r{2pt}){4-9}
Task & Graph & Architecture & 1 & 2 & 3 & 4 & 5 & 10\\
\midrule
\multirow{4}{*}{Cumulative Sum}  
& \multirow{2}{*}{Complete} & GCN & 3.9e+0 & 2.1e+1 & 4.3e+1 & 7.4e+1 & 1.2e+2  & 4.8e+2 \\
& & GAT & 3.7e+0 & 2.0e+1 & 4.3e+1 & 7.5e+1 & 1.2e+2 & 5.4e+2 \\
\cmidrule(l{2pt}r{2pt}){2-9}
& \multirow{2}{*}{Star} & GCN & 4.0e+0 & 2.0e+1 & 4.2e+1 & 7.4e+1 & 1.2e+2 & 4.6e+2 \\
& & GAT & 3.8e+0 & 2.0e+1 & 4.1e+1 & 7.5e+1 & 1.2e+2 & 5.0e+2\\
\midrule
\multirow{4}{*}{Sorting}
& \multirow{2}{*}{Complete} & GCN & 1.9e-1 & 9.8e-1 & 2.2e+0 & 3.9e+0 & 6.1e+0 & 2.7e+1 \\ 
& & GAT & 1.9e-1 & 9.6e-1 & 2.1e+0 & 3.7e+0 & 5.6e+0 & 2.4e+1 \\
\cmidrule(l{2pt}r{2pt}){2-9}
& \multirow{2}{*}{Star} & GCN & 1.9e-1 & 1.0e+0 & 2.1e+0 & 3.7e+0 & 6.1e+0 & 2.6e+1\\
& & GAT & 1.9e-1 & 9.9e-1 & 2.0e+0 & 3.5e+0 & 5.6e+0 & 2.3e+1 \\
\bottomrule
\end{tabular}
\end{table}

\subsubsection{Accuracy Measures}
\label{app:num_acc}
Given the regressive nature of the tasks considered in this work, the notion of model accuracy is not properly defined. However, we propose the following transformation strategies that assign binary labels (``correct"/``incorrect") to the models' outputs allowing us to measure their accuracy (with respect to these transformations):
\begin{itemize}
    \item Rounding transformation: We evaluate the model on lists containing integers (while training is still done using real numbers) and round the model’s output to the nearest integer (or nearest $0.5$ for the median task). A prediction is considered ``correct" if the rounded and ground truth lists are the same. The generalization accuracies for all arithmetic tasks in the main paper using this transformation strategy are presented in \Cref{fig:acc_strict}.
    \item Closeness transformation: We evaluate the model on lists of real numbers, considering a prediction ``correct" if each entry in the predicted list is within an absolute precision of $0.05$ and a relative precision of $5\%$ compared to the corresponding entry of the ground truth list. The generalization accuracies for all arithmetic tasks in the main paper using this transformation strategy are presented in \Cref{fig:acc_rel}.
\end{itemize}

It is important to note that the above metrics are quite unforgiving, as even a single element in the predicted list failing to meet the corresponding criterion results in the entire list being classified as ``incorrect". It is therefore expected that increasing the OOD scale factor causes the model's accuracy to decrease rapidly. Nevertheless, the positional Transformer significantly outperforms the standard Transformer in terms of accuracy in all five tasks. 

\begin{figure}[H]
    \centering
    \includegraphics[width=\linewidth]{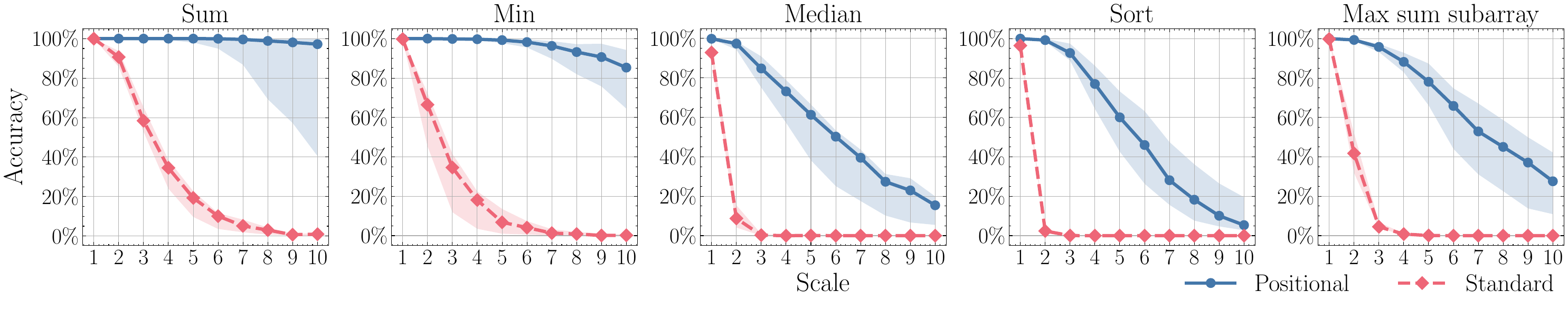}
    \caption{ OOD accuracy when using ``rounding transformation" across all five tasks for standard Transformers (red) and positional Transformers (blue) as a function of the scale factor. The solid line and shaded area denote the median and the region between the 10\textsuperscript{th} and 90\textsuperscript{th} percentiles over ten trials, respectively.}
    \label{fig:acc_strict}
\end{figure}

\begin{figure}[H]
    \centering
    \includegraphics[width=\linewidth]{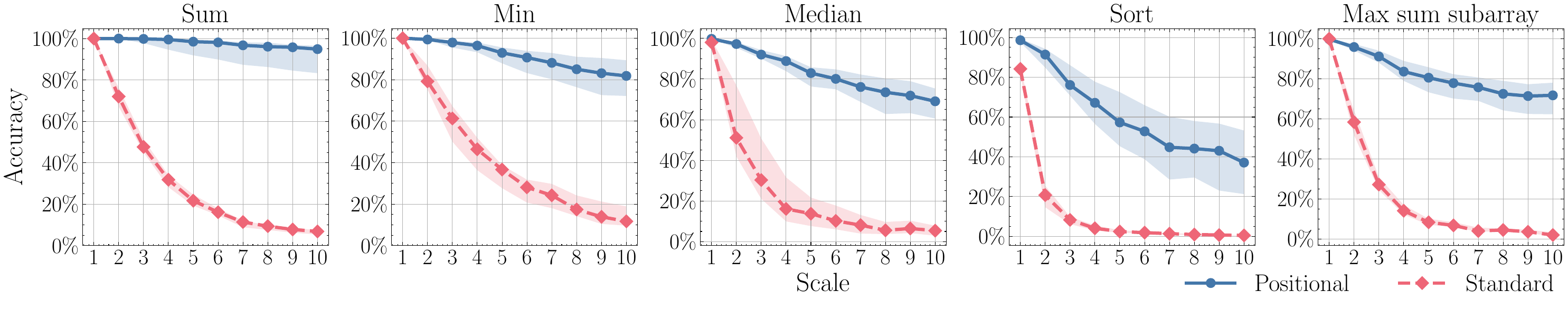}
    \caption{ OOD accuracy when using ``closeness transformation" across all five tasks for standard Transformers (red) and positional Transformers (blue) as a function of the scale factor. The solid line and shaded area denote the median and the region between the 10\textsuperscript{th} and 90\textsuperscript{th} percentiles over ten trials, respectively.}
    \label{fig:acc_rel}
\end{figure}

\subsection{$k$-hop induction heads}
\label{app:indheads}
\subsubsection{Experimental setting}
\label{app:indheads_setting}
We adopt a slight modification of the configuration used in the numeric task for the $k$-hop induction heads task of \citet{pmlr-v235-sanford24a}. Specifically, since the input consists of tokens, we use an embedding layer as the encoding layer, with an embedding dimension of 60. The decoding step remains a linear layer to produce the logits, whose output dimension is set to the vocabulary size. Additionally, due to the sequential nature of the task, we apply a causal mask over the attention coefficients in both architectures.

The dataset creation is adopted from \citet{pmlr-v235-sanford24a} and consists of the following types of characters: numbers from $0$ to $k$, which indicate the number of hops required for each sample (as mentioned in \Cref{sec:experiments}, we set $k = 3$ for our experiments). The sequences themselves are constructed from specific character sets: \texttt{\{a,b,c,d,e\}} for in-distribution and \texttt{\{f,g,h,i,j\}} for out-of-distribution. An additional character ($\sim$) denotes cases where a $k$-hop character cannot be found. To ensure that most output tokens can be reached within the specified number of hops (thus avoiding $\sim$ as an output), we set the sequence length to 30.

\subsubsection{Sample complexity Experiment}
\label{app:indheads_sample}
In this section, we provide detailed results showcasing the training, in-distribution, and out-of-distribution test performance for the $k$-hop induction heads task as a function of the number of training samples used. From the results on \Cref{fig:indheads_sample}, we can observe a sharp contrast between the learnability of the standard and positional Transformers, indicating that, for this particular task, the sample complexity of positional Transformers is higher than that of standard Transformers. As discussed in \Cref{sec:experiments}, this is likely a result of the nature of the induction heads task, which is purposely designed to showcase the relational capabilities of self-attention. As a result, positional Transformers require more data to replicate this same behavior.

\begin{figure*}[hbt]
    \centering
    \includegraphics[width=0.9\linewidth]{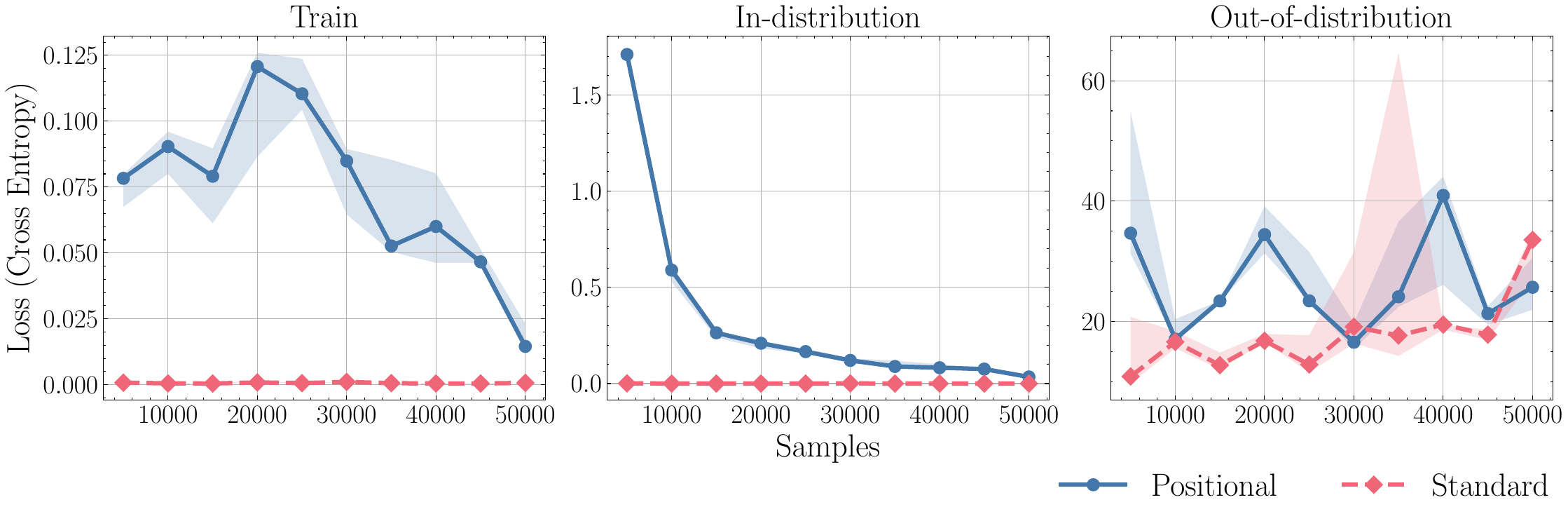}
    \caption{Training, in-distribution, and out-of-distribution test performance for the $k$-hop induction heads task are shown for standard Transformers (red) and positional Transformers (blue) as a function of the number of training samples (indicated on the x-axis).}
    \label{fig:indheads_sample}
\end{figure*}

\subsubsection{Expressive power experiment}
\label{app:indheads_exp}
In this section, we provide detailed results showcasing the training, in-distribution, and out-of-distribution test performance for the $k$-hop induction heads task as a function of the expressive power of the architectures, measured by the number of layers. From the results presented in \Cref{fig:indheads_layers}, we can conclude that, for this particular task, positional Transformers require many more layers to achieve a similar performance as standard Transformers, which empirically validates our findings established in \Cref{rem:expressivity}.

\begin{figure*}[hbt]
    \centering
    \includegraphics[width=0.9\linewidth]{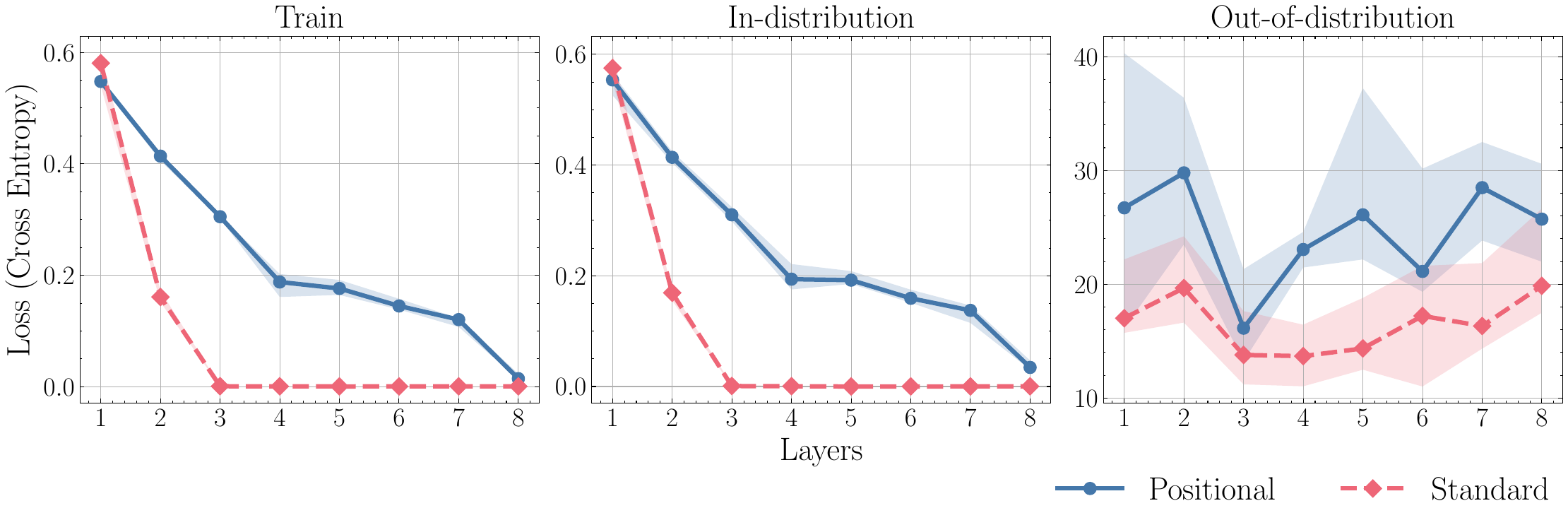}
    \caption{Training, in-distribution, and out-of-distribution test performance for the $k$-hop induction heads task are shown for standard Transformers (red) and positional Transformers (blue) as a function of the number of layers (indicated on the x-axis).}
    \label{fig:indheads_layers}
\end{figure*}

\subsection{Mixed-type input task}
\label{app:nlp}

In this section, we provide details on the task of \Cref{sec:nlp}. Specifically, we give a formal description of the task, provide details on our experimental setting, and present additional experiments using a fine-tuned version of GPT2-large from HuggingFace \cite{radford2019language}. Lastly, we present additional experiments with a varying training sample size for both standard and positional Transformer.

\subsubsection{Task description}
\label{app:nlp_task}
Let $n, k, m\in \mathbb{N}$ with $n\geq k$ and $n \geq m$. The input to the model is a list consisting of text and numerical values of the following form:\\
\begin{center}
\small{
\texttt{['Cat$i_1$', $v_{i_1}$, 'Cat$i_2$', $v_{i_2}$, \dots, 'Cat$i_n$', $v_{i_n}$, 'Find \{type\} of Cat$j_1$, Cat$j_2$, \dots, Cat$j_{k-1}$ and Cat$j_k$']}}\footnote{The ellipses are not part of the prompt}
\end{center}
augmented by adding $m$ more alphanumeric elements at random positions in the list which we call irrelevant structure.
Here, $i_1,\dots,i_n \in \mathbb{N}$, $v_{i_1},\dots,v_{i_n} \in \RR_{\geq 0}$, $j_1,\dots,j_k \in \{i_1,\dots,i_n\}$, and \texttt{\{type\}} is one of \texttt{'min'}, \texttt{'max'}, \texttt{'sum'}. The answer to the query is either the minimum, maximum, or sum of the set $\{v_{j_1},\dots, v_{j_k}\}$ depending on \texttt{\{type\}}. The irrelevant structure is generated by concatenating the string \texttt{'Cat'} with one of the characters in \texttt{\{'+', '-', '\_', '*'\}} and a category identifier. Notice how this setting allows for one model to potentially process multiple query types. In fact, we present one such experiment later.

\paragraph{OOD generalization} For this task, we measure OOD generalization in three ways (simultaneously):
\begin{enumerate}
    \item We test using category identifiers (i.e $i_1, i_2, \dots, i_n$) that the models haven't encountered during training, and 
    \item The range of values $v_1, v_2, \dots, v_n$ used for testing is larger than the range used for training.
    \item The type of irrelevant structure used for testing is different than the type used for training. We detail this difference below.
\end{enumerate}

\subsubsection{Experimental setting}
\label{app:nlp_setting}
We fix $n=8$, $k=4$ and $m=2$. Both standard and positional Transformers consist of 3 layers, two attention heads
per layer, and an embedding dimension of 32. All characters of the non-numeric parts of the input are tokenized and passed through an embedding layer, while the numeric ones are passed through a linear layer. The category identifiers $i_1, i_2, \dots, i_8$  are sampled randomly from $\{1,2,\dots, 20\}$ for training and $\{21, 22, \dots, 40\}$ for testing. The query category identifiers are then sampled randomly from $\{i_1, i_2, \dots, i_8\}$. The values $v_{i_1},\dots, v_{i_8}$ are sampled using the technique of \Cref{sec:experiments} from $[0, 5]$ for training and from $[0, 5c]$ where $c\in\{1,2,\dots, 10\}$ is the scaling factor for testing. We also apply the rejection step of \Cref{sec:ood-regime} when generating testing samples. Furthermore, when sampling a training list we form irrelevant structure by concatenating the string \texttt{'Cat'} with either one of \texttt{'+'} or \texttt{'-'} (chosen at random) and a category identifier that is present in the actual input. For testing, irrelevant structure is formed similarly by concatenating the string \texttt{'Cat'} with either one of \texttt{'\_'} or \texttt{'*'} (chosen at random) and a category identifier that is present in the actual (test) input. In both cases, irrelevant structure is injected into the ``true" list at random positions. We run experiments for the following three variants of the task: 

\begin{enumerate}
    \item One where training and testing prompts consist exclusively of prompts where \texttt{\{type\}} is \texttt{'min'}.
    \item One where training and testing prompts consist exclusively of prompts where \texttt{\{type\}} is \texttt{'max'}.
    \item One where training and testing prompts consist exclusively of prompts where \texttt{\{type\}} is either \texttt{'min'} or \texttt{'max'}. We refer to this experiment as ``multitasking" since it allows a single trained model to process different query types. In fact, the choice of minimum and maximum as the two query types is, in some sense, an ``extreme" case, given the ``opposite" nature of minimum and maximum operations.
\end{enumerate}

Both positional and standard Transformer
models are trained using Adam for 2000 epochs with a sample size of 50,000, a batch size of 1024, and an initial learning rate of $5\cdot 10^{-4}$ which is controlled by a learning rate scheduler.

\subsubsection{Sample complexity Experiments}
\label{app:nlp_sample}
In Figures \ref{fig:nlp_sample_min} to \ref{fig:nlp_sample_multitask} we present the in-distribution and out-of-distribution performance of standard and positional Transformer on the three variations of the mixed-type input task. The conclusions are similar to those of \Cref{app:num_sample}, namely, both models fit the training data but positional Transformer achieves significantly better out-of-distribution performance.

\begin{figure*}[hbt]
    \centering
    \includegraphics[width=0.9\linewidth]{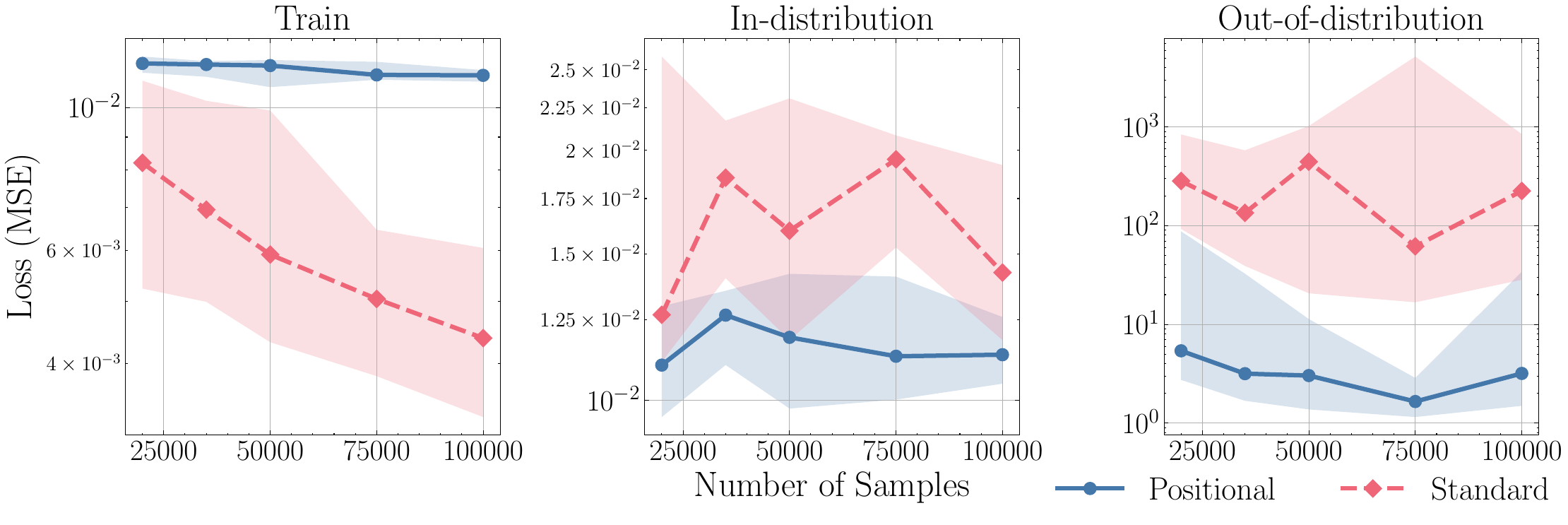}
    \caption{Training, in-distribution, and out-of-distribution test performance for the minimum variation of the mixed-type input task is shown for standard Transformers (red) and positional Transformers (blue) as a function of the number of training samples (indicated on the x-axis). Models are trained on category values in the range $[0, 5]$ with varying training set sizes. In-distribution testing is performed on the same domain, and out-of-distribution testing is conducted on an extended domain of category values, $[0, 50]$, each using 1,000 samples.}
    \label{fig:nlp_sample_min}
\end{figure*}

\begin{figure*}[hbt]
    \centering
    \includegraphics[width=0.9\linewidth]{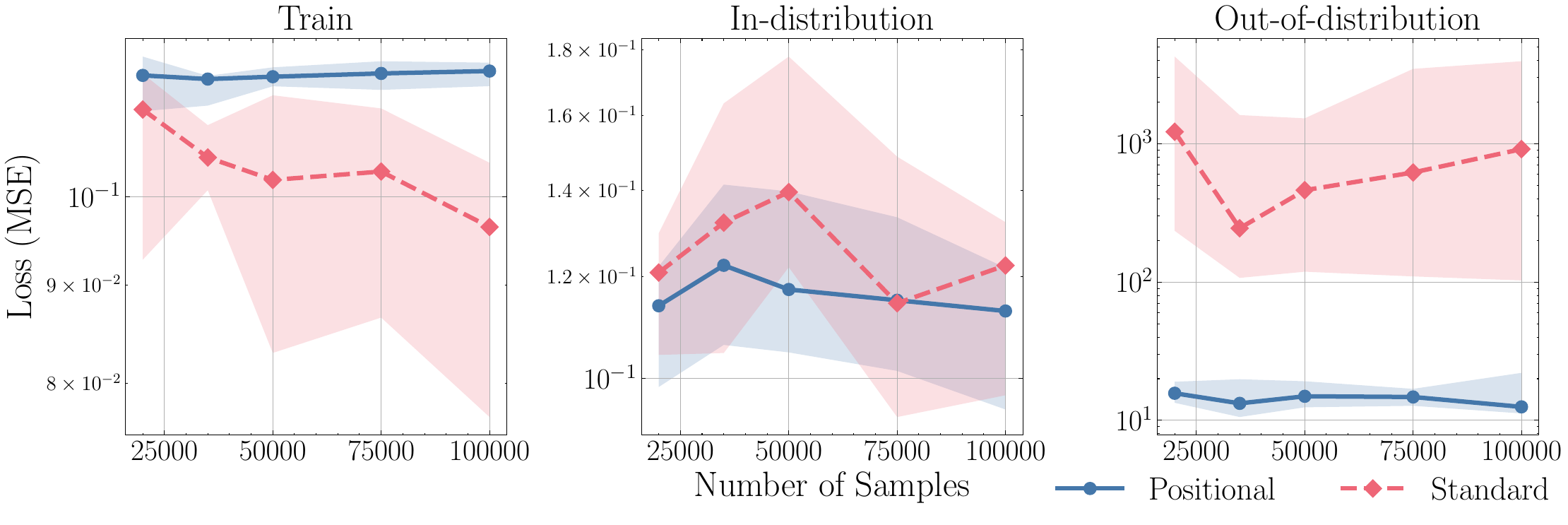}
    \caption{Training, in-distribution, and out-of-distribution test performance for the sum variation of the mixed-type input task is shown for standard Transformers (red) and positional Transformers (blue) as a function of the number of training samples (indicated on the x-axis). Models are trained on category values in the range $[0, 5]$ with varying training set sizes. In-distribution testing is performed on the same domain, and out-of-distribution testing is conducted on an extended domain of category values, $[0, 50]$, each using 1,000 samples.}
    \label{fig:nlp_sample_sum}
\end{figure*}

\begin{figure*}[hbt]
    \centering
    \includegraphics[width=0.9\linewidth]{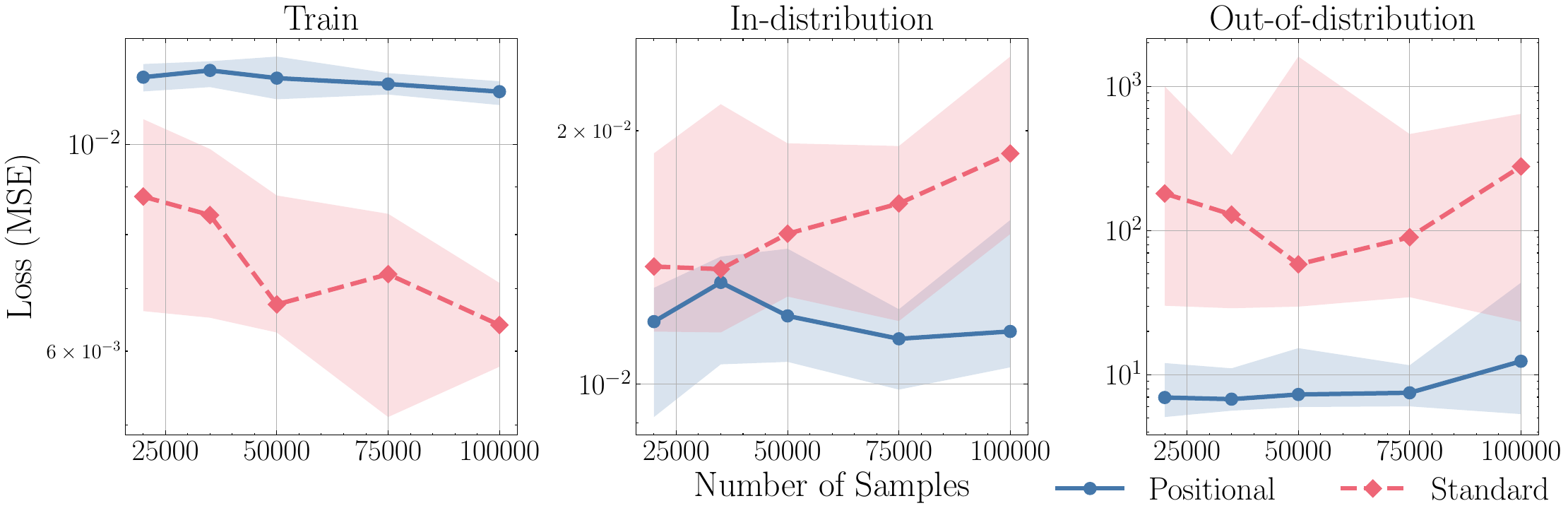}
    \caption{Training, in-distribution, and out-of-distribution test performance for the multitask variation of the mixed-type input task is shown for standard Transformers (red) and positional Transformers (blue) as a function of the number of training samples (indicated on the x-axis). Models are trained on category values in the range $[0, 5]$ with varying training set sizes. In-distribution is performed on the same domain, and out-of-distribution testing is conducted on an extended domain of category values, $[0, 50]$, each using 1,000 samples.}
    \label{fig:nlp_sample_multitask}
\end{figure*}

\subsubsection{Experiments with GPT2}
\label{app:nlp_gpt2}
For completeness, and given the nature of the task, we also perform experiments with a fined-tuned version of GPT2-large from HuggingFace \cite{radford2019language} on the above task. For all three variations (min, max, and multitasking), we fix the precision of the input numbers to 4 digits (2
digits for the whole part and 2 digits after the decimal point) and the precision of the output to 4 digits for min and max (2 digits for the whole part and 2 digits after the decimal point) and 5 digits for sum (3 digits for the whole part and 2 digits after the decimal
point). This covers all possible numbers that can be sampled. We then use byte pair encoding for tokenization and fine-tuned using 50,000 training samples for 3 epochs. Since this setting is autoregressive (next-token prediction), there were cases where
the model’s output did not correspond to a real number. As this did not occur frequently enough to be considered a problem, we report the median OOD losses (MSE and MAPE) ignoring samples for which no numeric value could be extracted from the model’s output. The results of this experiment are presented in \Cref{fig:nlp_noise_p_and_gpt}. For easier comparison, we also include the losses of the positional Transformer on the same tasks. 

\begin{figure}[hbt]
    \centering
    \begin{subfigure}{0.9\linewidth}
        \includegraphics[width=\linewidth]{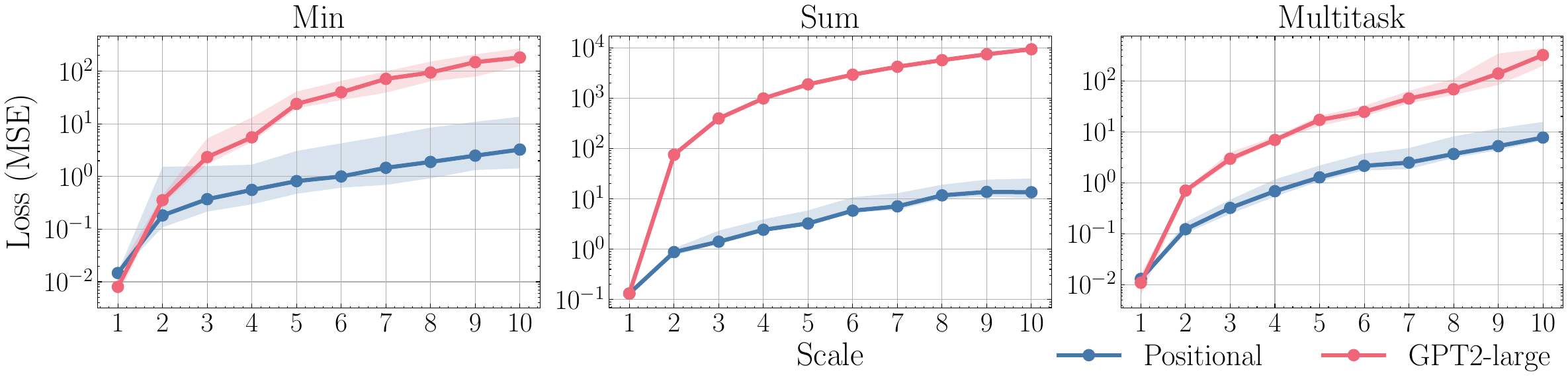}
    \end{subfigure}
    \begin{subfigure}{0.9\linewidth}
        \includegraphics[width=\linewidth]{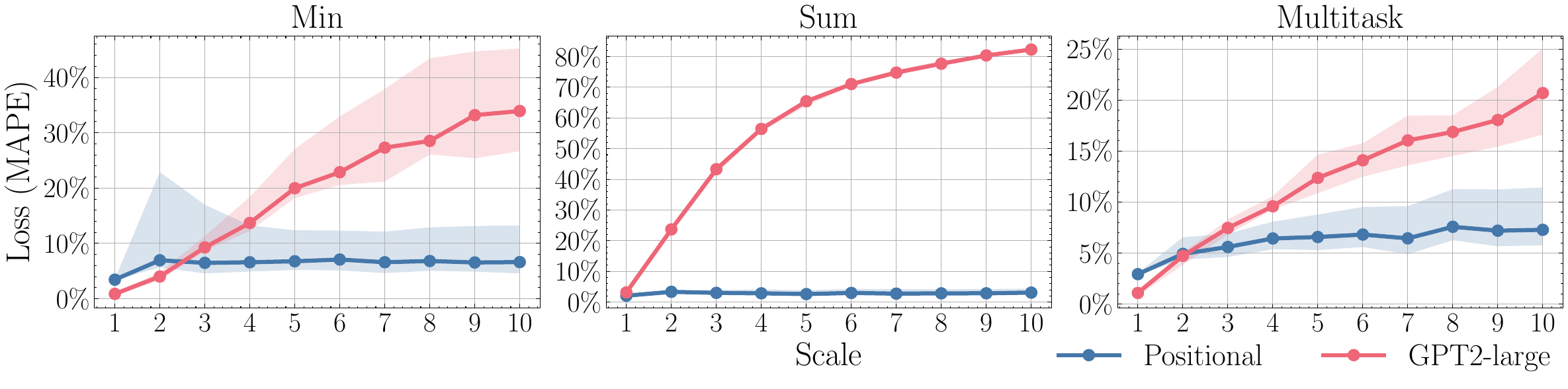}
    \end{subfigure}
    \caption{This experiment shows ID/OOD losses (measured as mean squared error, MSE, and mean absolute percentage error, MAPE) for fine-tuned GPT2-large (red) and positional Transformers (blue) across all three variations of the mixed-type input task. The x-axis represents the OOD scale factor. The solid line and shaded area denote the median and the region between the 10\textsuperscript{th} and 90\textsuperscript{th} percentiles over ten trials, respectively.}
    \label{fig:nlp_noise_p_and_gpt}
\end{figure}

\section{Probability of generating OOD test data in our empirical setting}
\label{app:ood_bound}

Recall that we sample the training and test data in the following way. The training data consists of i.i.d samples whose values are drawn from the range $[-2, 2]$. To ensure diversity, for each training sample, we first select lower and upper bounds $\gamma_l$ and $\gamma_u$ uniformly in $[-2, 2]$, and then for each of the $n$ elements of the training sample, we select its value uniformly from the interval $[\gamma_l, \gamma_u]$. We employ a similar sampling strategy for testing but extend the value range to $[-2c,2c]$, where $c>1$ is the OOD scale factor. Additionally, during the test sampling process, we apply a rejection step to ensure that either $\gamma_l < -2$ or $\gamma_u > 2$, while maintaining $-2c \le \gamma_l \le \gamma_u \le 2c$.

We will compute the probability that a randomly sampled test instance $x \in \mathbb{R}^n$ lies in the domain of the training distribution, i.e., we will compute $\mathbb{P}_{x \sim \mathcal{D}_\text{test}(\mathcal{X})}(x \in [-2,2]^n)$. In particular, we will show that this probability is proportional to $1/nc^2$. Consequently, in our experiments, the majority of the test data lie outside of the domain of the training distribution.

Without loss of generality, let us assume that the training data are sampled within the interval $[-1,1]$ and the test data are sampled within the interval $[-c,c]$, where $c$ is the OOD scale factor. Note that this does not affect the probability that we want to compute. In the test sampling process, when we sample two uniform numbers $\gamma_\ell$ and $\gamma_u$ from $[-c,c]$, exactly one of the following 3 disjoint events can happen.
\begin{itemize}[leftmargin=0.5cm]
  \item {\em Event A.} Exactly one of $\gamma_\ell$ and $\gamma_u$ lies in $[-1,1]$. This happens with probability $2(\frac{c-1}{c})(\frac{1}{c})$.
  \item {\em Event B.} Neither $\gamma_\ell$ nor $\gamma_u$ is inside the interval $[-1,1]$. This happens with probability $(\frac{c-1}{c})^2$.
  \item {\em Event C.} Both $\gamma_\ell$ and $\gamma_u$ are inside the interval $[-1,1]$. This happens with probability $\frac{1}{c^2}$.
\end{itemize}
Our rejection step rejects the samples generated under Event C. Therefore, in our setting when we sample a pair of $\gamma_\ell$ and $\gamma_u$ in order to generate a single instance of the test list, we have that
\begin{equation}
\mathbb{P}(\mbox{Event A} | \mbox{Rejecting Event C}) = \frac{2(c-1)}{c^2-1}, \quad \mathbb{P}(\mbox{Event B} | \mbox{Rejecting Event C}) =\frac{(c-1)^2}{c^2-1}. \label{eq:p_AB}
\end{equation}
We analyze the probability of generating OOD test data under each event. First, let us suppose that Event A happens when we sample $\gamma_\ell$ and $\gamma_u$. This means that either $\gamma_\ell \in [-c,-1)$ or $\gamma_u \in (1,c]$ (but not both). More precisely, the following two sub-events partition Event A:
\begin{itemize}[leftmargin=1cm]
  \item {\em Event A.1.} $\gamma_\ell \in [-1,1]$ and $\gamma_u \in (1,c]$. Given that Event A happens, this sub-event happens with probability 1/2.
  \item {\em Event A.2.} $\gamma_\ell \in [-c,-1)$ and $\gamma_u \in [-1,1]$. Given that Event A happens, this sub-event happens with probability 1/2.
\end{itemize}
By symmetry of the probability distributions, the probability that we wish to compute remains the same under both of the above sub-events. Therefore, let us focus on Event A.1. Suppose that Event A.1 happens, i.e., $\gamma_\ell \in [-1,1]$ and $\gamma_u \in (1,c]$. Conditioning on this event, we know that $\gamma_\ell$ is uniform on $[-1,1]$ and $\gamma_u$ is uniform on $(1,c)$. The probability density functions for $\gamma_\ell$ and $\gamma_u$ are
\[
  f_{\gamma_\ell}(s) = \frac{1}{2}\cdot\mathbb{I}(-1\le s \le 1), \quad f_{\gamma_u}(t) = \frac{1}{c-1}\cdot\mathbb{I}(1< t \le c).
\]
Let $X \in \mathbb{R}^n$ be a random vector whose $i$th coordinate $X_i$ is independently and uniformly sampled from the interval $[\gamma_\ell, \gamma_u]$, then the conditional probability density function for $X$ given $\gamma_\ell,\gamma_u$ is
\[
  f_X(x_1,x_2,\ldots,x_n | \gamma_\ell=s, \gamma_u=t) = \left(\frac{1}{t-s}\right)^n \cdot \mathbb{I}(s \le x_i \le t, \forall i).
\]
Therefore, the joint density function is
\[
  f_{X,\gamma_\ell,\gamma_u}(x_1,\ldots,x_n,s,t) = \frac{1}{2}\frac{1}{(c-1)}\left(\frac{1}{t-s}\right)^n \cdot \mathbb{I}(-1 \le s \le 1, 1 < t \le c, s \le x_i \le t, \forall i).
\]
It follows that
\begin{allowdisplaybreaks}
\begin{align}
  &p_{\mathsf{A},\mathsf{in}} := \mathbb{P}_{X,\gamma_\ell,\gamma_u}\Big(X_i \in [-1,1], \forall i \in [n] \Big| \mbox{Event A}\Big)\nonumber\\
  =~& \int_{s \in [-1,1]}\int_{t \in (1,c]}\int_{x \in [s,1]^n} \frac{1}{2}\frac{1}{(c-1)}\left(\frac{1}{t-s}\right)^n dx\,dt\, ds \nonumber\\
  =~& \frac{1}{2}\frac{1}{(c-1)}\int_{s \in [-1,1]}\int_{t \in (1,c]} \left(\frac{1-s}{t-s}\right)^n dt\, ds\nonumber\\
  =~&\frac{1}{2}\frac{1}{(c-1)}\frac{1}{(n-1)}\int_{s \in [-1,1]} (1-s)\left(1-\left(\frac{1-s}{c-s}\right)^{n-1}\right) ds\nonumber\\
  =~&\frac{1}{2}\frac{1}{(c-1)}\frac{1}{(n-1)}\left[\int_{s \in [-1,0]} (1-s)\left(1-\left(\frac{1-s}{c-s}\right)^{n-1}\right) ds \right.\nonumber\\
  &\hspace{3cm} + \left.\int_{s \in [0,1]} (1-s)\left(1-\left(\frac{1-s}{c-s}\right)^{n-1}\right) ds\right]\nonumber\\
  \le~& \frac{1}{2}\frac{1}{(c-1)}\frac{1}{(n-1)}\left[\int_{s \in [-1,0]} (1-s)\left(1-\left(\frac{1}{c}\right)^{n-1}\right) ds + \int_{s \in [0,1]} (1-s) ds\right]\nonumber\\
  =~&\frac{3\left(1-1/c^{n-1}\right) + 1}{4(n-1)(c-1)}. \label{eq:p_A}
\end{align}
\end{allowdisplaybreaks}

This is the probability that, under Event A, a randomly generated test sample lies within the domain of the training distribution. Again, recall that by scaling down the domain of the test distribution to $[-c,c]$ accordingly, we have assumed that the domain of the training distribution is $[-1,1]^n$ without loss of generalization. 

Now suppose that Event B happens. In this case both $\gamma_\ell$ and $\gamma_u$ are uniformly distributed over $[-c,-1)\cup(1,c]$. The following two sub-events partition Event B:
\begin{itemize}[leftmargin=1cm]
  \item {\em Event B.1.} Either both $\gamma_\ell,\gamma_u > 1$ or both $\gamma_\ell,\gamma_u < -1$. Given that Event B happens, this sub-event happens with probability 1/2.
  \item {\em Event B.2.} $\gamma_\ell < -1$ and $\gamma_u > 1$. Given that Event B happens, this sub-event happens with probability 1/2.
\end{itemize}
Let $X \in \mathbb{R}^n$ be a random vector whose $i$th coordinate $X_i$ is independently and uniformly sampled from the interval $[\gamma_\ell, \gamma_u]$. Note that under Event B.1, one always has that $X \not\in [-1,1]^n$, i.e.,
\begin{equation}\label{eq:p_B1}
  p_{\mathsf{B.1},\mathsf{in}} := \mathbb{P}_{X,\gamma_\ell,\gamma_u}\Big(X_i \in [-1,1], \forall i \in [n] \Big| \mbox{Event B.1}\Big) = 0.
\end{equation}
Therefore let us consider Event B.2. Conditioning on this event, we know that $\gamma_\ell$ is
uniform on $[-c, -1)$ and $\gamma_u$ is uniform on $(1, c]$. The joint density function (conditional on Event B.2) for $X,\gamma_\ell, \gamma_u$ is
\[
    f_{X,\gamma_\ell,\gamma_u}(x_1,\ldots,x_n,s,t) = \frac{1}{(c-1)^2}\left(\frac{1}{t-s}\right)^n \cdot \mathbb{I}(-c \le s \le 1, 1 < t \le c, s \le x_i \le t, \forall i).
\]
Therefore, we have that
\begin{align}
  &p_{\mathsf{B.2},\mathsf{in}} := \mathbb{P}_{X,\gamma_\ell,\gamma_u}\Big(X_i \in [-1,1], \forall i \in [n] \Big| \mbox{Event B.2}\Big)\nonumber\\ 
  =~& \int_{s \in [-c,1)}\int_{t \in (1,c]}\int_{x \in [s,1]^n} \frac{1}{(c-1)^2}\left(\frac{1}{t-s}\right)^n dx\,dt\,ds\nonumber\\
  =~& \frac{1}{(c-1)^2} \int_{s \in [-c,1)}\int_{t \in (1,c]} 2^n\left(\frac{1}{t-s}\right)^n dt\, ds\nonumber\\
  =~& \frac{2^n}{(c-1)^2(n-1)}\int_{s \in [-c,1)} \left(\frac{1}{(1-s)^{n-1}} - \frac{1}{(c-s)^{n-1}}\right)ds\nonumber\\
  =~&\left\{\begin{array}{ll} \dfrac{4 - 8(\frac{2}{1+c})^{n-2} + 4(\frac{1}{c})^{n-2}}{(c-1)^2(n-1)(n-2)}, & \mbox{if}~ n\ge 3,\\ \dfrac{2\log(c+1) - \log c - 2\log 2}{(c-1)^2}, & \mbox{if} ~ n = 2.
  \end{array}\right. \label{eq:p_B2}
\end{align}

Combining \Cref{eq:p_AB}, \Cref{eq:p_A}, \Cref{eq:p_B1}, \Cref{eq:p_B2}, we get that, if $n \ge 3$,
\begin{align}
  p_{\mathsf{in}} := \mathbb{P}_{x\sim\mathcal{D}_\text{test}(\mathcal{X})}(x\in[-1,1]^n) 
  & \le \frac{3(1-1/c^{n-1}) + 1}{2(c^2-1)(n-1)} + \frac{2 - 4(\frac{2}{1+c})^{n-2} + 2(\frac{1}{c})^{n-2}}{(n-1)(n-2)(c^2-1)} \label{eq:ood_prob1}\\
  & \le \frac{2}{(c^2-1)(n-1)} + \frac{2}{(n-1)(n-2)(c^2-1)}\nonumber\\
  & = O\left(\frac{1}{nc^2}\right)\nonumber
\end{align}
and if $n=2$,
\begin{equation}
\label{eq:ood_prob2}
  p_{\mathsf{in}} \le \frac{3(1-1/c) + 1 + 2\log(c+1) - \log c - 2\log 2}{2(c^2-1)} \le \frac{3(1-1/c) + 9/8}{2(c^2-1)}.
\end{equation}
In the above, $p_{\mathsf{in}}$ is the probability that a randomly sampled test list $x \in \mathbb{R}^n$ has all its elements lie within $[-1,1]$, that is, the probability that $x$ lies within the domain of the training distribution. This probability is at most $O(1/nc^2)$. Suppose that we generate $N$ test instances, then a straightforward application of the multiplicative Chernoff bound yields that with probability at least $N^{-C}$ for some constant $C>0$, at most $O(\frac{N}{nc^2})$ samples will lie in the domain of the training distribution.

\begin{figure}[ht!]
  \centering
  \includegraphics[width=0.6\linewidth]{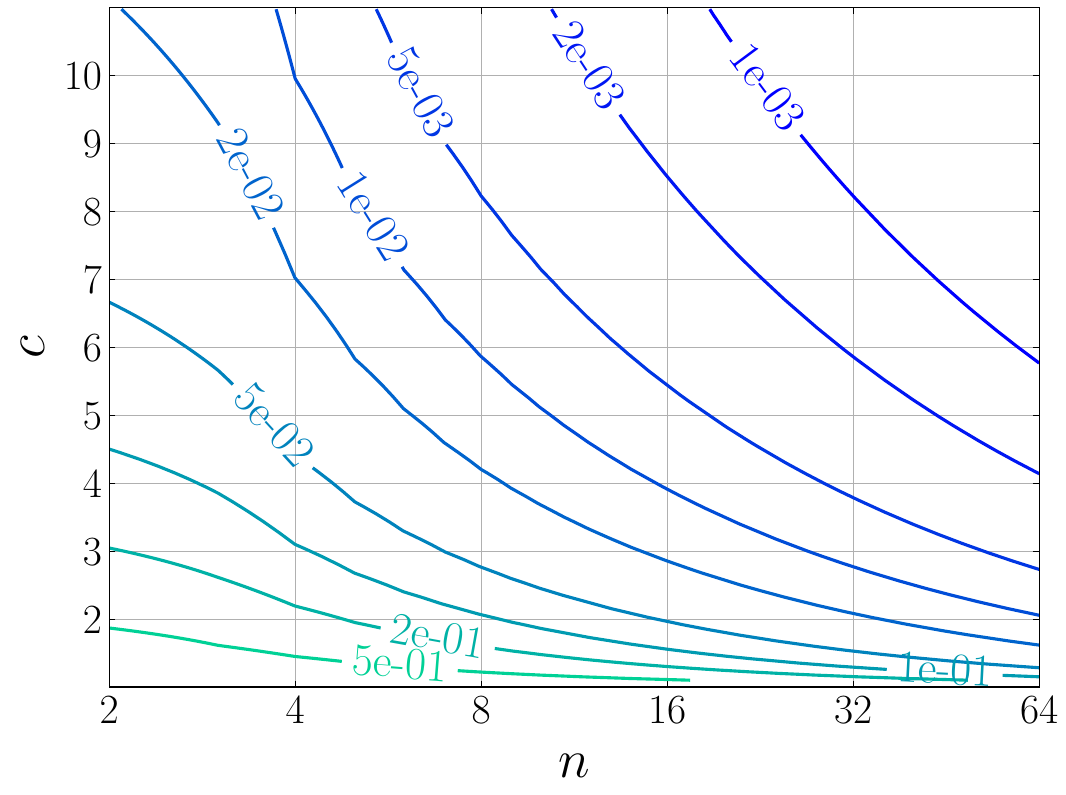}
  \caption{Contour plot of $\mathbb{P}_{x\sim\mathcal{D}_\text{test}(\mathcal{X})}(x \in \text{supp}(\mathcal{D}_\text{train}(\mathcal{X})))$, i.e., the probability (upper bound in \Cref{eq:ood_prob1} and \Cref{eq:ood_prob2}) that a randomly sampled test instance $x \in \mathbb{R}^n$ lies in the domain of the training distribution.}
  \label{fig:ood_prod}
\end{figure}

For $n \in \{2,4,8,16,32\}$ and $c \in \{1,2,\ldots,10\}$ which we consider in our experiments, \Cref{fig:ood_prod} shows a contour plot of the probability upper bound \Cref{eq:ood_prob1} and \Cref{eq:ood_prob2}. This probability is sufficiently small such that the majority of test instances in our test data does not belong to the domain of the training distribution.

For small $n$ and $c$, to determine the fraction of sampled test instances that will be within the domain of the training distribution, it is more informative to directly invoke the additive Chernoff bound with $p_{\mathsf{in}}$. Let $N$ denote the total number of test instances that we sample, and further let $N_{\mathsf{in}}$ denote the number of sampled instances that lie in the domain of the training distribution. Then by the additive Chernoff bound we have that
\begin{equation}
\label{eq:ood_prod_chernoff}
  \mathbb{P}(N_{\mathsf{in}} \ge N(p_{\mathsf{in}}+\epsilon)) \le \exp(-2N\epsilon^2).
\end{equation}
For example, suppose that we sample $N=1000$ test instances from the test distribution. Suppose that we generate the test data using list length $n=2$ and OOD scale factor $c=2$. Then in this case $p_{\mathsf{in}} \le 0.4375$. Take $\epsilon = 0.0625$. Then (\ref{eq:ood_prod_chernoff}) says that with probability at least 0.9995, at least $N/2$ samples do not lie in the domain of the training distribution. For another example with slightly larger $n$ and $c$, suppose that we generate the test data using list length $n=8$ and OOD scale factor $c=10$. Then $p_{\mathsf{in}} \le 0.0034$. Take $\epsilon = 0.0466$. Then (\ref{eq:ood_prod_chernoff}) says that with probability at least 0.98, more than 95\% of test instances do not lie in the domain of the train distribution.

\section{Potential reasons for failure in self-attention for out-of-distribution}
\label{app:failure}
In this section, we discuss potential reasons for the shortcomings in out-of-distribution generalization of standard Transformers for algorithmic tasks. While it is more straightforward to elicit reasons for the success of positional attention -- motivated by the \emph{algorithmic alignment} \citep{Xu2020What} between positional Transformers and parallel computational models -- it is considerably more challenging to pinpoint the causes of failure in standard Transformers.

Firstly, Transformers can simulate parallel algorithms, as demonstrated by \cite{pmlr-v235-sanford24a}. Intuitively, a single layer of self-attention should be more powerful than positional attention, as it leverages attention beyond positional encodings and allows for a more flexible structure in response to input variations. However, as discussed in \Cref{sec:expressivity}, executing parallel algorithms does not require using anything beyond positional information in attention.

Assuming that standard Transformers should adopt positional information to effectively execute parallel algorithms, the operations required by standard Transformers become increasingly difficult than positional Transformers for two main reasons:
\begin{enumerate}
\item Self-attention layers must learn to ignore input values and exploit positional information.
\item Transformer layers must preserve positional encodings for subsequent layers.
\end{enumerate}

Namely, these desirable properties of positional Transformers present two significant challenges for standard Transformers. The first challenge arises naturally from the differences between standard and positional attention mechanisms. The second challenge highlights the compositional structure of attention layers, which can be detrimental during training. Specifically, the operations performed by each attention and MLP layer can degrade the inputs of subsequent layers.

This issue is further emphasized in \Cref{rem:softmax}, where we state that while positional Transformers can represent any softmax pattern at any layer, standard Transformers may fail to do so due to potential degradation of the attention inputs. Although residual connections can mitigate this issue by preserving input information, they must ensure that no overlaps hinder the use of positional encodings in subsequent layers. Moreover, this problem compounds across layers, making training more difficult as errors in earlier layers adversely affect subsequent computations.

 Nevertheless, these remain speculative reasons for the observed failure of standard Transformers. Determining the exact causes and the difficulty of achieving the two aforementioned goals through training requires a thorough analysis of the training dynamics, which is inherently challenging. Future in-depth work within the mechanistic interpretability framework \citep{nanda2023progress} can potentially shed light on these issues by inspecting network parameters at convergence, thereby uncovering the underlying reasons for the failure of standard Transformers.

 Along this direction, we present some empirical evidence, in Figure~\ref{fig:attn_weights_1} and Figure~\ref{fig:attn_weights_2}, that self-attention layers in a trained Transformer model can be highly sensitive to input values. In particular, the attention weights change dramatically when the input values of the test data do not necessarily lie in the domain of the training data. This suggests that self-attention potentially overfits the training data and, therefore, offers a plausible explanation for why the standard Transformers exhibit such poor value generalization in our experiments. 

\begin{figure}[ht]
  \centering
  \includegraphics[width=0.9\linewidth]{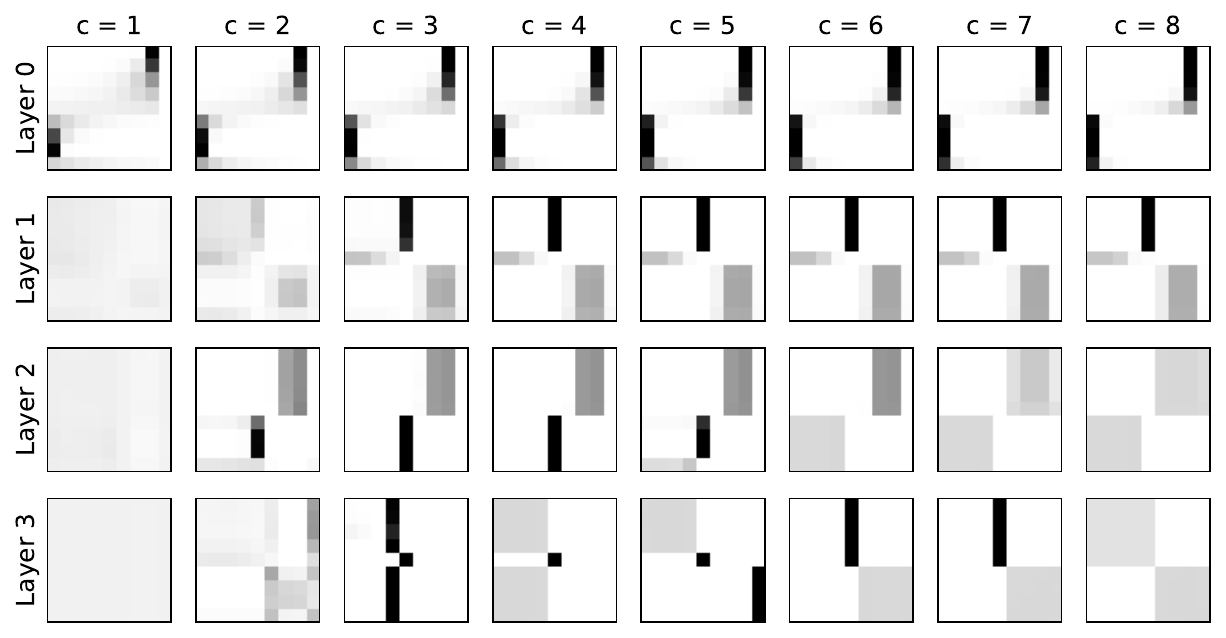}
  \caption{Visualization of learned attention weights in the standard Transformer model trained to solve the sorting task in our experiments. The input list to the model is $cX$ where $X = [1.75,1.25,0.75,0.25,-0.25,-0.75,-1.25,-1.75]$ and $c = 1,2,\ldots,8$ is a scaling factor. The model is trained on data whose input values range from -2 to 2. Therefore $c=1$ gives in-distribution data and larger $c$ yields OOD data. For each layer in the architecture, we plot 1 of the 2 attention heads for illustration purposes. The trend for the other head is similar. Observe that the attention weights change dramatically as we increase the scaling factor of input values, with deeper layers suffering from more radical changes in the attention pattern under even a small change in the scale (e.g. going from $c=1$ to $c=2$). This behavior potentially explains why the standard Transformer model performs poorly on OOD test data in our experiments.}
  \label{fig:attn_weights_1}
\end{figure}

\begin{figure}[h]
  \centering
  \includegraphics[width=0.9\linewidth]{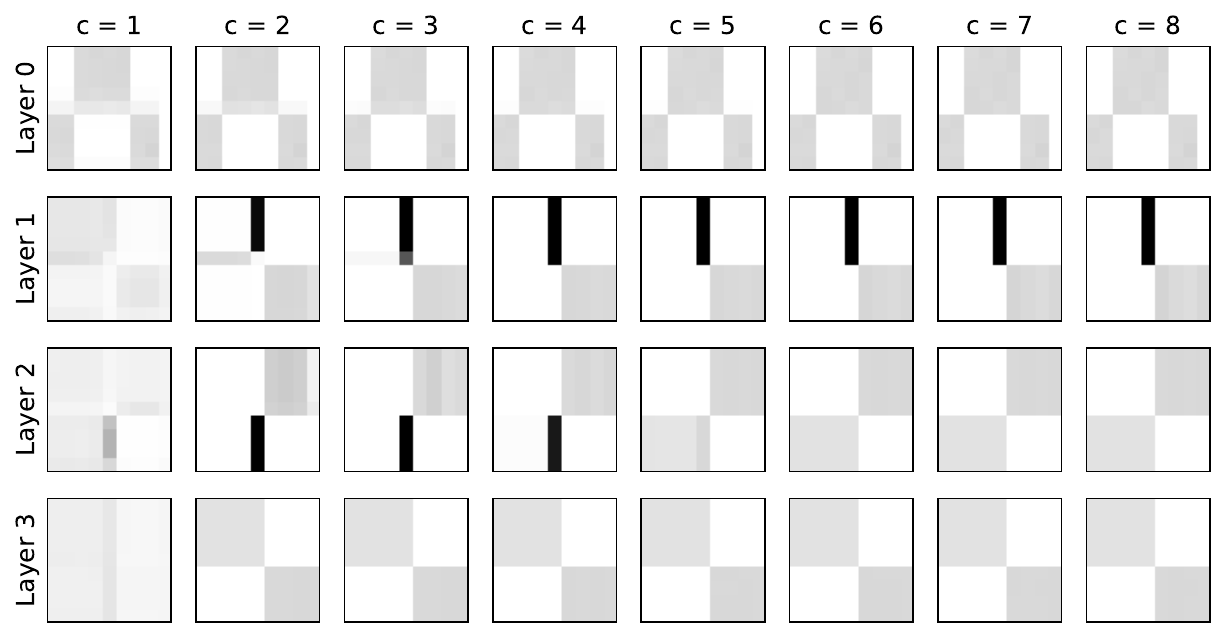}
  \caption{Another visualization of learned attention weights in the standard Transformer model. We use the same setting as described in Figure~\ref{fig:attn_weights_1}, except that the input list is $cX$ where $X = [2,2,-2,-2,-2,-2,2,2]$. Again, we observe that the attention weights are highly sensitive to the scaling factor $c$, especially those at deeper layers.}
  \label{fig:attn_weights_2}
\end{figure}

\clearpage
\section{Informal Discussion on Out-of-distribution Generalization Bounds and future work}
\label{app:ood}
The topic of OOD generalization has been studied extensively from a theoretical perspective. Researchers often focus on bounding the risk on the test distribution using a mixture of terms that go to zero as the number of samples increases, plus a divergence term that does not necessarily go to zero depending on the distributions and the hypothesis class. For an extensive survey, we refer the reader to \cite{redko2022surveydomainadaptationtheory}. Although, in general, such bounds offer valuable intuition, they might not be tight for our particular setting. In particular, we examine a popular type of bound found in \cite{35391} which can be large even if the difference in the support of the train and test distributions is the smallest it can be. Note that there are more types of bounds in \cite{redko2022surveydomainadaptationtheory} than the one found in \cite{35391}. Although we have not conducted an in-depth analysis of all cases, we note that all of them depend in a worst-case on the hypothesis class. We believe that to improve upon such generic bounds, one must consider the dynamics of the training procedure for deep positional attention architectures. We find this topic extremely interesting, but we leave it for future work, as it is highly non-trivial.

In what follows, we use a popular example of one of these bounds \cite{35391} and illustrate why it is not tight for a simple task of interest. Briefly, the main issue with this particular OOD bound is that it depends in a worst-case manner on the hypothesis class. We demonstrate this issue using the task of computing the minimum over an array of length $n$. We assume that $n$ is even. For simplicity, we do not work with the cumulative version of the minimum problem, as we did in the main paper. Therefore, the ground truth is simply the minimum of the input array. $\mathcal{D}_\text{train}$ is the train distribution over arrays of length $n$, where each component of the array is sampled independently and uniformly at random from integers in the range $0$ to $L_{\mathcal{D}\text{train}}$, where $L_{\mathcal{D}\text{train}}$ is a constant. $\mathcal{D}_\text{test}$ is the test distribution over arrays of length $n$, where each component of the array is sampled independently and uniformly at random from integers in the range $0$ to $L_{\mathcal{D}_\text{train}} + z$, where $z \geq 0$ is a constant integer. We use an equal number of samples from the train and test distributions, denoted by $m$. The loss function is $\ell(h(x), y) = |h(x) - y|$, where $h$ is a hypothesis, $x$ is an input array of length $n$, and $y = \text{min}(x)$, which is the minimum function over the array $x$. 

The hypothesis class $H$ is the architecture in \Cref{eq:arch} with $\log_2 n$ layers, $2$ heads per layer, and $W_O$ and $W_V$ as the identity matrices for all layers. For positional vectors in general position with dimension $n$, Lemma \ref{lem:hardmax} implies that there exist key and query matrices of size $n \times n$ that can represent any attention pattern at each layer. The MLP at each layer consists of $2$ layers with a hidden dimension equal to $4$. We use the ReLU activation function in all MLPs. This allows the MLP to represent the minimum and maximum functions on two input values exactly. This is because the minimum and maximum functions can be written using ReLUs and linear operations:
\begin{align*}
    \mbox{min}(x_1, x_2) & = \frac{1}{2} (\mbox{ReLU}(x_1+x_2) - \mbox{ReLU}(-x_1-x_2) 
                        - \mbox{ReLU}(x_1-x_2) - \mbox{ReLU}(x_2-x_1))
\end{align*}
and 
\begin{align*}
    \mbox{max}(x_1, x_2) & = \frac{1}{2} (\mbox{ReLU}(x_1+x_2) - \mbox{ReLU}(-x_1-x_2) 
                         + \mbox{ReLU}(x_1-x_2) + \mbox{ReLU}(x_2-x_1)).
\end{align*}
Note that the MLP's ability to represent the minimum function for two inputs exactly is also the reason it can represent the maximum function. In other words, the exact representation of the minimum function comes with the consequence that the MLP can also represent the maximum function for two inputs. This observation is crucial later when we show that an existing popular bound from \cite{35391} is not tight for this particular task.

Furthermore, we assume that $|h(x)|$ is constant, which further implies that the magnitude of the loss is bounded above by a constant. Observe that this hypothesis class can represent a binary tree reduction algorithm for the minimum and maximum functions. This is possible because positional attention can represent any attention pattern, and the MLPs can represent the minimum and maximum functions for two inputs exactly. Specifically, the first layer of the positional attention architecture can represent the connections between layers $0$ (leaf nodes) and $1$ in the binary tree computational graph, the second layer can represent the connections between layers $1$ and $2$, and so on, up to the $\log_2 n$-th layer. The MLPs are used locally at each node of the binary tree to compute the minimum between two input values. Therefore, the minimum and maximum functions over an array of $n$ elements are in the hypothesis class.

Let us now discuss one of the most popular OOD generalization bound results for regression. We will use the third case of Theorem 8 in \cite{35391}, which states the following.
\begin{theorem}[Theorem 8 in \cite{35391}, repurposed for the minimum function task]\label{thm:app:ood_bound}
    Assume that the loss function $\ell$ is symmetric, it obeys the triangle inequality, and it is bounded above by a constant. If the minimum function is in the hypothesis class $H$, then, for any hypothesis $h\in H$, the following holds:
    \begin{align*}
        R_{\mathcal{D}_\text{test}}(h) \le R_{\mathcal{D}_\text{train}}(h) + \mbox{disc}_{\ell}(\mathcal{D}_\text{test},\mathcal{D}_\text{train})
    \end{align*}
    where 
    $$
    \mbox{disc}_{\ell}(\mathcal{D}_\text{test},\mathcal{D}_\text{train}) = \max_{h, h' \in H} \ \left| R_{\mathcal{D}_\text{test}}(h,h') - R_{\mathcal{D}_\text{train}}(h,h')\right|
    $$
    and
    $$
        R_{\mathcal{D}_\text{test}}(h,h') = \mathbb{E}_{(x,y)\sim \mathcal{D}_\text{test}}[\ell(h(x), h'(x))].
    $$
    and $R_{\mathcal{D}_\text{train}}(h,h')$ is defined similarly.
\end{theorem}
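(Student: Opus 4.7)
The plan is to exploit the hypothesis ``the minimum function is in $H$'' to turn the label $y$ appearing in $R_{\mathcal{D}}(h)$ into the output of some $h^{\star}\in H$, at which point every risk in sight has the form $R_{\mathcal{D}}(h,h')$ for two hypotheses in $H$, and the discrepancy term directly absorbs the train/test gap. Concretely, I would let $h^{\star}\in H$ be the hypothesis computing $\min(\cdot)$; since the task has a deterministic ground-truth labeling, every $(x,y)$ drawn from either $\mathcal{D}_\text{train}$ or $\mathcal{D}_\text{test}$ satisfies $y=h^{\star}(x)$ almost surely, so the pointwise loss obeys $\ell(h(x),y)=\ell(h(x),h^{\star}(x))$ on the support of each distribution.

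The proof then proceeds in three short steps. First, taking expectations under $\mathcal{D}_\text{test}$ and $\mathcal{D}_\text{train}$ of the identity above gives $R_{\mathcal{D}_\text{test}}(h)=R_{\mathcal{D}_\text{test}}(h,h^{\star})$ and $R_{\mathcal{D}_\text{train}}(h)=R_{\mathcal{D}_\text{train}}(h,h^{\star})$. Second, I would add and subtract $R_{\mathcal{D}_\text{train}}(h,h^{\star})$ to write
\begin{equation*}
R_{\mathcal{D}_\text{test}}(h)=R_{\mathcal{D}_\text{train}}(h,h^{\star})+\bigl(R_{\mathcal{D}_\text{test}}(h,h^{\star})-R_{\mathcal{D}_\text{train}}(h,h^{\star})\bigr).
\end{equation*}
Third, since $h,h^{\star}\in H$, the difference in parentheses is at most its absolute value, which by definition of $\mathrm{disc}_{\ell}$ is at most $\mathrm{disc}_{\ell}(\mathcal{D}_\text{test},\mathcal{D}_\text{train})$; combining with the first identity yields the claimed inequality.

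Of the three listed structural assumptions on $\ell$, only boundedness is essential here, in order to guarantee that the expectations defining $R_{\mathcal{D}}(h,h')$ and the maximum defining $\mathrm{disc}_{\ell}$ are finite. Symmetry and the triangle inequality are inherited from the general statement of Theorem~8 of \cite{35391} and would be needed if the ground truth $f$ were only approximated by some $h^{\star}\in H$ (one would then split $\ell(h(x),f(x))\le\ell(h(x),h^{\star}(x))+\ell(h^{\star}(x),f(x))$ using the triangle inequality, with symmetry used to align the order of arguments with the definition of $\mathrm{disc}_{\ell}$). In the present repurposing the equality $h^{\star}=f$ makes this step vacuous, so I would simply remark that those assumptions are not actively used.

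There is no real obstacle in the derivation itself; the substantive content of the result in this paper is not the proof but the observation that, in the construction preceding the theorem, the hypothesis class contains both $\min$ and $\max$, which forces $\mathrm{disc}_{\ell}(\mathcal{D}_\text{test},\mathcal{D}_\text{train})$ to be large and thus renders the bound loose. The only care I would take in the write-up is to verify that the reduction $y=h^{\star}(x)$ holds on the supports of both distributions (which follows from the value-generalization setup in \Cref{sec:prelim}, where $\mathcal{D}_\text{train}$ and $\mathcal{D}_\text{test}$ share the same deterministic labeling function) and to state explicitly that the step $R_{\mathcal{D}_\text{test}}(h,h^{\star})-R_{\mathcal{D}_\text{train}}(h,h^{\star})\le\mathrm{disc}_{\ell}(\mathcal{D}_\text{test},\mathcal{D}_\text{train})$ uses $h,h^{\star}\in H$ so that the pair $(h,h^{\star})$ is admissible in the max defining the discrepancy.
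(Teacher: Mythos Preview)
The paper does not supply its own proof of this statement: it is quoted verbatim as a known result from \cite{35391} and then used as a black box in the discussion that follows, so there is no in-paper argument to compare against. Your derivation is the standard one behind that cited theorem (realizability of the labeling function in $H$ turns $R_{\mathcal{D}}(h)$ into $R_{\mathcal{D}}(h,h^{\star})$, after which the train/test gap is a single instance of the maximization defining $\mathrm{disc}_{\ell}$), and it is correct as written; your remarks on which of the three assumptions on $\ell$ are actually active in the realizable case are also accurate.
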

The above theorem states that the difference in risk between the test and train distributions is bounded only by the discrepancy term between the two distributions. It is important to note that the discrepancy term depends on the hypothesis class and measures the worst-case difference in the train and test risks within that class. The fact that the discrepancy considers the worst-case difference is why this bound is not tight for our task of computing the minimum function.

Let us now focus on the discrepancy term. Corollary 7 in \cite{35391} states that 
$$
\mbox{disc}_{\ell}(\mathcal{D}_\text{test},\mathcal{D}_\text{train}) \le \mbox{disc}_{\ell}(\hat{\mathcal{D}}_\text{test},\hat{\mathcal{D}}_\text{train}) + 4 (\hat{\mathcal{R}}_{\mathcal{S}_\text{test}}(H) + \hat{\mathcal{R}}_{\mathcal{S}_\text{train}}(H)) + \mathcal{O}\left(\sqrt{\frac{1}{m}} \right),
$$
where $\hat{\mathcal{D}}_\text{test}$ and $\hat{\mathcal{D}}_\text{train}$ are the empirical versions of the test and train distributions, repsectively. We will use the common assumption that these empirical distributions are uniform over the samples. $\mathcal{S}_\text{test}$ and $\mathcal{S}_\text{train}$ are the sample sets for the train and test distributions, respectively. Moreover, $\hat{\mathcal{R}}_{\mathcal{S}_\text{test}}(H)$ and $\hat{\mathcal{R}}_{\mathcal{S}_\text{train}}(H)$ are the empirical Rademacher complexities for the train and test distributions, respectively. It is well-known that the part of the bound corresponding to the empirical Rademacher complexities goes to zero as the number of samples increases. The same holds for the square-root term in the bound. Therefore, the only term left to understand is the discrepancy between the empirical distributions. Let us try to understand how this term behaves. Its definition is:
$$
\mbox{disc}_{\ell}(\hat{\mathcal{D}}_\text{test},\hat{\mathcal{D}}_\text{train}) = \max_{h, h' \in H} \left| \frac{1}{m} \sum_{x \in \mathcal{S}_\text{test}} \ell(h(x), h'(x)) - \frac{1}{m} \sum_{x \in \mathcal{S}_\text{train}} \ell(h(x), h'(x))\right|.
$$
A lower bound of $\mbox{disc}_{\ell}(\hat{\mathcal{D}}_\text{test},\hat{\mathcal{D}}_\text{train})$ is given by setting $h$ to be the minimum function and $h'$ to the maximum function:
$$
\mbox{disc}_{\ell}(\hat{\mathcal{D}}_\text{test},\hat{\mathcal{D}}_\text{train}) \ge \left| \frac{1}{m} \sum_{x \in \mathcal{S}_\text{test}} (\max(x) - \min(x)) - \frac{1}{m} \sum_{x \in \mathcal{S}_\text{train}} (\max(x) - \min(x))\right|
$$
We claim that for polynomial number of samples $m$, e.g., $m=n^c$, where $c$ is a positive integer, there exists $n_0$, such that for all  $n \ge n_0$ we have that $\min(x)=0$ and $\max(x) = L_{\mathcal{D}_\text{train}}$ for all $x \in \mathcal{S}_\text{train}$, and $\min(x)=0$ and $\max(x) = L_{\mathcal{D}_\text{train}} + z$ for all $x \in \mathcal{S}_\text{test}$ with probability at least $0.8$. The proof of this claim is trivial and we provide it below. For now, let us discuss the implications of this claim. We have that 
$\mbox{disc}_{\ell}(\hat{\mathcal{D}}_\text{test},\hat{\mathcal{D}}_\text{train}) \ge 
L_{\mathcal{D}_\text{train}} + z - L_{\mathcal{D}_\text{train}} = z$ with probability at least $0.8$. Therefore, even if $z$ is the smallest it can be such that there is a difference between the train and test distributions in this particular setting, i.e., $z=1$, then the empirical discrepancy is going to be at least $1$. This means that the upper bound of Theorem \ref{thm:app:ood_bound} is at least one, and it is not going to zero as the number of samples increases. This is because the discrepancy definition considers the worst-case scenario without considering the training procedure. In practice, the training procedure may help to discover a hypothesis that is close to the minimum function since the minimum function is part of the hypothesis class. If the hypothesis discovered by the training procedure is close enough to the minimum function, the OOD generalization error may be much smaller than 1. Therefore, depending on the learning task and the hypothesis class, the bound in Theorem \ref{thm:app:ood_bound} can be loose.

Consider the following example, $L_{\mathcal{D}_\text{train}} = 3$ and $z=1$. Therefore, the largest value in the test distribution is $4$. For $m$ and $n$ as noted above, the bound implies that the loss might be up to $1$ for any hypothesis $h$. This further implies that for any hypothesis $h$ the relative error might be up to $25\%$ with probability at least $0.8$, despite the fact that the hypothesis class includes the true function and the training procedure could converge to a good approximation of it.

Let us prove the above probability claim. For $x\in \mathcal{S}_\text{train}$ we have 
\begin{align*}
    \mathbb{P}(x_i \neq 0 \, \text{ for all } i \in [n]) & = \prod_{j=1}^n \mathbb{P}(x_j \neq 0) \\ 
    & = \prod_{j=1}^n \mathbb{P}(x_j \in \{1,2,\dots, L_{\mathcal{D}_\text{train}}\}) \\ 
    & = \prod_{j=1}^n \frac{L_{\mathcal{D}_\text{train}}}{L_{\mathcal{D}_\text{train}}+1} \\ 
    & = \left( \frac{L_{\mathcal{D}_\text{train}}}{L_{\mathcal{D}_\text{train}}+1} \right)^n.
\end{align*}
Similarly, we have that 
$$
\mathbb{P}(x_i \neq L_{\mathcal{D}_\text{train}} \mbox{ for all } i \in [n]) = \left( \frac{L_{\mathcal{D}_\text{train}}}{L_{\mathcal{D}_\text{train}}+1} \right)^n.
$$
Furthermore, we have that 
\begin{align*}
    \mathbb{P}(\exists\, i,j\in [n] : x_i=0 \mbox{ and } x_j = L_{\mathcal{D}_\text{train}}) & = 1 - \mathbb{P}(x_i \neq 0 \,\forall\, i\in [n] \mbox{ or } x_i \neq L_{\mathcal{D}_\text{train}} \, \forall\, i\in [n]) \\ 
    & \ge 1 - \mathbb{P}(x_i \neq 0 \,\forall\, i\in[n]) - \mathbb{P}(x_i \neq L_{\mathcal{D}_\text{train}}\,\forall\, i\in [n]) \\ 
    & = 1 - 2\left( \frac{L_{\mathcal{D}_\text{train}}}{L_{\mathcal{D}_\text{train}}+1} \right)^n.
\end{align*}
Therefore, we conclude that 
$$
\mathbb{P}(\mbox{all samples } x\in \mathcal{S}_\text{train}  \mbox{ have at least one } 0 \mbox{ or } L_{\mathcal{D}_\text{train}}) \ge \left(1 - 2\left( \frac{L_{\mathcal{D}_\text{train}}}{L_{\mathcal{D}_\text{train}}+1} \right)^n\right)^m.
$$
and, similarly, we conclude that
$$
\mathbb{P}(\mbox{all samples } x\in \mathcal{S}_\text{test}  \mbox{ have at least one } 0 \mbox{ or } L_{\mathcal{D}_\text{train}}+z) \ge \left(1 - 2\left( \frac{L_{\mathcal{D}_\text{train}} + z}{L_{\mathcal{D}_\text{train}}+z+1} \right)^n\right)^m.
$$

For a polynomial number of samples $m$, e.g., $m=n^c$, where $c$ is a positive integer, there exists some $n_0 \in \mathbb{N}$, such that for all $n \ge n_0$ we have that the latter two probabilities are at least $0.9$ (since for $m=n^c$ both lower bounds tend to $1$ as $n$ tends to infinity). Therefore, our claim about the minimum and maximum over the sampled arrays holds with probability at least $0.81$.

\end{document}